\newtheorem{theorem}{Theorem}
\newtheorem{corollary}{Corollary}
\newtheorem{lemma}{Lemma}
\newtheorem{constraint}{Constraint}
\newtheorem{definition}{Definition}
\newtheorem{assumption}{Assumption}
\newtheorem{proposition}{Proposition}
\newtheorem{problem}{Problem}
\newtheorem{remark}{Remark}
\def\eqref#1{equation~\ref{#1}}
\def\1{\bm{1}}
\DeclareMathAlphabet{\mathsfit}{\encodingdefault}{\sfdefault}{m}{sl}
\SetMathAlphabet{\mathsfit}{bold}{\encodingdefault}{\sfdefault}{bx}{n}
\newcommand{\Cov}{\mathrm{Cov}}
\newcommand{\er}{\operatorname{Gerr}}
\newcommand{\gen}{\operatorname{Gen}}
\newcommand{\tr}{\operatorname{tr}}
\newcommand{\bS}{\boldsymbol{S}}
\newcommand{\bJ}{\boldsymbol{J}}
\newcommand{\bV}{\boldsymbol{V}}
\newcommand{\bW}{\boldsymbol{W}}
\newcommand{\bz}{\boldsymbol{z}}
\newcommand{\bX}{\boldsymbol{X}}
\newcommand{\bY}{\boldsymbol{Y}}
\newcommand{\bZ}{\boldsymbol{Z}}
\newcommand{\bG}{\boldsymbol{G}}
\newcommand{\bSig}{\boldsymbol{\Sigma}}
\newcommand{\Diag}{\operatorname{Diag}}
\title{Optimizing Information-theoretical Generalization Bounds via  Anisotropic Noise in SGLD}
\author{%
  Bohan Wang\\
  University of Science \& Technology of China
\normalsize
 \\
Microsoft Research Asia 
\And
Huishuai Zhang
\\
Microsoft Research Asia
\And
Jieyu Zhang
\\
University of Washington
\\
Microsoft Research Asia
\And
Qi Meng
\\
Microsoft Research Asia
\And
Wei Chen\thanks{Corresponding Author}
\\
Microsoft Research Asia
\And
Tie-Yan Liu
\\
Microsoft Research Asia
}
\begin{document}

\maketitle

\begin{abstract}
Recently, the information-theoretical framework has been proven to be able to obtain non-vacuous generalization bounds for large models trained by Stochastic Gradient Langevin Dynamics (SGLD) with isotropic noise.  In this paper, we optimize the information-theoretical generalization bound by manipulating the noise structure in SGLD. We  prove that with constraint to guarantee low empirical risk, the optimal noise covariance is the square root of  the expected gradient covariance if  both the prior and the posterior are jointly optimized. This validates that the optimal noise is quite close to  the empirical gradient covariance.  Technically, we develop a new information-theoretical bound that enables such an optimization analysis. We then apply matrix analysis to derive the form of optimal noise covariance. Presented constraint and  results are validated by the empirical observations.

\end{abstract}

\section{Introduction}
Generalization ability is one of the core questions in learning theory \cite{bousquet2003introduction}, but remains unclear for deep learning models \cite{jiang2019fantastic,zhang2017understanding}. Existing generalization bounds based on the  capacity control become vacuous for practical deep learning models due to the over-parameterization property \cite{arora2018stronger,neyshabur2018pac,zhou2018non}.

From the information theoretical perspective, recent works~\cite{russo2016controlling,xu2017information} bound the generalization error by the mutual information between the dataset and the learned parameters.
This result reveals that good generalization occurs when the learned parameters do not depend on a specific dataset too much, which is intuitively reasonable and closely related with the idea of algorithm stability \cite{charles2018stability,hardt2016train,kuzborskij2018data,mou2018generalization,lei2021sharper} and differential privacy \cite{dwork2015generalization,feldman2018calibrating}. Meanwhile, based on information-theoretic metrics, one can analyze  general classes of updates and models, e.g., stochastic iterative algorithms for non-convex objectives, hence applicable to deep learning. It has been shown that the information theoretical bounds are non-vacuous and  closely related with the real generalization error even in deep learning \cite{haghifam2020sharpened,neyshabur2018pac, dziugaite2017computing,zhou2018non}.

Notably, the information theoretical bound is realized in \cite{pensia2018generalization}  via decomposing the mutual information  across iterations. 
This framework can perform a step-wise analysis of Stochastic Gradient Langevin Dynamics (SGLD) \cite{welling2011bayesian,raginsky2017non}, by evaluating the  
mutual information conditional on the previous learned parameters at each step. We note that the noise added in SGLD is critical for both the mutual information evaluation and the empirical risk minimization.  Most existing work focus on SGLD with constant and isotropic noise covariance \cite{mandt2017stochastic,negrea2019information,haghifam2020sharpened} due to the technical difficulty. However, it is observed that the test accuracy of SGLD with isotropic noise has a considerable gap compared to that of the widely-used Stochastic Gradient Descent (SGD) \cite{zhu2018anisotropic}. This empirical gap motivates us to consider the following question: 
\begin{center}
\emph{Can we find the optimal noise added in SGLD in terms of generalization?}
\end{center}

An affirmative answer will lead to an algorithm imitating SGD better, which helps us to better understand the generalization behavior of SGD.

Specifically, we propose to optimize the structure of the noise in SGLD such that the generalization bound is minimized while a low empirical risk is guaranteed. To this end, we first show that the trace of the noise covariance in SGLD is  a valid constraint that governs  the empirical risk behavior  both theoretically and empirically. Then we devise a new  information theoretical generalization bound that are parallel to those bounds in \cite{negrea2019information}, but facilitate the derivation of the optimal noise. 
With these technical preparations, we  prove that when jointly optimizing both the prior and the posterior, the optimal noise covariance is the square root of  the expected gradient covariance, i.e., their eigenvectors are the same and the corresponding eigenvalues of the former are the square root of the latter,  and the optimal prior recovers the prior in \cite{negrea2019information}.    This indicates the optimal noise covariance of SGLD would  be quite close to the empirical gradient covariance, i.e., the noise covariance of SGD, because of the concentration of measure. 

Our result lends support to the belief that the noise introduced by Stochastic Gradient Descent (SGD) is superior to the isotropic noise, which has been widely observed \cite{keskar2016large,xie2020diffusion,zhu2018anisotropic,zhang2017understanding}. As an illustrative example, we plot the generalization errors of SGD, SGLD with the isotropic noise and SGLD with the optimal noise  in Figure \ref{fig:compare-with-sgd}, where their training curves behave almost the same (do not show here). We can see the optimal noise captures the behavior of SGD  much better than the isotropic noise. 
\begin{wrapfigure}{r}{4.5cm}
\vspace{-15pt}
\includegraphics[width=4.5cm]{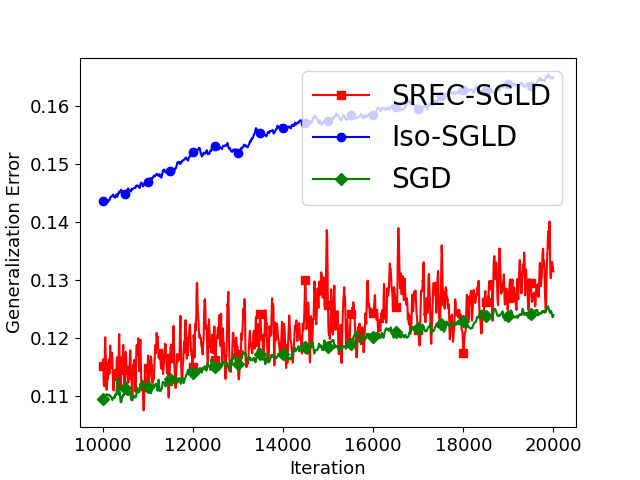}
\vspace{-15pt}
\caption{\small Generalization errors of SGD, SGLD with the isotropic noise (Iso-SGLD) and SGLD with the optimal noise (SREC-SGLD).}
\label{fig:compare-with-sgd}
\vspace{-45pt}
\end{wrapfigure}

Specifically, our contribution can be summarized as follows:
\begin{adjustwidth}{1em}{0em}
 1. We formulate a problem of finding the optimal noise covariance by optimizing an  information-theoretical bound;\\
2. We develop a new information-theoretical bound to facilitate the analysis of the above optimization problem;\\
 3. We obtain the optimal structure of the noise covariances, and demonstrate the similarity to empirical/expected gradient covariance;
\end{adjustwidth}



\subsection{Related Works}
\textbf{Information-theoretical bounds.} Recently, researchers \cite{xu2017information,russo2016controlling,pensia2018generalization} propose to  bound the generalization error by the mutual information between output hypothesis and input samples.  Negrea et al. \cite{negrea2019information} further tighten the bound by designing a data-dependent prior. Following \cite{negrea2019information}, Haghifam et al. \cite{haghifam2020sharpened} obtain comparable results through conditional mutual information \cite{steinke2020reasoning}. Other related work \cite{asadi2018chaining,issa2019strengthened,bu2020tightening,hafez2020conditioning} tightens the information theoretical generalization bounds from different perspectives. There are also high probability generalization bound \cite{mou2018generalization, neyshabur2018pac,li2019generalization} obtained by combining information theory \cite{cover2001elements} and PAC Bayesian framework \cite{parrado2012pac,welling2011bayesian}. 



\textbf{Effects of the noise covariance.} SGD achieves excellent performance in terms of generalization error in deep learning. In contrast, one recent work \cite{zhu2018anisotropic} demonstrates empirically that even with well-tuned scaling,  isotropic SGLD still achieves much worse generalization error than mini-batch SGD. They  obtain an optimal covariance matrix (low-rank approximation of Hessian) in terms of \emph{escaping efficiency} at local minima. Furthermore, \cite{wen2019empirical} show empirically large-batch SGD with diagonal Fisher Gaussian noise can recover similar validation performance as small-batch SGD. However, both the noise covariance investigated by \cite{zhu2018anisotropic} and \cite{wen2019empirical} has gap with that of SGD. It has also been proven \cite{meng2020dynamic} that the stationary distribution of state-dependent SGLD with the empirical loss approximated by quadratic function obeys the power law, and has a better escaping efficiency than  SGLD with the isotropic noise. 





\section{Preliminaries} \label{sec:preliminary}

\textbf{Notations.} Here we briefly introduce the notations which will be used throughout this paper.
\begin{itemize}
    \item (Set operation) For a positive integer $N$, we use $[N]$ to denote  the index set $\{1,2,\cdots, N\}$, and use $\bV_{[T]}$ to denote the set  $\{\bV_t\}_{t=1}^T$.  
    For a finite set $\bS=\{\bz_1,\cdots,\bz_N\}$, $\vert \bS\vert$ is the cardinality of $\bS$, and ${\small \bS_{J}\overset{\triangle}{=} \{\bz_i\}_{i\in J}}$ is a subset of $\bS$ with indices in set $J\subset [N]$.
    \item (Probability) $\bz\sim \bS$ denotes that $\bz$ is uniformly sampled from the set $\bS$. For two random variables $\boldsymbol{X}$ and $\boldsymbol{Y}$, we denote the conditional distribution of $\boldsymbol{X}$ given  $\boldsymbol{Y}$ as $\mathbb{P}^{\boldsymbol{Y}}(\boldsymbol{X})$ and denote the conditional expectation of $\boldsymbol{X}$ given $\boldsymbol{Y}$ as $\mathbb{E}^{\boldsymbol{Y}}(\boldsymbol{X})$.
    
    \item (Matrix)  We use $\mathbb{I}_{d\times d}$  for the $d\times d$ identity matrix, abbreviated as $\mathbb{I}$ when dimension is clear. For a differentiable function $f$, we denote the gradient of $f$ at point $\bW$ as $\nabla f(\bW)$. For a positive semi-definite matrix $\boldsymbol{A}\in\mathbb{R}^{d\times d}$, we say $\boldsymbol{B}=\boldsymbol{A}^{\frac{1}{2}}\in\mathbb{R}^{d\times d}$ if $\boldsymbol{B}\cdot \boldsymbol{B}=\boldsymbol{A}$.  
\end{itemize}


\textbf{Supervised Learning. }In this paper, we focus on the supervised learning. It conducts  the empirical risk minimization (ERM) over training data: {\small $    \operatorname{Minimize}_{\bW}  \mathcal{R}_{\bS}(\bW)\overset{\triangle}{=}\frac{1}{\vert \bS \vert} \sum_{\bz\in \bS} \ell (\bz;\bW),$}
where $\bW\in \mathbb{R}^d$ is the parameter of the model, $\bS$ is the training set with each data point i.i.d. sampled from a distribution $\mathcal{D}$, $\ell$ is the (individual) loss function. A \emph{stochastic algorithm} solves the ERM by outputting a  distribution $Q^{\bS}$ of parameter $\bW$. 
The \emph{generalization error} measures the gap  between the population risk and the training risk as
\begin{equation}
    \er\overset{\triangle}{=} \mathbb{E}_{\bS,Q^{\bS}}\left[\mathcal{R}_{\mathcal{D}}(\bW)-\mathcal{R}_{\bS}(\bW)\right]. \label{eq:gerr}
\end{equation}

\textbf{SGD and SGLD. } The update rule of Stochastic Gradient Descent (SGD) at step $t$ is defined as
\begin{equation}
    \bW_{t}\leftarrow\bW_{t-1}-\eta_t\nabla \mathcal{R}_{\bS_{\bV_t}} (\bW_{t-1}), \label{eq:sgd}
\end{equation} 
where $\bV_t$ is sampled uniformly without replacement from $[N]$ with size $b_t$. Given $\bW_{t-1}$, $\bW_t$ is random  with
\begin{small}
\begin{equation*}
\mathbb{E}\left[\bW_{t}|\bW_{t-1} \right]=\bW_{t-1}-\eta_t \nabla \mathcal{R}_{\bS}  \left(\bW_{t-1}\right),  \text{ }   \Cov \left[\bW_{t}|\bW_{t-1} \right]=\frac{N-b_t}{b_t(N-1)}\bSig^{sd}_{\bS,\bW_{t-1}},
\end{equation*}
\end{small}
where  ${\small\bSig^{sd}_{\bS,\bW_{t-1}}\overset{\triangle}{=}\Cov_{\bz\sim \bS}\left[\nabla \mathcal{R}_{\bz}(\bW_{t-1})\right]}$ is the covariance of $\nabla\mathcal{R}_{\bz}$ with $\bz$ single drawn from $\boldsymbol{S}$. The superscript $^{sd}$ means ``single draw''. 
Similarly, we define the population covariance of gradient as $\bSig^{\operatorname{pop}}_{\bW}\overset{\triangle}{=}\Cov_{\bz\sim \mathcal{D}}\left[\nabla \mathcal{R}_{\bz}(\bW)\right]$. We  define the Hessian of $\mathcal{R}_{\bS}(\bW)$ at point $\bW$ as $\mathcal{H}_{\bS,\bW} $.

The Stochastic Gradient Langevine Dynamics (SGLD) is given by 
\begin{equation}
\label{eq:state-dependent SGLD}
    \bW_{t}\leftarrow\bW_{t-1}-\eta_t\nabla \mathcal{R}_{\bS_{\bV_t}} (\bW_{t-1})+\mathcal{N}(\boldsymbol{0},\bSig_t(\bS,\bW_{t-1})),
\end{equation}
where $\bSig_t(\bS,\bW_{t-1})\in \mathbb{R}^{d\times d}$ is a positive semi-definite matrix with dependence on $\bS$ and $\bW_{t-1}$. SGLD (Eq.~\ref{eq:state-dependent SGLD}) with  $\bSig_t(\bS,\bW_{t-1})=c_t \mathbb{I}$ where $c_t$ a positive constant, is called  \emph{isotropic} SGLD, and SGLD with $\bSig_t(\bS,\bW_{t-1})$ dependent on $\bW_{t-1}$ is called \emph{state-dependent} SGLD.

\textbf{Statistics for iterative algorithms. }Both SGD and SGLD are \emph{stochastic iterative algorithms} \cite{gelfand1991recursive}. Specifically, the update rule at step $t$ of a stochastic iterative algorithm $\mathcal{A}$ can be generally characterized as 
    $\bW_t = \mathcal{M}_t (\bW_{t-1}, \bS, \bV_t)$,
where $\bV_t$ is the \emph{auxiliary random variable} at step $t$, e.g., $\bV_t$ in Eq.(\ref{eq:sgd}) and Eq.(\ref{eq:state-dependent SGLD}).  
We use $Q^{\bS,\bV_{[T]}}$ to denote   the joint distribution of $(\bW_t)_{t=0}^T$ conditional on $\{\bS,\bV_{[T]}\}$, $Q_{i:j}^{\bS,\bV_{[T]}}$ to denote the joint distribution of  $(\bW_{t})_{t=i}^j$ conditional on $\{\bS,\bV_{[T]}\}$, and $Q_{j|j-1}^{\bS,\bV_{[T]}}$ to denote the  distribution of $\bW_j$ conditional on $\{\bW_{j-1},\bS,\bV_{[T]}\}$. 

\textbf{Decomposition of KL divergence}. The following Lemma is extensively used throughout this paper.
\begin{lemma}[Proposition 2.6, \cite{negrea2019information}]
\label{lem: decomposition_kl}
Let $Q_{0:T}$ and $P_{0:T}$ are two probability measures on $\mathbb{R}^{d\times (T+1)}$ with $Q_0=P_0$. Then, the KL divergence between $Q_{0:T}$ and $P_{0:T}$ can be decomposed into
\begin{small}
\begin{equation*}
    \operatorname{KL}(Q_{0:T}||P_{0:T})= \sum_{t=1}^{T} \mathbb{E}_{Q_{0:t-1}}\left[\operatorname{KL}\left(Q_{t|[t-1]} \| P_{t|[t-1]}\right)\right].
\end{equation*}
\end{small}
\end{lemma}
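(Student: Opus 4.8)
The plan is to prove the identity by induction on $T$, using the two-variable chain rule for relative entropy as the atomic building block. For $T=1$ the claim is exactly that chain rule: viewing a joint law on $\R^d\times\R^d$ as its first marginal composed with a conditional Markov kernel, one has $\operatorname{KL}(Q_{0:1}\|P_{0:1}) = \operatorname{KL}(Q_0\|P_0) + \E_{Q_0}\!\left[\operatorname{KL}(Q_{1|[0]}\|P_{1|[0]})\right]$, and the first term is $0$ since $Q_0=P_0$. First I would record this two-variable statement carefully, noting that the conditional kernels $Q_{1|[0]}$ and $P_{1|[0]}$ exist and are essentially unique because everything lives on the Polish space $\R^d$ (disintegration theorem), and that it holds as an equality in $[0,+\infty]$, so there is never an $\infty-\infty$ ambiguity.

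For the inductive step I would bundle the first $t-1$ iterates $\bW_{0:t-1}$ into a single variable and treat $\bW_t$ as the second; applying the two-variable chain rule to this pair under $Q_{0:t}$ and $P_{0:t}$ gives
\begin{small}
\begin{equation*}
\operatorname{KL}(Q_{0:t}\|P_{0:t}) = \operatorname{KL}(Q_{0:t-1}\|P_{0:t-1}) + \E_{Q_{0:t-1}}\!\left[\operatorname{KL}(Q_{t|[t-1]}\|P_{t|[t-1]})\right].
\end{equation*}
\end{small}
Iterating this from $t=T$ down to $t=1$ telescopes the leading relative entropies and leaves $\operatorname{KL}(Q_0\|P_0) + \sum_{t=1}^{T}\E_{Q_{0:t-1}}\!\left[\operatorname{KL}(Q_{t|[t-1]}\|P_{t|[t-1]})\right]$; discarding the first summand (again $Q_0=P_0$) yields precisely the asserted decomposition.

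As a cross-check, and the quickest way to see why it is true, one can argue directly from densities: assuming $Q_{0:T}\ll P_{0:T}$, the Radon--Nikodym derivative factors as $\frac{dQ_{0:T}}{dP_{0:T}} = \frac{dQ_0}{dP_0}\prod_{t=1}^{T}\frac{dQ_{t|[t-1]}}{dP_{t|[t-1]}}$ almost surely; taking $\log$, integrating against $Q_{0:T}$, and using linearity, the $\bW_0$-term contributes $\operatorname{KL}(Q_0\|P_0)=0$, while for each $t$ the integrand $\log\frac{dQ_{t|[t-1]}}{dP_{t|[t-1]}}$ depends only on $\bW_{0:t}$, so the tower property integrates out $\bW_{t+1:T}$ and then conditions on $\bW_{0:t-1}$, collapsing that term to $\E_{Q_{0:t-1}}\!\left[\operatorname{KL}(Q_{t|[t-1]}\|P_{t|[t-1]})\right]$.

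The only real work here is measure-theoretic bookkeeping rather than any new idea: existence and $P$-a.s. uniqueness of the conditional kernels, the Fubini/tower-property justifications, and making sure the argument also covers the degenerate case $Q_{0:T}\not\ll P_{0:T}$ (where the left side is $+\infty$ by convention and the telescoping identity should force the right side to be $+\infty$ as well). I expect the induction route to handle this last point most cleanly, because the two-variable chain rule is already an equality in $[0,+\infty]$, so no term is ever of the form $\infty-\infty$; on $\R^d$ all the underlying regularity is standard, and the content is entirely the chain rule for relative entropy applied along the natural filtration of the iterates.
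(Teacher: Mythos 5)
Your proposal is correct and follows essentially the same route as the paper: both establish the one-step recursion $\operatorname{KL}(Q_{0:t}\|P_{0:t}) = \operatorname{KL}(Q_{0:t-1}\|P_{0:t-1}) + \mathbb{E}_{Q_{0:t-1}}\left[\operatorname{KL}(Q_{t|[t-1]}\|P_{t|[t-1]})\right]$ (the paper via an add-and-subtract against the hybrid measure $(Q_{0:T-1},P_{T|[T-1]})$ together with its conditional-KL identity, you via the two-variable chain rule, which is the same computation) and then telescope by induction. Your extra care about disintegration, the tower property, and the $+\infty$ case is a welcome tightening of details the paper leaves implicit, but it is not a different argument.
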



\textbf{Information theoretical generalization bound.} Several existing information-theoretical bounds~\cite{haghifam2020sharpened,negrea2019information} share similar framework. We state one representative proposed in \cite{negrea2019information}.   

\begin{proposition}[Theorem 2.5 in \cite{negrea2019information}]
\label{prop: negrea_2019}
Let $\bS$ be the data set i.i.d. sampled from $\mathcal{D}$, and let the loss function $\ell$ be $[a_1,a_2]$ bounded. Let $\mathcal{A}$ be an  algorithm with update rule 
\begin{equation*}
\bW_t\leftarrow \mathcal{M}_t (\bW_{t-1},\bS,\bV_t), \bW_0\sim \mathcal{W}_0   .
\end{equation*} Let $\mathcal{B}$ be another stochastic iterative optimization algorithm with update rule 
\begin{equation*}
    \bW_t\leftarrow\tilde{\mathcal{M}}_t (\bW_{t-1},\bS_{\bJ},\bV_t,\bJ), \bW_0\sim \mathcal{W}_0   .
\end{equation*} where $\bJ$ is sampled uniformly without replacement from $[N]$ with size $N-1$.  Given $\bS$, $\bJ$, and $\bV_{[T]}$, denote the joint distribution of $(\bW_t)_{t=1}^T$ of $\mathcal{A}$  as $Q^{\bS,\bV_{[T]}}$, and the joint distribution of $(\bW_t)_{t=1}^T$ of $\mathcal{B}$ as $P^{\bJ, \bS_{\bJ},\bV_{[T]}}$. Then, for any $\tilde{\mathcal{M}}_{[T]}$, the  generalization error of $\mathcal{A}$ can be bounded as: 
\begin{small}
\begin{align}
\nonumber
  \mathbb{E}_{\bS,\bV_{[T]}}\left[\mathcal{R}_{\mathcal{D}}\left(Q_T^{\bS,\bV_{[T]}}\right)-\hat{\mathcal{R}}_S\left(Q_T^{\bS,\bV_{[T]}}\right)\right]\le \mathbb{E}_{\bS,\bV_{[T]},\bJ }\sqrt{\frac{(a_2-a_1)^2}{2}\operatorname{KL}\left(Q^{\bS,\bV_{[T]}}\left\Vert P^{\bJ,\bS_{\bJ},\bV_{[T]}}\right.\right)}. \label{eq: negrea}
\end{align}
\end{small}
\end{proposition}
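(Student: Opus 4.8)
The plan is to reduce the generalization error to a leave-one-out quantity and then bound it by comparing the trajectory law of $\mathcal{A}$ with that of the auxiliary algorithm $\mathcal{B}$ --- which is built to be blind to exactly one data point --- through a Donsker--Varadhan change-of-measure inequality. First I would rewrite the empirical risk as a single randomly chosen summand: since $\bJ$ is a uniformly random $(N-1)$-subset of $[N]$ independent of $(\bS,\bV_{[T]})$, the excluded index $j_0$ (so that $[N]\setminus\bJ=\{j_0\}$) is uniform on $[N]$, hence $\mathbb{E}_{\bJ}[\ell(\bz_{j_0};\bW)]=\mathcal{R}_{\bS}(\bW)$ for every fixed $\bW$, and therefore
\begin{small}
\begin{equation*}
\mathbb{E}_{\bS,\bV_{[T]}}\!\left[\mathcal{R}_{\mathcal{D}}(Q_T^{\bS,\bV_{[T]}})-\mathcal{R}_{\bS}(Q_T^{\bS,\bV_{[T]}})\right]=\mathbb{E}_{\bS,\bV_{[T]},\bJ}\,\mathbb{E}_{\bW_T\sim Q_T^{\bS,\bV_{[T]}}}\!\left[\mathcal{R}_{\mathcal{D}}(\bW_T)-\ell(\bz_{j_0};\bW_T)\right].
\end{equation*}
\end{small}
The two structural facts I would lean on are that $P^{\bJ,\bS_{\bJ},\bV_{[T]}}$ is a function of $(\bJ,\bS_{\bJ},\bV_{[T]})$ alone, hence independent of $\bz_{j_0}$, and that $\bz_{j_0}$ is independent of $(\bS_{\bJ},\bV_{[T]},\bJ)$ and distributed as $\mathcal{D}$.

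Next I would fix the ``public'' variables $u=(\bJ,\bS_{\bJ},\bV_{[T]})$ and a value $z$ of $\bz_{j_0}$, and apply change of measure to $f_z(\bW_T):=\mathcal{R}_{\mathcal{D}}(\bW_T)-\ell(z;\bW_T)$. This is a bounded function of $\bW_T$, hence sub-Gaussian under $P^{\bJ,\bS_{\bJ},\bV_{[T]}}$ by Hoeffding's lemma (with a variance proxy controlled by $a_2-a_1$), so optimizing the tilt parameter gives
\begin{small}
\begin{equation*}
\mathbb{E}_{\bW_T\sim Q_T^{\bS_{\bJ}\cup\{z\},\bV_{[T]}}}\!\left[f_z(\bW_T)\right]\le \mathbb{E}_{\bW_T\sim P_T^{\bJ,\bS_{\bJ},\bV_{[T]}}}\!\left[f_z(\bW_T)\right]+\sqrt{\tfrac{(a_2-a_1)^2}{2}\,\operatorname{KL}\!\left(Q^{\bS_{\bJ}\cup\{z\},\bV_{[T]}}\,\|\,P^{\bJ,\bS_{\bJ},\bV_{[T]}}\right)}.
\end{equation*}
\end{small}
Here I also use that $f_z$ depends only on the last iterate, so monotonicity of KL under marginalization lets me replace the step-$T$ marginal divergence by the full-trajectory divergence $\operatorname{KL}(Q^{\bS,\bV_{[T]}}\|P^{\bJ,\bS_{\bJ},\bV_{[T]}})$; this is precisely the object that the decomposition of KL divergence (Lemma~\ref{lem: decomposition_kl}) later unpacks step by step, using $Q_0=P_0=\mathcal{W}_0$.

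Finally I would integrate over $\bz_{j_0}\sim\mathcal{D}$. The residual term vanishes, since $P^{\bJ,\bS_{\bJ},\bV_{[T]}}$ does not depend on $\bz_{j_0}$:
\begin{small}
\begin{equation*}
\mathbb{E}_{\bz_{j_0}\sim\mathcal{D}}\,\mathbb{E}_{\bW_T\sim P_T^{\bJ,\bS_{\bJ},\bV_{[T]}}}\!\left[\mathcal{R}_{\mathcal{D}}(\bW_T)-\ell(\bz_{j_0};\bW_T)\right]=\mathbb{E}_{\bW_T\sim P_T^{\bJ,\bS_{\bJ},\bV_{[T]}}}\!\left[\mathcal{R}_{\mathcal{D}}(\bW_T)-\mathcal{R}_{\mathcal{D}}(\bW_T)\right]=0.
\end{equation*}
\end{small}
Taking then the expectation over $u$ and recombining $\mathbb{E}_{\bJ,\bS_{\bJ},\bV_{[T]}}\mathbb{E}_{\bz_{j_0}\sim\mathcal{D}}=\mathbb{E}_{\bS,\bV_{[T]},\bJ}$ produces exactly the claimed inequality; and since the only property of $\mathcal{B}$ used is the $\bz_{j_0}$-independence of its trajectory law, the bound holds for every admissible $\tilde{\mathcal{M}}_{[T]}$.

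I expect the main obstacle to be organizing the conditioning so the two ingredients coexist: the change-of-measure/sub-Gaussian step must be carried out with $\bz_{j_0}$ \emph{fixed} --- otherwise $Q_T^{\bS,\bV_{[T]}}$ is not a genuine law over iterates --- and with the tilt parameter optimized \emph{before} integrating over $\bz_{j_0}$, whereas the cancellation of the residual term appears only \emph{after} integrating $\bz_{j_0}$ against $\mathcal{D}$, which is exactly why $\mathbb{E}_{\bz_{j_0}}$ ends up outside the square root rather than inside it. Carrying Hoeffding's lemma carefully through this conditioning is what pins down the numerical constant; the remaining ingredients --- independence of $\bV_{[T]}$ and $\bJ$ from the data, the trajectory-KL reduction via Lemma~\ref{lem: decomposition_kl}, and the trivial case $Q\not\ll P$ (where $\operatorname{KL}=\infty$ and the bound is vacuous) --- are routine.
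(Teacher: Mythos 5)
Your overall architecture --- rewrite $\mathcal{R}_{\bS}$ as the single held-out summand $\ell(\bz_{j_0};\cdot)$, apply a Donsker--Varadhan change of measure from $P^{\bJ,\bS_{\bJ},\bV_{[T]}}$ to $Q^{\bS,\bV_{[T]}}$, and cancel the residual by integrating $\bz_{j_0}\sim\mathcal{D}$ against a prior that cannot see it --- is exactly the architecture this paper uses for its own reversed bound (Theorem~\ref{thm: reversed_generalization}, proved in Appendix~\ref{appen: proof_generalization_2}); Proposition~\ref{prop: negrea_2019} itself is only cited here, not reproved. Your reduction to the excluded index, the data-processing step from the step-$T$ marginal to the trajectory KL, and the recombination $\mathbb{E}_{\bJ,\bS_{\bJ},\bV_{[T]}}\mathbb{E}_{\bz_{j_0}}=\mathbb{E}_{\bS,\bV_{[T]},\bJ}$ are all sound.

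The gap is in the constant, and it sits exactly where you flag the argument as delicate. You apply Hoeffding's lemma to $f_z(\bW_T)=\mathcal{R}_{\mathcal{D}}(\bW_T)-\ell(z;\bW_T)$ \emph{as a function of $\bW_T$ under $P$, pointwise in $z$}. But as a function of $\bW_T$ this quantity ranges over $[a_1-a_2,\,a_2-a_1]$, an interval of length $2(a_2-a_1)$, so Hoeffding yields variance proxy $(a_2-a_1)^2$ and the optimized tilt gives $\mathbb{E}_{Q}[f_z]\le\mathbb{E}_{P}[f_z]+\sqrt{2(a_2-a_1)^2\,\operatorname{KL}(Q\Vert P)}$ --- a factor of $2$ larger than the stated bound, and one cannot do better pointwise in $z$ (a prior $P$ concentrated on two parameters where $f_z=\pm(a_2-a_1)$ already has variance $(a_2-a_1)^2$). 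The constant $\tfrac{(a_2-a_1)^2}{2}$ comes from the \emph{other} sub-Gaussianity: for fixed $\bW$, $\ell(z;\bW)$ with $z\sim\mathcal{D}$ lies in an interval of length only $a_2-a_1$ and is centered at $\mathcal{R}_{\mathcal{D}}(\bW)$, hence is $\tfrac{a_2-a_1}{2}$-sub-Gaussian \emph{in the data variable}; this is the source of the $\tfrac{1}{8}\lambda^2(a_2-a_1)^2$ term in the paper's proof of Theorem~\ref{thm: reversed_generalization}. Exploiting it requires pushing $\mathbb{E}_{\bz_{j_0}}$ inside the exponential-moment bound before optimizing $\lambda$, which then places $\mathbb{E}_{\bz_{j_0}}$ on the KL term as well (naturally producing $\sqrt{\mathbb{E}_{\bz_{j_0}}\operatorname{KL}}$, the Jensen-relaxed form the paper itself adopts as $\gen_T$ downstream), or else the more careful conditional argument of Negrea et al. Your last paragraph explicitly commits to the opposite order --- optimize the tilt per fixed $z$, keep $\mathbb{E}_{\bz_{j_0}}$ outside the square root --- which forecloses the route to the claimed constant. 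As written, your proof establishes the proposition only with $2(a_2-a_1)^2$ in place of $\tfrac{(a_2-a_1)^2}{2}$.
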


This proposition is used to obtain the generalization error bound for SGLD  \cite[Theorem 3.1]{negrea2019information} by further combing Lemma \ref{lem: decomposition_kl}. As $\tilde{\mathcal{M}}_{[T]}$ can be arbitrarily picked, $P$ works as an "Auxiliary Line" and is called the \textbf{prior} distribution, while $Q$ is the real distribution of parameters  called the \textbf{posterior} distribution. In Section \ref{sec: difficulty_traditional}, we will argue the difficulty of applying Proposition \ref{prop: negrea_2019} to analyze the effect of noise structures.


\section{Formulate the Problem: Proper Constraints and Optimization Target}
In this part, we formulate the optimization problem, i.e., finding the optimal noise covariance of SGLD in terms of the information-theoretical generalization bound, by selecting the proper optimization constraint and optimization target. Specifically, in Section \ref{sec: obtaining_constraint}, we argue that the trace of the noise covariance is a proper constraint to ensure the same optimization error; in Section \ref{sec: general_information_theoretical_generalization_bound}, we argue that existing generalization bounds are not proper candidates as the optimization target, and propose a new information-theoretical bound parallel to existing ones but easier to analyze.

\subsection{Constraint on the Covariance to Control the Empirical Risk}
\label{sec: obtaining_constraint}
We first derive the constraint of noise covariance $\bSig_t$ in Eq. (\ref{eq:state-dependent SGLD}) from the perspective of training performance, under which we  optimize the generalization bounds in the rest of this paper. 

Without any constraint on the noise covariance, optimizing generalization error is trivial but meaningless: a direct combination of Theorem 1 of \cite{xu2017information} and Theorem 1 of \cite{pensia2018generalization} shows that for isotropic SGLD with $\bSig_t= \sigma_t \mathbb{I}$, the generalization error after $T$ step satisfies     $\er\le\mathcal{O}(( \sum_{t=1}^T\log (1+1/\sigma_t))^{1/2})$. 
Therefore, as $\sigma_t \rightarrow \infty$ for $t\in[T]$, we have $\er \rightarrow 0$, but then the update of SGLD is dominated by the  noise, leading to arbitrary bad empirical risk. Hence, we need constraints on the covariance in order to control the empirical risk when minimizing the generalization error. Specifically, the expected decrease of the empirical risk for one iteration can be bounded as follows. 

\begin{lemma}
\label{lem:state-dependent sgld optimization}
Let empirical risk $\mathcal{R}_{\bS}(\bW)$ be $\beta$-smooth w.r.t. $\bW$. Let $\bW_{[T]}$ be given by state-dependent SGLD (Eq. (\ref{eq:state-dependent SGLD})). Then,
\begin{scriptsize}
\begin{equation*}
    \mathbb{E}_{t+1|t}\mathcal{R}_{\bS} (\bW_{t+1})-\mathcal{R}_{\bS} (\bW_{t})\le -\left(1-\frac{\beta\eta_{t+1}}{2}\right)\eta_{t+1}\left\|\nabla \mathcal{R}_{\bS}\left(\boldsymbol{W}_{t}\right)\right\|^{2}+\frac{\beta}{2}\tr \left( \frac{\eta_{t+1}^2(N-b_t)}{(N-1)b_t}\bSig^{sd}_{\bS,\bW_t}+\bSig_{t+1}(\bS,\bW_t)\right).
\end{equation*}
\end{scriptsize}
\end{lemma}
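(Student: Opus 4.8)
The plan is to run the classical descent lemma for $\beta$-smooth functions along one SGLD step and then take the one-step conditional expectation $\mathbb{E}_{t+1|t}$, with a bias--variance split of the second-order term. By $\beta$-smoothness of $\mathcal{R}_{\bS}$,
\[
\mathcal{R}_{\bS}(\bW_{t+1}) \le \mathcal{R}_{\bS}(\bW_{t}) + \langle \nabla\mathcal{R}_{\bS}(\bW_{t}),\, \bW_{t+1}-\bW_{t}\rangle + \frac{\beta}{2}\|\bW_{t+1}-\bW_{t}\|^{2}.
\]
Conditional on $\bW_t$ (and $\bS$), the increment is $\bW_{t+1}-\bW_{t} = -\eta_{t+1}\nabla\mathcal{R}_{\bS_{\bV_{t+1}}}(\bW_{t}) + \mathcal{N}(\boldsymbol{0},\bSig_{t+1}(\bS,\bW_{t}))$. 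First I would take $\mathbb{E}_{t+1|t}$ of the displayed inequality. For the linear term, the uniform draw of $\bV_{t+1}$ makes the mini-batch gradient an unbiased estimator of $\nabla\mathcal{R}_{\bS}(\bW_t)$ and the injected Gaussian is centered, so $\mathbb{E}_{t+1|t}[\bW_{t+1}-\bW_{t}] = -\eta_{t+1}\nabla\mathcal{R}_{\bS}(\bW_{t})$ and this term contributes $-\eta_{t+1}\|\nabla\mathcal{R}_{\bS}(\bW_{t})\|^{2}$.

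For the quadratic term I would use $\mathbb{E}_{t+1|t}\|\bW_{t+1}-\bW_{t}\|^{2} = \|\mathbb{E}_{t+1|t}[\bW_{t+1}-\bW_{t}]\|^{2} + \tr\big(\Cov_{t+1|t}[\bW_{t+1}-\bW_{t}]\big)$. The first piece equals $\eta_{t+1}^{2}\|\nabla\mathcal{R}_{\bS}(\bW_{t})\|^{2}$. For the second, since the mini-batch randomness $\bV_{t+1}$ and the injected Gaussian are independent, their covariances add: $\Cov_{t+1|t}[\bW_{t+1}-\bW_{t}] = \eta_{t+1}^{2}\,\Cov[\nabla\mathcal{R}_{\bS_{\bV_{t+1}}}(\bW_{t})] + \bSig_{t+1}(\bS,\bW_{t})$, and the finite-population (sampling-without-replacement) variance identity already recorded in the Preliminaries gives $\Cov[\nabla\mathcal{R}_{\bS_{\bV_{t+1}}}(\bW_{t})] = \frac{N-b_{t+1}}{(N-1)b_{t+1}}\bSig^{sd}_{\bS,\bW_{t}}$ (denoted with $b_t$ in the statement). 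Substituting both pieces back and collecting the two $\|\nabla\mathcal{R}_{\bS}(\bW_{t})\|^{2}$ contributions with coefficient $-\eta_{t+1}+\frac{\beta}{2}\eta_{t+1}^{2} = -\big(1-\frac{\beta\eta_{t+1}}{2}\big)\eta_{t+1}$, together with $\tr\big(\eta_{t+1}^{2}\frac{N-b_{t+1}}{(N-1)b_{t+1}}\bSig^{sd}_{\bS,\bW_t} + \bSig_{t+1}(\bS,\bW_t)\big)$ from the variance term, yields precisely the claimed bound.

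I do not expect a genuine obstacle here: this is a one-step descent estimate. The only place to be careful is the covariance bookkeeping in the second-order term --- keeping the $\|\mathbb{E}_{t+1|t}[\,\cdot\,]\|^{2}$ contribution rather than discarding the mean, adding the two \emph{independent} noise covariances instead of treating them jointly, and invoking the correct finite-population correction factor for the mini-batch --- all of which are routine given the identities already set up in Section~\ref{sec:preliminary}.
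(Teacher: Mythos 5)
Your proposal is correct and follows essentially the same route as the paper's proof in Appendix~\ref{appendix: proof of lemma optimization}: descent lemma from $\beta$-smoothness, one-step conditional expectation with the unbiased mini-batch gradient, and a bias--variance split of $\mathbb{E}_{t+1|t}\|\bW_{t+1}-\bW_t\|^2$ in which the mini-batch sampling covariance $\frac{N-b_{t+1}}{(N-1)b_{t+1}}\bSig^{sd}_{\bS,\bW_t}$ and the independent injected Gaussian covariance add. The only cosmetic difference is that you invoke the identity $\mathbb{E}\|X\|^2=\|\mathbb{E}X\|^2+\tr(\Cov X)$ directly, whereas the paper expands by adding and subtracting $\nabla\mathcal{R}_{\bS}(\bW_t)$; you also correctly flag the $b_t$ versus $b_{t+1}$ index slip in the lemma statement.
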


The proof can be obtained by a standard analysis in optimization, and we defer it to Appendix \ref{appendix: proof of lemma optimization}. By Lemma \ref{lem:state-dependent sgld optimization}, the noise covariance $\bSig_t$ affects the upper bound of the empirical risk by its \emph{trace}. Therefore, it is reasonable to keep $\operatorname{tr} (\bSig_t(\bS,\bW_{t-1}))$ unchanged while seeking the optimal $\bSig_t(\bS,\bW_{t-1})$ to minimize the generalization error. The constraint is given formally as follows:
\begin{constraint}
\label{constraint: trace}
The trace of $\bSig_t(\bS,\bW_{t-1})$ is fixed when optimizing the generalization error. That is, there exist  positive constants $c_t(\bS,\bW_{t-1})$ depending on $\bS$ and $\bW_{t-1}$, such that,
\begin{equation*}
    \operatorname{tr}(\bSig_t(\bS,\bW_{t-1}))=c_t(\bS,\bW_{t-1}).
\end{equation*}
\end{constraint}

We do not put any constraint on the value of $c_t$ in our latter analyses. Therefore, it is also possible to manipulate $c_t$ in order to jointly optimize the empirical risk and the generalization bound, which however,  is beyond the scope of this paper and we defer it to  future works. Similar constraint is also proposed by \cite{zhu2018anisotropic} from the standpoint of kinetic energy when analyzing the effect of noise structure on the escaping efficiency from saddle points. 




We next verify this constraint empirically. We run SGLD with different covariances on a four-layer neural network for the Fashion-MNIST classification problem. Concretely, the noise covariances are chosen respectively as $\bSig^{(1)}_t=\bSig^{sd}_{\bS,\bW_{t-1}}$, i.e.,  ``EC-SGLD'' ( Empirical Covariance SGLD) curve in Fig.\ref{fig:train_all},  and  $\bSig^{(2)}_t=$ $\frac{1}{d}\tr(\bSig^{sd}_{\bS,\bW_{t-1}}) \mathbb{I}$, i.e., the ``Iso-SGLD (C)'' curve in Fig.\ref{fig:train_all}. It is easy to verify that $\tr(\bSig^{(1)}_t)=\tr(\bSig^{(2)}_t)$, which is exactly the Constraint \ref{constraint: trace}. We can see from Fig.\ref{fig:train_all} that the convergence curves corresponding to $\bSig^{(1)}_t$ and $\bSig^{(2)}_t$ almost coincide with each other, validating the Constraint \ref{constraint: trace}. 
\begin{wrapfigure}{r}{4cm}
\vspace{-5pt}
\includegraphics[width=4cm]{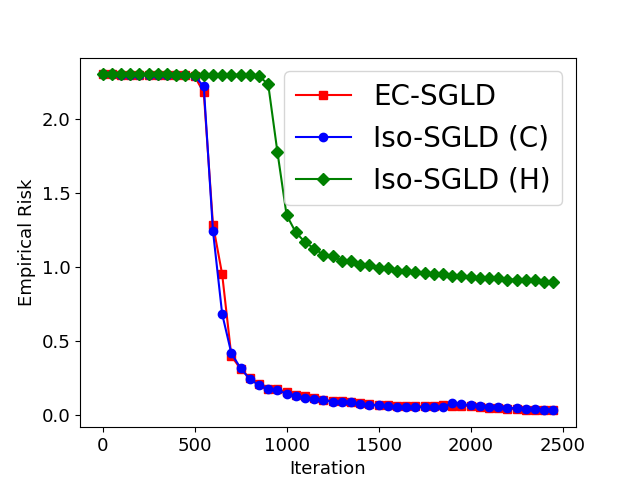}
\caption{\small Training errors of SGLD with different noise covariances. The experiment is run on the Fashion-MNIST dataset with a four-layer neural network (see Appendix \ref{appen: experiment}).}
\label{fig:train_all}
\vspace{-45pt}
\end{wrapfigure}

In comparison, another quantity  $\tr(\mathcal{H}_{\bS,\bW_{t-1}}\bSig_t)$, the trace of the empirical risk's Hessian times the empirical covariance, has also been proposed to govern the convergence behavior \cite[Theorem 4.1]{wen2019empirical}. We plot the convergence curve of SGLD with noise covariance  $\bSig^{(3)}_t=\frac{\tr(\mathcal{H}_{\bS,\bW_{t-1}}\bSig^{sd}_{\bS,\bW_{t-1}})}{\tr(\mathcal{H}_{\bS,\bW_{t-1}})} \mathbb{I}$ ("Iso-SGLD (H)" in Fig.\ref{fig:train_all}). While $\tr(\mathcal{H}_{\bS,\bW_{t-1}}\bSig^{(1)}_t)=\tr(\mathcal{H}_{\bS,\bW_{t-1}}\bSig^{(3)}_t)$, there is a significant gap between curves of ``EC-SGLD '' and ``Iso-SGLD (H)'' in Fig.\ref{fig:train_all}. This implies that $\tr(\mathcal{H}_{\bS,\bW_{t-1}}\bSig_t)$ is not a good constraint for the empirical risk of SGLD, validating the Constraint \ref{constraint: trace} from another side. 

 In the rest of this paper, we will optimize the information-theoretical bound under Constraint \ref{constraint: trace} to ensure low empirical risk.

\subsection{New Information-theoretical Bounds as the Optimization Target}
\label{sec: general_information_theoretical_generalization_bound}
We first demonstrate that existing information-theoretical bounds are not suitable for the optimization target in Section \ref{sec: difficulty_traditional}. Then we propose a new information-theoretical bound as the optimization target in Section \ref{sec: new_bound}.

\subsubsection{Difficulties When Applying Traditional Information-theoretical Bounds}
\label{sec: difficulty_traditional}

By Lemma \ref{lem: decomposition_kl}, the generalization error bound in Proposition \ref{prop: negrea_2019} can be rewritten as 
\begin{equation}
\mathbb{E}_{\bS,\bV_{[T]},\bJ}\sqrt{\frac{(a_2-a_1)^2}{2}\sum_{s=1}^T\mathbb{E}_{Q_{s-1}^{\bS,\bV_{[s-1]}}}\operatorname{KL}\left(Q_{s|(s-1)}^{\bS,\bV_{s}}\left\Vert P_{s|(s-1)}^{\bJ,\bS_{\bJ},\bV_{s}}\right.\right)}.  \label{eq: negrea2} 
\end{equation}


For \textbf{Problem} \ref{problem: ultimate}, there are two difficulties. For a fixed $Q^{\bS,\bV_{[T]}}$, in order to obtain the optimal {\small $P_{s|(s-1)}^{\bJ,\bS_{\bJ},\bV_{s}}$} for any $s\in [T]$, one would first calculate the outside expectation of Eq.~(\ref{eq: negrea2}) over $\bS_{\bJ^c}$ , as $P^{\bS_{\bJ},\bV_{[T]}}$ is independent of $\bS_{\bJ^c}$. However, for each term {\small $\mathbb{E}_{Q_{s-1}^{\bS,\bV_{[s-1]}}}\operatorname{KL} $ $(Q_{s|(s-1)}^{\bS,\bV_{s}}\Vert P_{s|(s-1)}^{\bJ,\bS_{\bJ},\bV_{s}})$ where $s\in[T]$}, both the probability measure {\small$Q_{s-1}^{\bS,\bV_{[s-1]}}$} and the function {\small $\operatorname{KL}(Q_{s|(s-1)}^{\bS,\bV_{s}}\Vert P_{s|(s-1)}^{\bJ,\bS_{\bJ},\bV_{s}})$} has dependency on $\bS_{\bJ^c}$ , which makes evaluating the generalization bound with respect to  $P^{\bS_{\bJ},\bV_{[T]}}$ extremely hard. Furthermore,  when we come to optimize the bound w.r.t. $\bSig_{[T]}$, for each $\bSig_i$ where $i\in[T-1]$,  the Eq.~(\ref{eq: negrea2}) depends on $\bSig_i$  via two terms $A$ and $B$, where
\begin{equation*}
   A= \mathbb{E}_{Q_{i-1}^{\bS,\bV_{[T]}}}\operatorname{KL}\left(Q_{i|(i-1)}^{\bS,\bV_{[T]}}\Vert P_{i|(i-1)}^{\bJ,\bS_{\bJ},\bV_{[T]}}\right),~B=\sum_{s=i+1}^T\mathbb{E}_{Q_{s-1}^{\bS,\bV_{[s-1]}}}\operatorname{KL}\left(Q_{s|(s-1)}^{\bS,\bV_{s}}\Vert P_{s|(s-1)}^{\bJ,\bS_{\bJ},\bV_{s}}\right).
\end{equation*} While the first term is easy to deal with as $Q_{i-1}^{\bS,\bV_{[T]}}$ is irrelevant of $\bSig_i$, the second term depends on $\bSig_i$ via the distribution $Q_{s-1}^{\bS,\bV_{[s-1]}}$ for $s\ge i+1$, and can be very complex (please refer to Appendix \ref{subsec: appen_example}).


\subsubsection{Information-theoretical Generalization Bound for Identifying the Noise Effect}
\label{sec: new_bound}
We then establish a new information-theoretical generalization bound as the optimization target that are parallel to the results in \cite{negrea2019information} but more suitable for analyzing the effect of noise structure. The basic idea  is to reverse the order of prior and posterior in the KL divergence to let $\bSig_t$ only affect one term in the generalization bound for each $t$ when finding the optimal point. The formal theorem is stated as follows:

\begin{theorem}
\label{thm: reversed_generalization}
Let sample set $\bS$, mini-batch $\bV_{[T]}$, random subset $\bJ$, the posterior distribution $Q$ output by Algorithm $\mathcal{A}$ with update rule $\mathcal{M}$  and the prior distribution $P$ output by Algorithm $\mathcal{B}$ with update rule $\tilde{\mathcal{M}}$ be defined as Proposition \ref{prop: negrea_2019}. Let $[a_1,a_2]$ be the range of loss. Then, the  generalization error of $\mathcal{A}$ can be bounded as
\begin{small}
\begin{align}
\label{eq: second_error_bound}
     \er\le\mathbb{E}_{\bS,\bV_{[T]},\bJ}\sqrt{\frac{(a_2-a_1)^2}{2}\operatorname{KL}\left(P^{\bJ,\bS_{\bJ},\bV_{[T]}}\left\Vert Q^{\bS,\bV_{[T]}}\right.\right)}.
\end{align}
\end{small}
\end{theorem}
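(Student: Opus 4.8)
The plan is to mimic the proof of Proposition~\ref{prop: negrea_2019} (Theorem 2.5 of \cite{negrea2019information}), but with the roles of prior and posterior in the KL divergence swapped, and to check that the argument still goes through. The key realization is that the standard derivation of an information-theoretic generalization bound proceeds by (i) writing the expected generalization gap as a difference of expectations under two couplings of $\bW_T$ with the data, (ii) bounding that difference by a variational/Donsker--Varadhan or Hoeffding-type argument that produces a square root of a KL divergence, and (iii) applying Jensen to pull the expectation over $\bS,\bV_{[T]},\bJ$ outside the square root. The subtlety is which measure sits in which slot of the KL. I would follow the route that goes through the subgaussian concentration of the centered loss: for each fixed realization of $\bS$, $\bS_{\bJ}$, $\bV_{[T]}$ and $\bJ$, the random variable $g(\bW)\overset{\triangle}{=}\mathcal{R}_{\mathcal{D}}(\bW)-\mathcal{R}_{\bS}(\bW)$ is $\frac{(a_2-a_1)^2}{4\,\lvert\bS\rvert}$-subgaussian (or, keeping the paper's bracketing, the loss is $[a_1,a_2]$-bounded so $g$ is $(a_2-a_1)$-subgaussian up to the usual constant) when $\bW$ is drawn from the \emph{prior} $P^{\bJ,\bS_{\bJ},\bV_{[T]}}$, because the prior's marginal on $\bW_T$ is constructed from $\bS_{\bJ}$ and hence is independent of at least one sample and has the right mean-zero-in-expectation structure that $\mathcal{B}$ is designed to give. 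The decoupling estimate $\mathbb{E}_Q[g]-\mathbb{E}_P[g]\le\sqrt{2\lambda^2\operatorname{KL}(Q\Vert P)}$ — wait, that is the wrong orientation; the correct lemma I want is the one that gives $\mathbb{E}_{Q}[g]\le\sqrt{2\sigma^2\operatorname{KL}(P\Vert Q)}$ when $g$ is zero-mean and $\sigma$-subgaussian \emph{under} $P$ and also zero-mean under...

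Let me restate the step I expect to be the crux. The hard part will be getting the KL oriented as $\operatorname{KL}(P\Vert Q)$ rather than $\operatorname{KL}(Q\Vert P)$ out of the concentration argument. The clean way is: assume $g$ is $\sigma$-subgaussian under the prior $P$ (this is where boundedness of $\ell$ and the independence built into $\mathcal{B}$ are used) and note $\mathbb{E}_P[g]=0$ because $\mathcal{B}$ uses the full-minus-one subsample so its marginal still yields an unbiased estimate of the population risk in expectation over the omitted index. Then by the Donsker--Varadhan variational formula, for any $\lambda$,
\begin{small}
\begin{equation*}
\lambda\,\mathbb{E}_Q[g]=\lambda\,\mathbb{E}_Q[g]-\log\mathbb{E}_P[e^{\lambda g}]+\log\mathbb{E}_P[e^{\lambda g}]\le \operatorname{KL}(Q\Vert P)+\log\mathbb{E}_P[e^{\lambda g}]\le \operatorname{KL}(Q\Vert P)+\tfrac{\lambda^2\sigma^2}{2},
\end{equation*}
\end{small}
which again gives $\operatorname{KL}(Q\Vert P)$, not what we want. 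So instead I would swap the probabilist's roles entirely: rederive the generalization identity so that the quantity being controlled is $\mathbb{E}_P[g]-\mathbb{E}_Q[g]$ (using that under $Q$ the centered loss $g$ is subgaussian — which holds since $\ell$ is $[a_1,a_2]$-bounded regardless of the measure — and that $\mathbb{E}_P[g]$, not $\mathbb{E}_Q[g]$, is the generalization error up to sign), and then Donsker--Varadhan with $P$ in the ``$\mathbb{E}[\cdot]$'' slot and $Q$ in the ``reference'' slot yields $\operatorname{KL}(P\Vert Q)+\frac{\lambda^2\sigma^2}{2}$, optimize over $\lambda$ to get $\sqrt{2\sigma^2\operatorname{KL}(P\Vert Q)}$ with $\sigma^2=(a_2-a_1)^2/4$ per sample, absorbing the $1/\lvert\bS\rvert$ appropriately so the stated constant $(a_2-a_1)^2/2$ comes out. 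The one genuinely load-bearing fact is that the \emph{prior} $P$ produces an unbiased population-risk estimate in the relevant expectation — i.e. $\mathbb{E}_{\bJ}\mathbb{E}_{P}[\mathcal{R}_{\bS}(\bW_T)]=\mathbb{E}_{\bJ}\mathbb{E}_P[\mathcal{R}_{\mathcal{D}}(\bW_T)]$ after conditioning — exactly as in the Negrea et al.\ argument, because $\bW_T$ under $P$ depends only on $\bS_{\bJ}$ and the omitted index plays the role of a fresh test point.

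Concretely the steps, in order: (1) fix $\bS,\bV_{[T]},\bJ$ and let $Q=Q^{\bS,\bV_{[T]}}$, $P=P^{\bJ,\bS_{\bJ},\bV_{[T]}}$ be the marginals on $\bW_T$; (2) write $\er=\mathbb{E}_{\bS,\bV_{[T]}}\big[\mathbb{E}_Q[\mathcal{R}_{\mathcal{D}}(\bW_T)-\mathcal{R}_{\bS}(\bW_T)]\big]$ and insert the prior by the standard trick $\mathbb{E}_Q[\mathcal{R}_{\mathcal{D}}-\mathcal{R}_{\bS}]=\big(\mathbb{E}_Q-\mathbb{E}_P\big)[\mathcal{R}_{\mathcal{D}}-\mathcal{R}_{\bS}]+\mathbb{E}_P[\mathcal{R}_{\mathcal{D}}-\mathcal{R}_{\bS}]$, taking the outer expectation over $\bJ$ and using unbiasedness of $\mathcal{B}$ to kill the second term; (3) bound $\big(\mathbb{E}_P-\mathbb{E}_Q\big)[g]$ via Donsker--Varadhan plus the $(a_2-a_1)$-boundedness-implies-subgaussianity of $g$ under $Q$, producing $\sqrt{\tfrac{(a_2-a_1)^2}{2}\operatorname{KL}(P\Vert Q)}$ (with a sign bookkeeping step since we want the generalization gap, not its negative — here one uses that both $\pm g$ are subgaussian, so the same bound controls $\lvert(\mathbb{E}_P-\mathbb{E}_Q)[g]\rvert$); (4) apply Jensen's inequality to move $\mathbb{E}_{\bS,\bV_{[T]},\bJ}$ inside the square root, yielding exactly \eqref{eq: second_error_bound}. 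The main obstacle, as flagged, is step (2)--(3): making sure the unbiasedness step is valid with the \emph{prior} in the unbiased slot and that the concentration step legitimately delivers $\operatorname{KL}(P\Vert Q)$ rather than $\operatorname{KL}(Q\Vert P)$; once the orientation of the Donsker--Varadhan application is pinned down (reference measure $=Q$, so its log-MGF of $g$ under $Q$ is what gets bounded, and the expectation-under-$P$ appears linearly), everything else is the same boilerplate as in \cite{negrea2019information}.
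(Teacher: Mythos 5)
Your overall architecture matches the paper's proof: apply the Donsker--Varadhan variational formula with $Q^{\bS,\bV_{[T]}}$ as the reference measure (so that its log-moment-generating function is what gets bounded, using boundedness of $\ell$) and $P^{\bJ,\bS_{\bJ},\bV_{[T]}}$ in the linear slot --- this is exactly how the $\operatorname{KL}(P\Vert Q)$ orientation comes out --- and then optimize over $\lambda$ via $\inf_{\lambda>0}\lambda^{-1}\bigl(\operatorname{KL}+\tfrac{1}{8}\lambda^2(a_2-a_1)^2\bigr)=\sqrt{\tfrac{(a_2-a_1)^2}{2}\operatorname{KL}}$. So the route is the right one.

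The genuine gap is in your step (2), the unbiasedness step, which you yourself flag as load-bearing. You assert $\mathbb{E}_{\bJ}\mathbb{E}_{P}[\mathcal{R}_{\bS}(\bW_T)]=\mathbb{E}_{\bJ}\mathbb{E}_P[\mathcal{R}_{\mathcal{D}}(\bW_T)]$ and use it to kill $\mathbb{E}_P[\mathcal{R}_{\mathcal{D}}-\mathcal{R}_{\bS}]$. This is false as written: under $P$ the parameter depends on $\bS_{\bJ}$, i.e.\ on $N-1$ of the $N$ training points, so $\mathcal{R}_{\bS}(\bW_T)=\frac{N-1}{N}\mathcal{R}_{\bS_{\bJ}}(\bW_T)+\frac{1}{N}\hat{\mathcal{R}}_{\bS_{\bJ^c}}(\bW_T)$ is dominated by the prior's \emph{own training risk}, whose expectation does not equal the prior's population risk; the leftover term is precisely the prior's own generalization gap, which does not vanish. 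The paper's proof avoids this by choosing the Donsker--Varadhan test function to be $g(\bW)=\lambda\bigl(\hat{\mathcal{R}}_{\bS_{\bJ^c}}(\bW)-\mathcal{R}_{\mathcal{D}}(\bW)\bigr)$, i.e.\ the empirical risk on the single held-out point only. Then (a) $P$ is independent of $\bS_{\bJ^c}$, so $\mathbb{E}_{\bS_{\bJ^c}}\bigl[\hat{\mathcal{R}}_{\bS_{\bJ^c}}(P)-\mathcal{R}_{\mathcal{D}}(P)\bigr]=0$ exactly, and (b) on the posterior side, since $\bJ$ is uniform and independent of $Q$, $\mathbb{E}_{\bJ}\bigl[\hat{\mathcal{R}}_{\bS_{\bJ^c}}(Q)\bigr]=\hat{\mathcal{R}}_{\bS}(Q)$, which recovers the full training risk and hence the generalization error. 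Your write-up conflates these two distinct averaging steps (over $\bS_{\bJ^c}$ for $P$, over $\bJ$ for $Q$) into one incorrect identity; with the corrected test function the rest of your argument goes through as you describe. Two smaller points: no Jensen step is needed at the end, since the pointwise bound is taken and the expectation in \eqref{eq: second_error_bound} sits \emph{outside} the square root; and the subgaussianity you need is simply Hoeffding's lemma applied to the bounded function $g$ under the reference measure $Q$, not any distributional property of $P$.
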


Compared to Proposition \ref{prop: negrea_2019}, Theorem \ref{thm: reversed_generalization} can be viewed as a parallel version with the positions of the prior distribution $P$ and the posterior distribution $Q$ reversed. This reverse benefits the optimization of the generalization error bound with respect to the noise covariance under Constraint \ref{constraint: trace} in two ways. First, by Lemma \ref{lem: decomposition_kl}, the KL term in Eq.~(\ref{eq: second_error_bound}) can be further decomposed into
\begin{equation*}
   \operatorname{KL}\left(P^{\bS_{\bJ},\bV_{[T]}}\left\Vert Q^{\bS,\bV_{[T]}}\right.\right)= \sum_{s=1}^T\mathbb{E}_{P_{1:s-1}^{\bJ, \bS_{\bJ},\bV_{[s-1]}}}\operatorname{KL}\left(P_{s|(s-1)}^{\bJ,\bS_{\bJ},\bV_{s}}||Q_{s|(s-1)}^{\bS,\bV_{s}}\right),
\end{equation*} 
in which only each KL term depends on each $\bSig_s$, respectively. Secondly, for each summand in the above decomposition, only  $Q_{s|(s-1)}^{\bS,\bV_{s}}$ depends on $\bS_{\bJ^c}$, making it easy to compute the optimal prior in a "greedy" sense (see details in Section \ref{sec: a_greedy_approach}). 

Hence in the rest of this paper, we study the optimal structure of $\bSig_t$ in terms of the generalization bound in Theorem \ref{thm: reversed_generalization} under the Constraint \ref{constraint: trace}. Specifically, our ultimate goal can be stated as follows: 


\begin{problem}
\label{problem: ultimate}
What is the optimal structure of the noise covariance $\bSig_{[T]}$ for SGLD in terms of the generalization bound in Theorem \ref{thm: reversed_generalization} under the Constraint \ref{constraint: trace}? 
\end{problem} 

In the rest of this paper, we focus on solving Problem \ref{problem: ultimate} and its variants.

\section{Obtain the Optimal Noise Covariance with the Greedy Prior}
\label{sec: a_greedy_approach}

The generalization bound in Theorem \ref{thm: reversed_generalization} depends  on both the prior distribution $P$  and the posterior distribution $Q$ (consequently on $\bSig_{[T]}$). Therefore it requires searching $P$ and $\bSig_{[T]}$ jointly to optimize the generalization error. 

In this section, we  solve \textbf{Problem 1}  with greedily selected priors.
Due to that the square root in the bound Eq. (\ref{eq: second_error_bound}) make the dependency on $\bSig_{[T]}$ even more complex, we optimize a slightly different version of the bound by taking the expectation with respect to $\bS_{\bJ^c}$ into the square root, i.e.,    
\begin{equation}
\label{eq: optimizition_target_2}
{\gen}_T \overset{\triangle}{=}   \mathbb{E}_{\bS_{\bJ},\bV_{[T]},\bJ}\sqrt{\frac{(a_2-a_1)^2}{2}\mathbb{E}_{\bS_{\bJ^c}}\operatorname{KL}\left(P^{\bJ,\bS_{\bJ},\bV_{[T]}}\left\Vert Q^{\bS,\bV_{[T]}}\right.\right)}.
\end{equation}
By Jensen's Inequality, Eq. (\ref{eq: optimizition_target_2}) is still a generalization bound, but allows the expectation with respect to $\bS_{\bJ^c}$ to interact with the KL divergence directly.
One can easily observe that $\gen_T$ is a mapping from $P$ and $Q$ (and thus from $P$ and $\bSig_{[T]}$) to a positive real.
Therefore, we can reformulate Problem \ref{problem: ultimate} mathematically as the following optimization problem $ \textbf{(P1)}$:
\begin{equation*}
   \textbf{(P1)}. \min_{P,\bSig_{[T]}} \gen_T(P, \bSig_{[T]}),\text{ subject to: Constraint \ref{constraint: trace}}.
\end{equation*}
For simplicity, we restrict that the considered noise covariance $\bSig_{[T]}$  only depends on the parameter $\bW$, and is independent of the sample $\bS$. By Lemma~\ref{lem: decomposition_kl}, the KL term in the optimization target $\gen_T$ can be decomposed into
\begin{small}
\begin{equation}
\label{eq: decomposition_problem_2}
 \operatorname{KL}\left(P^{\bJ, \bS_{\bJ},\bV_{[T]}}\left\Vert Q^{\bS,\bV_{[T]}}\right.\right)=\sum_{s=1}^T \mathbb{E}_{P_{s-1}^{\bJ, \bS_{\bJ},\bV_{[s-1]}}} \operatorname{KL}\left(P_{s|(s-1)}^{\bJ,\bS_{\bJ},\bV_{s}}\left\Vert Q_{s|(s-1)}^{\bS,\bV_{s}}\right.\right).
\end{equation}
\end{small}Therefore, when optimizing $\gen_T$ with respect to prior $P$, both $\operatorname{KL}(P_{s|(s-1)}^{\bJ, \bS_{\bJ},\bV_{s}}||Q_{s|(s-1)}^{\bS,\bV_{s}})$ and $P_{i-1}^{\bJ,\bS_{\bJ},\bV_{[i-1]}}$ $(i>s)$ have dependence on $P_{s|(s-1)}^{\bJ,\bS_{\bJ},\bV_{s}}$. Similar to the discussion in Section \ref{sec: difficulty_traditional}, the dependence of $P_{i-1}^{\bJ,\bS_{\bJ},\bV_{[i-1]}}$ with $i>s$ on $P_{s|(s-1)}^{\bJ,\bS_{\bJ},\bV_{s}}$ can be very complex. Therefore, we approximate the optimal $P_{s|(s-1)}^{\bJ,\bS_{\bJ},\bV_{s}}$ by the greedy prior which is defined as follows:
\begin{definition}[Greedy Prior]
\label{definition: greedy_prior}
We say $P^*$ is the optimal prior in the greedy sense, or the greedy prior for brevity, if for any $1\le s\le T$ and any $\bS_{\bJ}$ and $\bV_{[T]}$, ${P^*}^{\bJ,\bS_{\bJ},\bV_{s}}_{s|(s-1)}={P^s}^{\bJ,\bS_{\bJ},\bV_{s}}_{s|(s-1)} $, where $P^s$ is defined as follows:
\begin{equation*}
\label{def: greedy_prior}
    P^{s}\overset{\triangle}{=}\arg \min_{P}\left(\min_{\bSig_{[s]}}\gen_s(P, \bSig_{[s]})\right), \text{ subject to: Constraint \ref{constraint: trace}}.
\end{equation*}
\end{definition}

Intuitively, the conditional probability of $P^*$ of the step $s$ is the optimal one if we only consider the generalization bound for steps up to $s$, and a special case is that the step $T$ conditional probability of $P^*$  agrees with the step $T$ conditional probability of $P^T$, which is the desired optimal prior.  This is why we call $P^*$ "greedy" and use it to approximate the optimal prior.


With the greedy prior, we  characterize the optimal noise covariance by the following theorem.
\begin{theorem}
\label{thm: greedy}
Let the iteration of SGLD with state-dependent noise $Q^{\bS,\bV_{[T]}}$ be given as Eq.(\ref{eq:state-dependent SGLD}). Under Constraint \ref{constraint: trace}, the greedy optimal prior of step $t$ is given by
\begin{small}
\begin{equation*}
    \bW_t=\bW_{t-1}-\eta_t \left(\frac{\vert\bV_t \cap \bJ \vert}{\vert\bV_t \vert}\nabla \mathcal{R}_{\bS_{\bV_t \cap \bJ} }\left(\bW_{t-1}\right)+\frac{\vert\bV_t\cap\bJ^c \vert}{\vert\bV_t\vert }\nabla \mathcal{R}_{\mathcal{D} }\left(\bW_{t-1}\right)\right)    + \mathcal{N}\left(\boldsymbol{0},\bSig^*_{t}(\bW_{t-1})\right),
\end{equation*}
\end{small} while the optimal covariance of noise $\bSig^*_{[T]}$ for $\gen_T$ with the greedy prior is given by $ \bSig^{*}_{t}(\bW)=\lambda_t(\bW)\left(\bSig^{pop}_{\bW}\right)^{\frac{1}{2}}$ $(\forall t\in [T])$, where $\lambda_{t}(\bW)=c_t(\bW)/\tr (\left(\bSig^{pop}_{\bW}\right)^{\frac{1}{2}})$.
\end{theorem}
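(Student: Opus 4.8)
The plan is to reduce the joint optimization over $(P,\bSig_{[T]})$ to a sequence of decoupled per-step optimizations, using the structure exposed by Theorem~\ref{thm: reversed_generalization} and the decomposition in Eq.~(\ref{eq: decomposition_problem_2}). First I would fix $\bSig_{[T]}$ and compute the greedy prior step by step. For the step-$s$ conditional KL term $\mathbb{E}_{\bS_{\bJ^c}}\operatorname{KL}(P^{\bJ,\bS_{\bJ},\bV_s}_{s|(s-1)}\|Q^{\bS,\bV_s}_{s|(s-1)})$, the posterior $Q^{\bS,\bV_s}_{s|(s-1)}$ is Gaussian with mean $\bW_{s-1}-\eta_s\nabla\mathcal{R}_{\bS_{\bV_s}}(\bW_{s-1})$ and covariance $\bSig_s(\bW_{s-1})$. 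The prior we are allowed to choose is Gaussian with the same covariance $\bSig_s$ (this is forced once we match covariances — more on that below) and an arbitrary mean $m_s$ measurable w.r.t.\ $(\bJ,\bS_{\bJ},\bV_s,\bW_{s-1})$. Then the KL between two Gaussians with common covariance $\bSig_s$ is $\frac12(m_s-\mu_s^Q)^\top\bSig_s^{-1}(m_s-\mu_s^Q)$, and taking $\mathbb{E}_{\bS_{\bJ^c}}$ and minimizing over $m_s$ is a least-squares problem whose solution is $m_s=\mathbb{E}_{\bS_{\bJ^c}}[\mu_s^Q]$; splitting $\bV_s$ into $\bV_s\cap\bJ$ and $\bV_s\cap\bJ^c$ and using that $\bS_{\bJ^c}$ is an i.i.d.\ sample from $\mathcal{D}$ conditionally, this expectation is exactly the mean displayed in the theorem, with the $\bV_s\cap\bJ^c$ part replaced by $\nabla\mathcal{R}_{\mathcal{D}}(\bW_{s-1})$. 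This recovers the Negrea et al.\ prior.

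Next, with the greedy prior plugged in, the step-$s$ contribution to $\gen_s$ becomes $\frac{\eta_s^2}{2}\mathbb{E}[(\text{residual gradient})^\top\bSig_s^{-1}(\text{residual gradient})]$ where the residual gradient is $\frac{|\bV_s\cap\bJ^c|}{|\bV_s|}(\nabla\mathcal{R}_{\bS_{\bV_s\cap\bJ^c}}(\bW_{s-1})-\nabla\mathcal{R}_{\mathcal{D}}(\bW_{s-1}))$ up to the relevant normalization, plus a term from the $\bJ$-part of the mismatch. Taking the expectation over $\bS_{\bJ^c}$ (and over the mini-batch split) turns the quadratic form into $\tr(\bSig_s^{-1}\bA_s(\bW_{s-1}))$ for some positive semidefinite $\bA_s$ proportional to $\bSig^{pop}_{\bW_{s-1}}$ — this is where the population gradient covariance enters, since the residual is a centered average of i.i.d.\ single-sample gradients. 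So after fixing the greedy prior, optimizing $\gen_s$ over $\bSig_s$ reduces, for each $s$, to the scalar-constrained matrix problem
\begin{equation*}
\min_{\bSig\succeq 0}\ \tr\!\left(\bSig^{-1}\bSig^{pop}_{\bW}\right)\quad\text{subject to}\quad \tr(\bSig)=c.
\end{equation*}

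I would then solve this matrix program by a Cauchy–Schwarz / Lagrange-multiplier argument: writing $\bSig^{pop}_{\bW}=\bU\Lambda\bU^\top$ and noting the objective is minimized by a $\bSig$ diagonal in the same eigenbasis (a standard rearrangement / von Neumann trace inequality step), the problem becomes $\min\sum_i \lambda_i/\sigma_i$ s.t.\ $\sum_i\sigma_i=c$, whose solution by Lagrange multipliers is $\sigma_i\propto\sqrt{\lambda_i}$, i.e.\ $\bSig=\frac{c}{\tr((\bSig^{pop}_{\bW})^{1/2})}(\bSig^{pop}_{\bW})^{1/2}$, which is the claimed form with $\lambda_t(\bW)=c_t(\bW)/\tr((\bSig^{pop}_{\bW})^{1/2})$. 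The remaining loose end is the assumption that the optimal prior has the \emph{same} covariance as the posterior: I would justify this by observing that $\operatorname{KL}(P_{s|(s-1)}\|Q_{s|(s-1)})$ with $Q$ Gaussian of covariance $\bSig_s$ is, over all choices of Gaussian $P$, minimized in its covariance part precisely at $\bSig_s$ (the covariance-mismatch terms $\tr(\bSig_s^{-1}\bSig^P)-\log\det(\bSig_s^{-1}\bSig^P)-d$ are nonnegative and vanish iff $\bSig^P=\bSig_s$), and this holds termwise without affecting the mean optimization; one also checks the greedy construction is self-consistent, i.e.\ the step-$s$ optimal prior does not depend on later $\bSig_{s'}$, which is exactly the decoupling that Theorem~\ref{thm: reversed_generalization} was designed to give (the outer expectations $\mathbb{E}_{P_{s-1}}$ involve only $\bSig_{1:s-1}$).

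The main obstacle I anticipate is the bookkeeping around the mini-batch intersection $\bV_s\cap\bJ$ versus $\bV_s\cap\bJ^c$ and showing cleanly that after taking $\mathbb{E}_{\bS_{\bJ^c}}$ the quadratic form collapses to a trace against a matrix \emph{proportional to} $\bSig^{pop}_{\bW}$ (and not some other data-dependent covariance) — in particular handling the conditioning carefully so that $\bS_{\bJ^c}$ is genuinely i.i.d.\ $\mathcal{D}$ given the rest, and tracking the combinatorial coefficients so the final $\lambda_t(\bW)$ comes out with the stated normalization. The matrix-analysis step (diagonal-in-the-same-basis $+$ Lagrange multiplier) is routine by comparison, as is the Gaussian-KL computation; the subtlety is entirely in justifying that the ``greedy'' per-step optimum is the right object and that the relevant expectations decouple across steps, which is where I would spend most of the care.
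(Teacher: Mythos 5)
Your proposal follows essentially the same route as the paper: decompose $\gen_T$ via Lemma~\ref{lem: decomposition_kl} into per-step conditional KL terms, observe that the reversed order in Theorem~\ref{thm: reversed_generalization} makes each term depend on $\bS_{\bJ^c}$ and on $\bSig_s$ only through that single summand, take the greedy prior to be the Gaussian with mean $\mathbb{E}_{\bS_{\bJ^c}}[\mu_s^Q]$ and matched covariance, reduce the residual to $\tr(\bSig_s^{-1}\bSig^{pop}_{\bW})$ under $\tr(\bSig_s)=c_s$, and solve by eigenbasis alignment plus Cauchy--Schwarz to get $\bSig_s\propto(\bSig^{pop}_{\bW})^{1/2}$ --- this is exactly Lemmas~\ref{lem: optimal_one_kl_formal}, \ref{lem: form after equivalence} and the matrix analysis in the appendix. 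The one place your argument is weaker than the paper's is that you only optimize the prior over Gaussians (via the covariance-mismatch and least-squares steps), whereas the greedy prior is defined as an unrestricted argmin; the paper closes this by showing $\arg\min_P\mathbb{E}_{\bS_{\bJ^c}}\operatorname{KL}(P\,\|\,Q)$ is the normalized geometric mean $\tilde{Q}\propto\exp(\mathbb{E}_{\bS_{\bJ^c}}\log Q)$, which for a Gaussian $Q$ with sample-independent covariance is automatically the Gaussian you posit, so the gap is one standard line.
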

 As the sample size is large enough, we have $ \bSig^{sd}_{\bS,\bW}\rightarrow\bSig^{pop}_{\bW}$ almost surely, which demonstrates the similarity between the solution of \textbf{Problem 1} and the noise covariance of SGD. Also, by the Law of Large Numbers, $\nabla \mathcal{R}_{\bS_{\bJ}}\approx \nabla \mathcal{R}_{\mathcal{D}}$, and the mean of the greedy optimal prior recovers the mean of the prior used in \cite{negrea2019information} (one can also refer to Eq. (\ref{eq: update_prior}) in this paper for the form).

We briefly state the proof skeleton of Theorem \ref{thm: greedy}, with the proof details deferred to Appendix \ref{appen:6}.
To obtain the final optimal noise covariance in Theorem \ref{thm: greedy}, we need to first derive the greedy prior, i.e., the optimal conditional distribution $P_{s|(s-1)}$ of the prior  of step $s$ in terms of the generalization bound $\gen_s$, which has the form
\begin{equation}
\label{eq: greedy_s}
  \mathbb{E}_{\bS_{\bJ},\bV_{[s]},\bJ}\sqrt{\frac{(a_2-a_1)^2}{2}\mathbb{E}_{\bS_{\bJ^c}}\sum_{t=1}^s \mathbb{E}_{P_{t-1}^{\bJ, \bS_{\bJ},\bV_{[t-1]}}} \operatorname{KL}\left(P_{t|(t-1)}^{\bJ,\bS_{\bJ},\bV_{t}}\left\Vert Q_{t|(t-1)}^{\bS,\bV_{t}}\right.\right)}.
\end{equation}
Typically, solving the optimal $P_{s|(s-1)}$ requires optimizing  $\gen_s$ with respect to all $P_{t|(t-1)}$ $t\in [s]$ and $\bSig_{[s]}$, which is still very complex. However, we can tackle this problem in a rather elegant way. We first investigate the optimal noise covariance in terms of a single KL divergence term in $\gen_s$.

\begin{lemma}
\label{lem: optimal_one_kl_formal}
Under Constraint \ref{constraint: trace}, the optimal noise covariance of the following problem
\begin{equation}
\label{eq: target_single_kl}
    \min_{\bSig_s} \left(\min_{{P}_{s|(s-1)}^{\bJ,\bS_{\bJ},\bV_{s}}}\mathbb{E}_{\bS_{\bJ^c}\sim \mathcal{D}} \operatorname{KL}\left({P}_{s|(s-1)}^{\bJ,\bS_{\bJ},\bV_{s}}\left\Vert Q_{s|(s-1)}^{\bS,\bV_{s}}\right.\right)\right).
\end{equation}

is attained at $ \bSig^{*}_{t}(\bW)$, where $ \bSig^{*}_{t}(\bW)$ is defined as Theorem \ref{thm: greedy}.

\end{lemma}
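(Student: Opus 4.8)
The plan is to reduce the statement to an explicit Gaussian computation and then a single matrix inequality. First, note that given $\bW_{s-1}$, $\bS$ and $\bV_s$, the SGLD update Eq.~(\ref{eq:state-dependent SGLD}) makes $Q_{s|(s-1)}^{\bS,\bV_s}$ the Gaussian $\mathcal{N}(\vmu_Q,\bSig_s(\bW_{s-1}))$ with mean $\vmu_Q=\bW_{s-1}-\eta_s\nabla\mathcal{R}_{\bS_{\bV_s}}(\bW_{s-1})$; under the standing restriction $\bSig_s$ does not depend on $\bS$, so only $\vmu_Q$ carries a dependence on the held-out point $\bS_{\bJ^c}$, and it does so only through the single summand of the minibatch gradient indexed by $\bJ^c$ (present iff $\bJ^c\in\bV_s$).

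Second, I would solve the inner minimization over the prior conditional $P_{s|(s-1)}$. Since $P_{s|(s-1)}^{\bJ,\bS_{\bJ},\bV_s}$ may depend on everything except $\bS_{\bJ^c}$, minimizing $\mathbb{E}_{\bS_{\bJ^c}}\operatorname{KL}(P_{s|(s-1)}\Vert Q_{s|(s-1)}^{\bS,\bV_s})$ over all candidate distributions is an information-projection problem: using the identity $\mathbb{E}_{\bS_{\bJ^c}}\operatorname{KL}(P\Vert Q)=\operatorname{KL}(P\Vert\bar Q)+\mathrm{const}$ with $\bar Q\propto\exp(\mathbb{E}_{\bS_{\bJ^c}}[\log Q_{s|(s-1)}^{\bS,\bV_s}])$, the optimum is $P^*_{s|(s-1)}=\bar Q$. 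Because $Q_{s|(s-1)}^{\bS,\bV_s}$ is Gaussian with covariance independent of $\bS_{\bJ^c}$, completing the square shows $\bar Q=\mathcal{N}(\mathbb{E}_{\bS_{\bJ^c}}[\vmu_Q],\bSig_s(\bW_{s-1}))$ is again Gaussian — this recovers the prior mean claimed in Theorem~\ref{thm: greedy} and shows the optimal prior shares the posterior covariance. Substituting back into the Gaussian KL formula, the objective collapses to $\min_{P_{s|(s-1)}}\mathbb{E}_{\bS_{\bJ^c}}\operatorname{KL}=\tfrac12\tr\big(\bSig_s(\bW_{s-1})^{-1}\Cov_{\bS_{\bJ^c}}[\vmu_Q]\big)$, and since $\Cov_{\bS_{\bJ^c}\sim\mathcal{D}}[\vmu_Q]=\tfrac{\eta_s^2\,\vert\bV_s\cap\bJ^c\vert}{\vert\bV_s\vert^2}\,\bSig^{pop}_{\bW_{s-1}}$ by definition of $\bSig^{pop}$, this equals a nonnegative constant times $\tr(\bSig_s(\bW_{s-1})^{-1}\bSig^{pop}_{\bW_{s-1}})$.

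Third, I would carry out the outer minimization over $\bSig_s$. Writing $M\overset{\triangle}{=}\bSig^{pop}_{\bW_{s-1}}$, we must minimize $\tr(\bSig_s^{-1}M)$ over positive definite $\bSig_s$ with $\tr\bSig_s=c_s(\bW_{s-1})$ (Constraint~\ref{constraint: trace}). This is a classical matrix problem handled by Cauchy–Schwarz in the trace inner product: with $X=\bSig_s^{-1/2}M^{1/2}$ and $Y=\bSig_s^{1/2}$ we get $\big(\tr(M^{1/2})\big)^2=\big(\tr(X^\top Y)\big)^2\le\tr(X^\top X)\,\tr(Y^\top Y)=\tr(\bSig_s^{-1}M)\,\tr\bSig_s$, hence $\tr(\bSig_s^{-1}M)\ge\big(\tr(M^{1/2})\big)^2/c_s(\bW_{s-1})$ with equality iff $X\propto Y$, i.e.\ $\bSig_s\propto M^{1/2}$; the trace constraint then pins down $\bSig_s=\frac{c_s(\bW_{s-1})}{\tr((\bSig^{pop}_{\bW_{s-1}})^{1/2})}(\bSig^{pop}_{\bW_{s-1}})^{1/2}=\bSig^*_s(\bW_{s-1})$, exactly the matrix of Theorem~\ref{thm: greedy}. (When $\bJ^c\notin\bV_s$ the objective is constant in $\bSig_s$, so $\bSig^*_s$ is trivially a minimizer; when $\bSig^{pop}_{\bW_{s-1}}$ is singular one restricts to its range and argues identically.)

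The main obstacle I anticipate is the second step — rigorously establishing that the optimal prior conditional is the Gaussian $\mathcal{N}(\mathbb{E}_{\bS_{\bJ^c}}[\vmu_Q],\bSig_s)$. The subtlety is that the expectation over $\bS_{\bJ^c}$ sits \emph{inside} the KL, so one is not projecting onto a fixed distribution; what makes the argument go through is the information-projection identity together with the fact that averaging a Gaussian log-density with fixed covariance over its latent mean parameter produces, up to an additive constant, another Gaussian log-density, which then forces the collapse to $\tfrac12\tr(\bSig_s^{-1}\Cov_{\bS_{\bJ^c}}[\vmu_Q])$. Integrability of $\mathbb{E}_{\bS_{\bJ^c}}[\log Q_{s|(s-1)}^{\bS,\bV_s}]$ and nondegeneracy of $\bSig_s$ should be recorded as mild hypotheses. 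Once that is in place, the first and third steps are routine (the Gaussian KL formula and a Cauchy–Schwarz computation).
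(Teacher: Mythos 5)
Your proposal is correct and its backbone coincides with the paper's: the inner minimization over the prior is solved by exactly the same information-projection identity (the optimal $P_{s|(s-1)}$ is the normalized $\exp(\mathbb{E}_{\bS_{\bJ^c}}\log Q_{s|(s-1)}^{\bS,\bV_s})$, which for a Gaussian posterior with $\bS$-independent covariance is again Gaussian with the averaged mean and the same covariance), the objective collapses to a nonnegative multiple of $\tr(\bSig_s^{-1}\bSig^{pop}_{\bW_{s-1}})$, and the degenerate case $\bV_s\cap\bJ^c=\emptyset$ is handled identically. Where you genuinely diverge is the outer matrix minimization. The paper splits it into two lemmas: a perturbation argument (multiplying by $\mathbb{I}-\varepsilon\boldsymbol{A}$ for antisymmetric $\boldsymbol{A}$) showing that the optimal $\bSig_s$ must commute with, and share ordered eigenvectors with, $\bSig^{pop}$ (Lemma~\ref{lem: ortho_matrix}), followed by a scalar Cauchy--Schwarz on the eigenvalues. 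Your single trace-inner-product Cauchy--Schwarz with $X=\bSig_s^{-1/2}(\bSig^{pop})^{1/2}$, $Y=\bSig_s^{1/2}$ obtains both the eigenvector alignment and the eigenvalue ratio at once from the equality condition $X\propto Y$, which is cleaner and shorter; the paper's two-step route is heavier here but its Lemma~\ref{lem: ortho_matrix} is reused for Theorem~\ref{thm: main_theorem}, where the objective contains an extra $\ln\det\bSig_s$ term and the one-shot Cauchy--Schwarz no longer applies, so the modularity is not wasted. The only points to record in your write-up are the ones you already flagged (integrability and nondegeneracy), plus fixing the convention that $(\cdot)^{1/2}$ denotes the symmetric PSD square root so that the equality case of Cauchy--Schwarz pins down $\bSig_s$ uniquely.
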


By Lemma \ref{lem: optimal_one_kl_formal}, the optimal solution of Eq. (\ref{eq: target_single_kl}) doesn't rely on $\bJ,\bS_{\bJ}$, or $\bV_{[s]}$. On the other hand, by Eq. (\ref{eq: greedy_s}), 
$\mathbb{E}_{\bS_{\bJ^c}\sim \mathcal{D}} \operatorname{KL}\left({P}_{s|(s-1)}^{\bJ,\bS_{\bJ},\bV_{s}}\left\Vert Q_{s|(s-1)}^{\bS,\bV_{s}}\right.\right)$ $(\forall \bJ,\bS_{\bJ},\bV_{[s]})$ are the only terms depending on ${P}_{s|(s-1)}^{\bJ,\bS_{\bJ},\bV_{s}}$ and $\bSig_s$. Therefore, the optimal $\bSig_s$ is also $\lambda_t(\bW)\left(\bSig^{pop}_{\bW}\right)^{\frac{1}{2}}$, which is formally stated as the following lemma.
\begin{lemma}
\label{lem: form after equivalence}
The optimal $\bSig_s$ and $P_{s|(s-1)}$ in terms of $\gen_{s}$ are the same as $\bSig^*_s$ and $P^*_{s|(s-1)}$ given by Theorem \ref{thm: greedy}, respectively.
\end{lemma}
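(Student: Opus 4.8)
The plan is to reduce the joint minimization of $\gen_s$ over all prior conditionals $P_{t|(t-1)}$, $t\in[s]$, and all noise covariances $\bSig_{[s]}$ to the single-term problem already solved in Lemma~\ref{lem: optimal_one_kl_formal}, by exploiting that the pair $(\bSig_s, P_{s|(s-1)})$ enters $\gen_s$ through exactly one summand of the decomposition~(\ref{eq: greedy_s}).

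First I would expand $\gen_s$ via Lemma~\ref{lem: decomposition_kl} into the form~(\ref{eq: greedy_s}) and track the dependence term by term. The covariance $\bSig_s$ affects only the step-$s$ posterior kernel $Q_{s|(s-1)}^{\bS,\bV_s}$ (not $Q_{t|(t-1)}^{\bS,\bV_t}$ for $t<s$); the conditional $P_{s|(s-1)}^{\bJ,\bS_\bJ,\bV_s}$ appears only inside the divergence $\operatorname{KL}(P_{s|(s-1)}^{\bJ,\bS_\bJ,\bV_s}\Vert Q_{s|(s-1)}^{\bS,\bV_s})$; and each prior marginal $P_{t-1}^{\bJ,\bS_\bJ,\bV_{[t-1]}}$ with $t\le s$ is a function of $\mathcal{W}_0$ and $P_{1|0},\dots,P_{t-1|t-2}$ only, hence is independent of both $\bSig_s$ and $P_{s|(s-1)}$. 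Moreover, since the prior algorithm $\mathcal{B}$ only sees $\bS_\bJ$, the marginal $P_{s-1}^{\bJ,\bS_\bJ,\bV_{[s-1]}}$ does not depend on $\bS_{\bJ^c}$, so $\mathbb{E}_{\bS_{\bJ^c}}$ commutes with $\mathbb{E}_{P_{s-1}}$ in the step-$s$ summand. Therefore, for any fixed choice of $(\bSig_t,P_{t|(t-1)})_{t<s}$ and any realization of $\bS_\bJ,\bV_{[s]},\bJ$, the quantity under the square root in~(\ref{eq: greedy_s}) equals a constant (the $t<s$ terms) plus $\tfrac{(a_2-a_1)^2}{2}\,\mathbb{E}_{P_{s-1}}\big[\mathbb{E}_{\bS_{\bJ^c}}\operatorname{KL}(P_{s|(s-1)}^{\bJ,\bS_\bJ,\bV_s}\Vert Q_{s|(s-1)}^{\bS,\bV_s})\big]$, in which only $(\bSig_s,P_{s|(s-1)})$ are free.

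Next I would minimize this remaining term pointwise in $\bW_{s-1}$. Since $\bSig_s(\cdot)$ and the kernel $P_{s|(s-1)}(\cdot\mid\bW_{s-1})$ may be chosen as arbitrary functions of $\bW_{s-1}$ subject only to $\operatorname{tr}(\bSig_s(\bW_{s-1}))=c_s(\bW_{s-1})$ (Constraint~\ref{constraint: trace}), minimizing $\mathbb{E}_{P_{s-1}}[\mathbb{E}_{\bS_{\bJ^c}}\operatorname{KL}(\cdots)]$ amounts to minimizing $\mathbb{E}_{\bS_{\bJ^c}}\operatorname{KL}(P_{s|(s-1)}^{\bJ,\bS_\bJ,\bV_s}\Vert Q_{s|(s-1)}^{\bS,\bV_s})$ separately at each value $\bW=\bW_{s-1}$ in the support of $P_{s-1}$; this is exactly problem~(\ref{eq: target_single_kl}). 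Lemma~\ref{lem: optimal_one_kl_formal} then identifies the minimizer: $\bSig_s(\bW)=\bSig^*_s(\bW)=\lambda_s(\bW)\big(\bSig^{pop}_{\bW}\big)^{1/2}$, together with the corresponding optimal conditional, which is precisely $P^*_{s|(s-1)}$ of Theorem~\ref{thm: greedy}. The decisive feature is that this minimizer depends neither on $\bJ$, $\bS_\bJ$, $\bV_{[s]}$, nor on the earlier choices $(\bSig_t,P_{t|(t-1)})_{t<s}$: hence the single pair $(\bSig^*_s, P^*_{s|(s-1)})$ simultaneously minimizes the integrand of the outer expectation $\mathbb{E}_{\bS_\bJ,\bV_{[s]},\bJ}$ for every realization, and because $x\mapsto\sqrt{x}$ and the remaining expectations are monotone in nonnegative arguments, it minimizes $\gen_s$ itself. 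Since this holds regardless of $(\bSig_t,P_{t|(t-1)})_{t<s}$, it holds in particular at any global minimizer of $\gen_s$; thus the step-$s$ components of any minimizer of $\gen_s$ are $\bSig^*_s$ and $P^*_{s|(s-1)}$, and the latter is, by Definition~\ref{definition: greedy_prior}, the step-$s$ conditional of the greedy prior, as claimed. (Where Lemma~\ref{lem: optimal_one_kl_formal} admits ties, one reads $\bSig^*_s$ and $P^*_{s|(s-1)}$ as the distinguished representatives of the optimal sets.)

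I expect the main obstacle to be the bookkeeping in the second paragraph --- namely the careful verification that $P_{s-1}$ depends neither on $(\bSig_s,P_{s|(s-1)})$ nor on $\bS_{\bJ^c}$, and that no summand other than the step-$s$ one involves $(\bSig_s,P_{s|(s-1)})$ --- since this ``separation of variables'' is precisely what licenses reducing the full optimization to the single-term problem of Lemma~\ref{lem: optimal_one_kl_formal}; the remainder is then a direct appeal to that lemma together with the data- and history-independence of its minimizer.
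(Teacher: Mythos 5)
Your proposal is correct and follows essentially the same route as the paper's proof: decompose $\gen_s$ via Lemma~\ref{lem: decomposition_kl}, observe that $(\bSig_s,P_{s|(s-1)})$ enters only the step-$s$ summand while the prior marginals $P_{t-1}$ depend neither on that pair nor on $\bS_{\bJ^c}$, push the minimization inside the expectations and the square root, and invoke Lemma~\ref{lem: optimal_one_kl_formal} together with the independence of its minimizer from $\bJ$, $\bS_{\bJ}$, $\bV_s$ to conclude that the pointwise minimum is simultaneously attainable. The only cosmetic difference is that the paper first fixes the $t<s$ components at their optimal values before isolating the step-$s$ term, whereas you argue directly that the step-$s$ minimizer is universal across all earlier choices; both yield the same conclusion.
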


With the greedy prior derived, we apply it back to the generalization bound $\gen_{T}$. As $\gen_{T}$ depends on $\bSig_s$ also through $\mathbb{E}_{\bS_{\bJ^c}\sim \mathcal{D}} \operatorname{KL}\left({P}_{s|(s-1)}^{\bJ,\bS_{\bJ},\bV_{s}}\left\Vert Q_{s|(s-1)}^{\bS,\bV_{s}}\right.\right)$, by applying Lemma \ref{lem: optimal_one_kl_formal} again, we derive Theorem \ref{thm: greedy}.

\section{Extension: Optimal Noise Covariance with Fixed Priors}
\label{sec: optimization_fixed_prior}

In existing works \cite{negrea2019information, Neu2021InformationTheoreticGB}, the prior distribution is set to be the SGLD with isotropic noise, which (with the notations in Theorem \ref{thm: reversed_generalization})  is given by 
\begin{small}
\begin{align}
\nonumber
    &\tilde{\mathcal{M}}_{t}(\bW_{t-1},\bS_{\bJ},\bV_t,\bJ)
    \\
\label{eq: update_prior}
    &~~~~~~~~~=\bW_{t-1}-\eta_t \left(\frac{\vert\bV_t \cap \bJ \vert}{\vert\bV_t \vert}\nabla \mathcal{R}_{\bS_{\bV_t \cap \bJ} }\left(\bW_{t-1}\right)+\frac{\vert\bV_t \cap \bJ^c \vert}{\vert\bV_t\vert }\nabla \mathcal{R}_{\bS_{ \bJ} }\left(\bW_{t-1}\right)\right)    + \mathcal{N}\left(\boldsymbol{0},\sigma_t \mathbb{I}_d\right),
\end{align}
\end{small}
\noindent where $\sigma_t>0$ is the noise scale of prior noise covariance. In our latter analysis, we generalize the prior by allowing $\sigma_t$ depend on $\bW_{t-1}$. The formal description of the iteration of the prior is deferred to Appendix \ref{sec: descri_algorithm} for completeness.

Therefore, it is also interesting to see what the optimal noise covariance looks like if the prior is fixed as the one commonly adopted in the existing analyses, e.g., Eq. (\ref{eq: update_prior}). We still set the optimization constraint the same as Section \ref{sec: a_greedy_approach}, but change the optimization target a little to 
\begin{equation*}
    \widetilde{\gen}_T\overset{\triangle}{=}\mathbb{E}_{\bS}\sqrt{\frac{(a_2-a_1)^2}{2}\mathbb{E}_{\bV_{[T]},\bJ}\operatorname{KL}\left(P^{\bJ,\bS_{\bJ},\bV_{[T]}}\left\Vert Q^{\bS,\bV_{[T]}}\right.\right)}.
\end{equation*}
By Jensen's Inequality and Theorem \ref{thm: reversed_generalization}, $\widetilde{\gen}_T$ is still a generalization bound, but allows us to treat the expectation of the KL divergence with respect to $\bV_{[T]}$ and $\bJ$ for given $\bS$ as a whole in optimization. Similar trick is also adopted in \cite{negrea2019information} to obtain the final generalization error of SGLD.  The problem can be mathematically formulated as the following optimization problem $\textbf{(P2)}$:
\begin{equation*}
   \textbf{(P2)} \;\;\;\;\;
   \min_{\bSig_{[T]}} \widetilde{\gen}_T(P, \bSig_{[T]}),\text{ subject to: Constraint \ref{constraint: trace}},
\end{equation*}
where $P$ is given by the update rule Eq. (\ref{eq: update_prior}). As $\bS_{\bJ}$ is obtained by removing only one sample from the dataset $\bS$ and the size of $\bS$ is usually large in practice, it is reasonable to make the assumption that for any fixed $\bV_{[T]}$ and $\bS$, the prior distribution is the same regardless of $\bJ$.
\begin{assumption}
\label{assum: j_invariant}
For any fixed dataset $\bS$ and mini-batches $\bV_{[T]}$, the distribution $P^{\bJ,\bS_{\bJ},\bV_{[T]}}$ is invariant of $\bJ$.
\end{assumption}
We also restrict that $c_t(\bS,\bW) \le d\sigma_t$ in order to guarantee the noise scale of the prior  comparable to that of the posterior. 
The optimal noise covariance with prior fixed can then be characterized by the following theorem.
\begin{theorem}
\label{thm: main_theorem}
Let prior and posterior be defined  as Eq.(\ref{eq: update_prior}) and Eq.(\ref{eq:state-dependent SGLD}), respectively. Let Assumption \ref{assum: j_invariant} hold. Then, the solution of $\textbf{(P2)}$ is given by
\begin{small}
\begin{equation*}
    \bSig^{*}_t(\bS,\bW)= \boldsymbol{O}_{\bS,\bW}^{sd} \Diag (\tilde{\omega}^{\bS,\bW}_1,\cdots,\tilde{\omega}^{\bS,\bW}_d)\left(\boldsymbol{O}_{\bS,\bW}^{sd}\right)^\top,
\end{equation*}
\end{small}
where $\tilde{\omega}^{\bS,\bW}_i\geq 0$ $(i\in [d])$ (the exact form is omitted here) and
$\left(\boldsymbol{O}_{\bS,\bW}^{sd}\right)$ is the orthogonal matrix that diagonalizes $\bSig^{sd}_{\bS,\bW}$ as
    $\bSig^{sd}_{\bS,\bW}=\boldsymbol{O}_{\bS,\bW}^{sd}\Diag (\omega^{\bS,\bW}_1,\cdots,\omega^{\bS,\bW}_d)\left(\boldsymbol{O}_{\bS,\bW}^{sd}\right)^\top.$
Moreover, for any $i\ne j$, $\tilde{\omega}^{\bS,\bW}_i\geq\tilde{\omega}^{\bS,\bW}_j$  if and only if $\omega^{\bS,\bW}_i\geq\omega^{\bS,\bW}_j$.
\end{theorem}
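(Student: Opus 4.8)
The plan is to peel the bound apart with the KL chain rule so that each noise covariance is decoupled, reduce the resulting per-step problem to a single matrix optimization that is free of the auxiliary randomness, and then solve that matrix problem via a rearrangement (trace) inequality together with a Lagrange / implicit-function argument.

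\textbf{Step 1: decoupling.} Since $\sqrt{\cdot}$ and $\mathbb{E}_{\bS}$ are nondecreasing, it suffices to minimise, for each fixed $\bS$, the quantity $\mathbb{E}_{\bV_{[T]},\bJ}\operatorname{KL}(P^{\bJ,\bS_{\bJ},\bV_{[T]}}\|Q^{\bS,\bV_{[T]}})$. Apply Lemma~\ref{lem: decomposition_kl}: this becomes $\sum_{s=1}^{T}\mathbb{E}_{\bV_{[s]},\bJ}\,\mathbb{E}_{\bW_{s-1}\sim P_{s-1}}[\operatorname{KL}(P_{s|(s-1)}\|Q_{s|(s-1)})]$, and because $P$ (given by Eq.~(\ref{eq: update_prior})) is fixed, $\bSig_s$ influences only the $s$-th summand, and only through the covariance of the Gaussian $Q_{s|(s-1)}$. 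The mini-batches are drawn freshly and independently of the past and of $\bJ$, and by Assumption~\ref{assum: j_invariant} the law of $\bW_{s-1}$ does not depend on $\bJ$; consequently, conditioned on $\bW_{s-1}=\bW$, the pair $(\bV_s,\bJ)$ still has its prior (uniform) law and the KL integrand does not depend on $\bV_{[s-1]}$. Hence the $s$-th summand equals $\mathbb{E}_{\bW_{s-1}}[\Phi_s(\bW_{s-1};\bSig_s(\bS,\bW_{s-1}))]$ with $\Phi_s(\bW;\bSig)=\mathbb{E}_{\bV_s,\bJ}\operatorname{KL}(\mathcal{N}(\mu_P,\sigma_s(\bW)\mathbb{I})\,\|\,\mathcal{N}(\mu_Q,\bSig))$, which no longer involves any trajectory distribution. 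It is therefore enough to minimise $\Phi_s(\bW;\cdot)$ over $\{\bSig\succ 0:\tr\bSig=c_s(\bS,\bW)\}$ separately for each $s$ and each $\bW$, and the resulting minimiser, depending only on $(\bS,\bW)$, is admissible.

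\textbf{Step 2: computing $\Phi_s$.} Expanding the Gaussian KL divergence, the $\bSig$-dependent part of $\Phi_s(\bW;\bSig)$ is $\tfrac12[\tr(\bSig^{-1}(\sigma_s(\bW)\mathbb{I}+\bG_s))+\ln\det\bSig]$ up to an additive constant, where $\bG_s=\mathbb{E}_{\bV_s,\bJ}[(\mu_Q-\mu_P)(\mu_Q-\mu_P)^\top]$. Because $\mathcal{A}$ and $\mathcal{B}$ use the same mini-batch $\bV_s$, the $\nabla\mathcal{R}_{\bS_{\bV_s\cap\bJ}}$ terms cancel and $\mu_Q-\mu_P=\pm\,\eta_s\tfrac{|\bV_s\cap\bJ^c|}{|\bV_s|}(\nabla\mathcal{R}_{\bz_{j_0}}(\bW)-\nabla\mathcal{R}_{\bS_{\bJ}}(\bW))$, where $\{j_0\}=\bJ^c$; rewriting $\nabla\mathcal{R}_{\bS_{\bJ}}$ through $\nabla\mathcal{R}_{\bS}$ and averaging over the uniform $j_0$ and $\bV_s$ gives $\bG_s=\gamma_s\,\bSig^{sd}_{\bS,\bW}$ for an explicit scalar $\gamma_s>0$ (depending only on $\eta_s,|\bV_s|,N$). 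Hence $\sigma_s(\bW)\mathbb{I}+\bG_s$ is diagonalised by $\boldsymbol{O}^{sd}_{\bS,\bW}$ with eigenvalues $\mu_i=\sigma_s(\bW)+\gamma_s\,\omega^{\bS,\bW}_i$, strictly increasing in $\omega^{\bS,\bW}_i$.

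\textbf{Step 3: the matrix problem.} It remains to show: for any $M\succ0$ with eigenvalues $\mu_1\ge\cdots\ge\mu_d>0$ and any $0<c\le\tr M$, the minimiser of $g(\bSig)=\tr(\bSig^{-1}M)+\ln\det\bSig$ over $\{\bSig\succ0:\tr\bSig=c\}$ is unique, commutes with $M$, and has eigenvalues (in the eigenbasis of $M$) strictly increasing in $\mu_i$. Existence: $g\to+\infty$ whenever an eigenvalue of $\bSig$ tends to $0$, and the trace constraint keeps all eigenvalues $\le c$, so a minimiser exists in the interior. Commutation: fixing the spectrum of $\bSig$ leaves $\ln\det\bSig$ and $\tr\bSig$ unchanged while $\tr(\bSig^{-1}M)$ is minimised, by von Neumann's trace (rearrangement) inequality, exactly when $\bSig$ and $M$ are simultaneously diagonalisable with eigenvalues matched in the same order. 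The problem collapses to minimising $\sum_i(\mu_i/s_i+\ln s_i)$ over $s_i>0$ with $\sum_i s_i=c$. A first-order swap argument shows $s_i^*\le\mu_i$ for all $i$ at the optimum (if $s_i^*>\mu_i$ then, since $\sum s_i^*=c\le\sum\mu_i$, some $s_j^*<\mu_j$, and transferring mass from $i$ to $j$ strictly decreases the objective). On $\{s_i<2\mu_i\ \forall i\}$ the objective is strictly convex, so the optimum is the unique critical point there. Its stationarity condition $(s_i-\mu_i)/s_i^2=\nu$ with the same $\nu\le0$ for all $i$, together with the fact that $(s,\mu)\mapsto(s-\mu)/s^2$ is strictly increasing in $s$ on $(0,\mu]$ and strictly decreasing in $\mu$, forces $\mu_i>\mu_j\Rightarrow s_i^*>s_j^*$ and $\mu_i=\mu_j\Rightarrow s_i^*=s_j^*$. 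Applying this with $M=\sigma_s(\bW)\mathbb{I}+\gamma_s\bSig^{sd}_{\bS,\bW}$ and $c=c_s(\bS,\bW)\le d\sigma_s(\bW)\le\tr M$ yields $\bSig^*_s(\bS,\bW)=\boldsymbol{O}^{sd}_{\bS,\bW}\Diag(\tilde\omega^{\bS,\bW}_1,\dots,\tilde\omega^{\bS,\bW}_d)(\boldsymbol{O}^{sd}_{\bS,\bW})^\top$ with $\tilde\omega^{\bS,\bW}_i\ge0$ and with the ordering of $(\tilde\omega^{\bS,\bW}_i)$ matching that of $(\omega^{\bS,\bW}_i)$, which is the claim.

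\textbf{Expected obstacle.} The subtle point is Step~1: justifying that the minimisation genuinely decouples over steps and over the value of $\bW_{s-1}$, which relies essentially on Assumption~\ref{assum: j_invariant} and the independence of the mini-batches (otherwise the weights $P_{s-1}^{\bJ,\bS_{\bJ},\bV_{[s-1]}}$ tie the choice of $\bSig_s$ to $\bJ$ and to the earlier covariances), together with the branch selection in Step~3 (ruling out the spurious stationary point with $s_i^*>2\mu_i$), which is exactly what makes the monotonicity conclusion clean. The Gaussian-KL expansion, the computation of $\bG_s$, and the trace inequality are routine.
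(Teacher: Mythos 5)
Your proposal is correct and follows the same overall route as the paper: decompose the reversed KL via Lemma~\ref{lem: decomposition_kl}, use Assumption~\ref{assum: j_invariant} to pull the expectation over $(\bV_s,\bJ)$ inside the fixed prior trajectory measure, reduce the per-step expected Gaussian KL to $\tfrac12[\tr(\bSig_s^{-1}(\sigma_s\mathbb{I}+\gamma_s\bSig^{sd}_{\bS,\bW}))+\ln\det\bSig_s]$ with $\gamma_s=\tfrac{\eta_s^2}{Nb_s}(\tfrac{N}{N-1})^2$ (this is exactly the paper's Lemma~\ref{lem: single_step_form_kl_general}, with your $\bG_s$ computation matching Lemma~\ref{lem: sampling}), and then solve $\min_{\tr\bSig=c}\tr(\bSig^{-1}M)+\ln\det\bSig$. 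The only genuine divergence is in how you handle that matrix problem (the paper's Lemma~\ref{lem: matrix_opt}): you invoke von Neumann's trace/rearrangement inequality to get simultaneous diagonalization with matched eigenvalue order, where the paper derives commutation from a first-order antisymmetric perturbation of the optimum followed by a swap argument; and for the eigenvalues you use a mass-transfer argument to establish $s_i^*\le\mu_i$ plus strict convexity on $\{s_i<2\mu_i\}$ to get uniqueness and monotonicity from the stationarity condition $(s_i-\mu_i)/s_i^2=\nu$, where the paper writes the Lagrange conditions and reads the ordering off the explicit closed form $\alpha_i^*=2\beta_i/(1+\sqrt{1-4\lambda^*\beta_i})$. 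Both deliver the same conclusion; your von Neumann route is the more standard and compact way to get the eigenbasis alignment, while the paper's closed-form root makes the monotonicity in $\mu_i$ immediate (and, since $\nu<0$ leaves only one positive root of the stationarity quadratic, the ``spurious branch'' you guard against does not actually arise). Your use of $c_t\le d\sigma_t\le\tr M$ to ensure $\nu\le 0$ is exactly the role that hypothesis plays in the paper.
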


The proof together with the the exact formula of $\tilde{\omega}_i^{\bS,\bW}$ is deferred to Appendix \ref{sec: optimization_fixed_prior}. 
From Theorem \ref{thm: main_theorem}, the optimal point $(\bSig^*_t)_{t=1}^T$  is similar to the empirical gradient covariance matrix $\bSig^{sd}$ in two ways: 1) $\{\bSig^*_t\}_{t=1}^T$ share the same eigenvectors with $\bSig^{sd}$; 2) the corresponding eigenvalues of $\bSig^*_t$ has the same order as $\bSig^{sd}$.
Though the value of $\tilde \omega_{i}^{S, W}$ is not comparable to  $ \omega_{i}^{S, W}$ because  $\Sigma_t^*$ is affected by the prior noise and the  posterior noise covariance, which are freely chosen, it can be shown that the condition number of $\Sigma_t^*$ is smaller than $\Sigma^{sd}_{S,W}$ (please refer to Appendix \ref{appen: condition_number} for details).

Theorem \ref{thm: main_theorem} also reveals an interesting correlation between the noise covariance matrices of the prior and posterior. While the optimal noise covariance is affected by the prior noise covariance, it also depends on the distance  between the means of the prior and the posterior (see Lemma \ref{lem: single_step_form_kl_general} in Appendix \ref{appen: single_step_kl}), which brings the information of empirical gradient covariance into the optimal noise structure. That being said, the optimal posterior noise covariance is biased to the empirical gradient covariance from prior covariance. We note that such analysis can be easily extended to arbitrary priors.

\section{Empirical Verification}
\label{sec: experiments}

In this section, we support our theoretical findings with some experiments. We adopt the setting in \cite{zhu2018anisotropic} where a four-layer neural network with $11330$ parameters is used to conduct the classification task on the Fashion-MNIST except that we use 10000 training samples instead of 1200  used in \cite{zhu2018anisotropic}. We defer detailed settings of the experiments to Appendix \ref{appen: experiment}.

We first verify if the empirical gradient covariance is  far from isotropic Gaussian distribution. We plot the ratios of the 1st eigenvalue to the $500$th largest eigenvalue of the empirical gradient covariance  
along the training trajectory of both SGD and Iso-SGLD in Figure \ref{fig:ratio}.  We can see that throughout the training procedure, the ratios of empirical gradient covariance stay around $10^7$ for both SGD and Iso-SGLD. In constrast, for the isotropic Gaussian, the distribution of eigenvalues follows semi-circle law, and the ratio would be around $1.1$ for dimension $11330$. This demonstrates that the energy of empirical gradient covariance concentrates in a very small subspace, less than $5\%$ of the total $11330$ dimensions. Hence the empirical gradient covariance is highly anisotropic.

We next verify 
our main claim Theorem \ref{thm: greedy} that under Constraint \ref{constraint: trace}, the generalization error for SGLD with the optimal noise covariance is much smaller than that for Iso-SGLD. Specifically, we consider the  SGLD with noise covariance equal to the (scaled) square root of the empirical gradient covariance, coined ``SREC-SGLD''. We compare the training loss and the generalization error of SREC-SGLD and Iso-SGLD under Constraint \ref{constraint: trace} on the noise covariance trace. From Figure \ref{fig: test}, we can see that the generalization error of  SREC-SGLD is smaller than Iso-SGLD for different noise trace scales, which supports Theorem \ref{thm: greedy}. Moreover, if we look at the training loss curves, the SREC-SGLD and Iso-SGLD behave almost the same for the same noise trace scale, which supports Lemma \ref{lem:state-dependent sgld optimization}. 

Finally though our analysis does not cover the generalization bound for SGD, we further empirically show how SREC-SGLD performs in comparison with SGD and the Iso-SGLD in Figure \ref{fig:compare-with-sgd}. We can see that with the same noise trace scale, SREC indeed provides a more accurate characterization of SGD compared to Iso-SGLD.

\begin{figure}
\centering
    \begin{minipage}{0.64\textwidth}
        \centering
        \begin{subfigure}{.45\textwidth}
          \centering
          \includegraphics[width=1.0\textwidth]{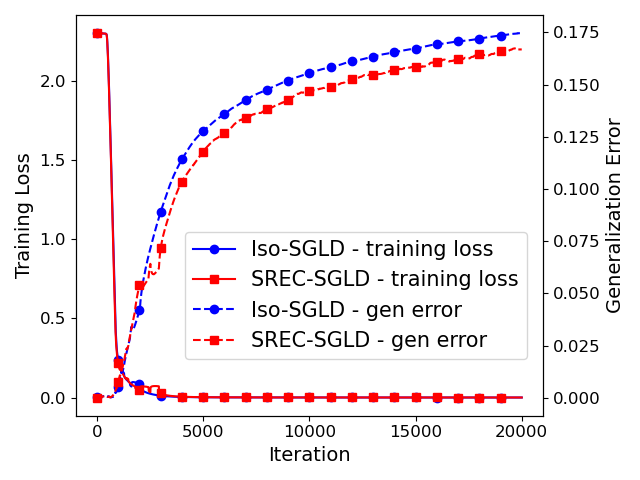}
          \caption{$\text{trace scale}=1$}
          \label{fig:sub1}
        \end{subfigure}%
        \begin{subfigure}{.45\textwidth}
          \centering
          \includegraphics[width=1.0\textwidth]{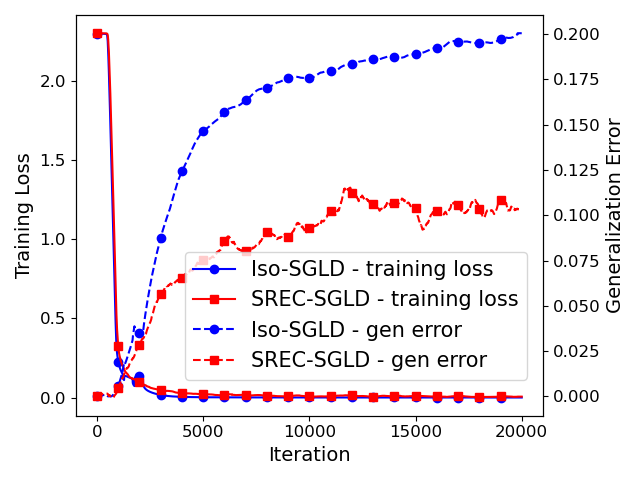}
          \caption{$\text{trace scale}=5$}
          \label{fig:sub2}
        \end{subfigure}
        \caption{The training loss and and the generalization error (test loss $-$ training loss) of the Iso-SGLD and SREC-SGLD. Traces of the noise covariance in (a) and (b) are 1 and 5 times of {\small$\tr((\bSig^{sd}_{\bS,\bW})^{1/2})$}, respectively. 
        }
        \label{fig: test}
    \end{minipage}
    \hfill
    \begin{minipage}{0.32\textwidth}
        \centering    
        \centerline{\includegraphics[width=\columnwidth]{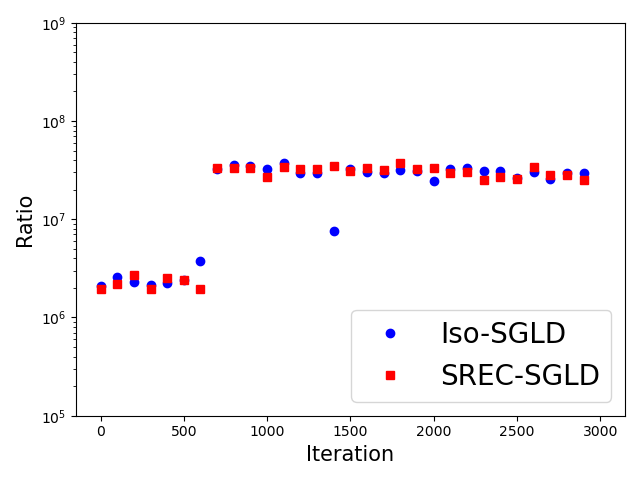}}
        \caption{Ratio of the 1st to the 500th largest eigenvalue of the empirical gradient covariance  for two SGLDs. }
        \label{fig:ratio}
    \end{minipage}
    \vspace{-3mm}
\end{figure}


\section{Conclusion and Future Direction}
\label{sec: conclusion}

In this paper, we study the optimal noise covariance of SGLD in terms of its generalization ability. Specifically, we first formulate the optimization problem both by deriving constraints from the optimization performance and proposing the optimization target by constructing a new information theoretical bound. We then solve the problem with both greedy optimal prior and fixed prior. Interestingly, we observe that the optimal noise covariance aligns with the empirical gradient covariance, which indicates the superiority of the noise covariance of SGD in terms of generalization.

\section*{Acknowledgement}
The authors would like to thank Mr. Ziming Liu for helpful theoretical discussions. 
\newpage
\bibliography{references}

\begin{thebibliography}{10}

\bibitem{arora2018stronger}
S.~Arora, R.~Ge, B.~Neyshabur, and Y.~Zhang.
\newblock Stronger generalization bounds for deep nets via a compression
  approach.
\newblock In {\em International Conference on Machine Learning}, pages
  254--263. PMLR, 2018.

\bibitem{asadi2018chaining}
A.~R. Asadi, E.~Abbe, and S.~Verd{\'u}.
\newblock Chaining mutual information and tightening generalization bounds.
\newblock In {\em Advances in Neural Information Processing Systems (NeurIPS)},
  2018.

\bibitem{boucheron2013concentration}
S.~Boucheron, G.~Lugosi, and P.~Massart.
\newblock {\em Concentration inequalities: A nonasymptotic theory of
  independence}.
\newblock Oxford university press, 2013.

\bibitem{bousquet2003introduction}
O.~Bousquet, S.~Boucheron, and G.~Lugosi.
\newblock Introduction to statistical learning theory.
\newblock In {\em Summer School on Machine Learning}, pages 169--207. Springer,
  2003.

\bibitem{bu2020tightening}
Y.~Bu, S.~Zou, and V.~V. Veeravalli.
\newblock Tightening mutual information-based bounds on generalization error.
\newblock {\em IEEE Journal on Selected Areas in Information Theory},
  1(1):121--130, 2020.

\bibitem{charles2018stability}
Z.~Charles and D.~Papailiopoulos.
\newblock Stability and generalization of learning algorithms that converge to
  global optima.
\newblock In {\em International Conference on Machine Learning}, pages
  745--754. PMLR, 2018.

\bibitem{cover2001elements}
T.~M. Cover and J.~A. Thomas.
\newblock {\em Elements of information theory}.
\newblock Wiley, 2001.

\bibitem{dwork2015generalization}
C.~Dwork, V.~Feldman, M.~Hardt, T.~Pitassi, O.~Reingold, and A.~Roth.
\newblock Generalization in adaptive data analysis and holdout reuse.
\newblock In {\em Advances in Neural Information Processing Systems}, 2015.

\bibitem{dziugaite2017computing}
G.~K. Dziugaite and D.~M. Roy.
\newblock Computing nonvacuous generalization bounds for deep (stochastic)
  neural networks with many more parameters than training data.
\newblock {\em arXiv preprint arXiv:1703.11008}, 2017.

\bibitem{feldman2018calibrating}
V.~Feldman and T.~Steinke.
\newblock Calibrating noise to variance in adaptive data analysis.
\newblock In {\em Conference On Learning Theory}, pages 535--544. PMLR, 2018.

\bibitem{gelfand1991recursive}
S.~B. Gelfand and S.~K. Mitter.
\newblock Recursive stochastic algorithms for global optimization in r\^{}d.
\newblock {\em SIAM Journal on Control and Optimization}, 29(5):999--1018,
  1991.

\bibitem{Neu2021InformationTheoreticGB}
N.~Gergely, G.~K. Dziugaite, M.~Haghifam, and D.~M. Roy.
\newblock Information-theoretic generalization bounds for stochastic gradient
  descent.
\newblock In {\em COLT}, 2021.

\bibitem{hafez2020conditioning}
H.~Hafez-Kolahi, Z.~Golgooni, S.~Kasaei, and M.~Soleymani.
\newblock Conditioning and processing: Techniques to improve
  information-theoretic generalization bounds.
\newblock In {\em Advances in Neural Information Processing Systems (NeurIPS)},
  volume~33, 2020.

\bibitem{haghifam2020sharpened}
M.~Haghifam, J.~Negrea, A.~Khisti, D.~M. Roy, and G.~K. Dziugaite.
\newblock Sharpened generalization bounds based on conditional mutual
  information and an application to noisy, iterative algorithms.
\newblock {\em arXiv preprint arXiv:2004.12983}, 2020.

\bibitem{hardt2016train}
M.~Hardt, B.~Recht, and Y.~Singer.
\newblock Train faster, generalize better: Stability of stochastic gradient
  descent.
\newblock In {\em International Conference on Machine Learning}, pages
  1225--1234. PMLR, 2016.

\bibitem{issa2019strengthened}
I.~Issa, A.~R. Esposito, and M.~Gastpar.
\newblock Strengthened information-theoretic bounds on the generalization
  error.
\newblock In {\em 2019 IEEE International Symposium on Information Theory
  (ISIT)}, pages 582--586. IEEE, 2019.

\bibitem{jiang2019fantastic}
Y.~Jiang, B.~Neyshabur, H.~Mobahi, D.~Krishnan, and S.~Bengio.
\newblock Fantastic generalization measures and where to find them.
\newblock In {\em International Conference on Learning Representations}, 2019.

\bibitem{keskar2016large}
N.~S. Keskar, D.~Mudigere, J.~Nocedal, M.~Smelyanskiy, and P.~T.~P. Tang.
\newblock On large-batch training for deep learning: Generalization gap and
  sharp minima.
\newblock {\em arXiv preprint arXiv:1609.04836}, 2016.

\bibitem{kuzborskij2018data}
I.~Kuzborskij and C.~Lampert.
\newblock Data-dependent stability of stochastic gradient descent.
\newblock In {\em International Conference on Machine Learning}, pages
  2815--2824. PMLR, 2018.

\bibitem{lei2021sharper}
Y.~Lei and Y.~Ying.
\newblock Sharper generalization bounds for learning with gradient-dominated
  objective functions.
\newblock In {\em International Conference on Learning Representations}, 2021.

\bibitem{li2019generalization}
J.~Li, X.~Luo, and M.~Qiao.
\newblock On generalization error bounds of noisy gradient methods for
  non-convex learning.
\newblock In {\em International Conference on Learning Representations}, 2019.

\bibitem{mandt2017stochastic}
S.~Mandt, M.~D. Hoffman, and D.~M. Blei.
\newblock Stochastic gradient descent as approximate {B}ayesian inference.
\newblock {\em The Journal of Machine Learning Research}, 18(1):4873--4907,
  2017.

\bibitem{meng2020dynamic}
Q.~Meng, S.~Gong, W.~Chen, Z.-M. Ma, and T.-Y. Liu.
\newblock Dynamic of stochastic gradient descent with state-dependent noise.
\newblock {\em arXiv preprint arXiv:2006.13719}, 2020.

\bibitem{mou2018generalization}
W.~Mou, L.~Wang, X.~Zhai, and K.~Zheng.
\newblock Generalization bounds of {SGLD} for non-convex learning: Two
  theoretical viewpoints.
\newblock In {\em Conference on Learning Theory}, pages 605--638. PMLR, 2018.

\bibitem{negrea2019information}
J.~Negrea, M.~Haghifam, G.~K. Dziugaite, A.~Khisti, and D.~M. Roy.
\newblock Information-theoretic generalization bounds for {SGLD} via
  data-dependent estimates.
\newblock {\em arXiv preprint arXiv:1911.02151}, 2019.

\bibitem{neyshabur2018pac}
B.~Neyshabur, S.~Bhojanapalli, and N.~Srebro.
\newblock A {PAC-B}ayesian approach to spectrally-normalized margin bounds for
  neural networks.
\newblock In {\em International Conference on Learning Representations}, 2018.

\bibitem{parrado2012pac}
E.~Parrado-Hern{\'a}ndez, A.~Ambroladze, J.~Shawe-Taylor, and S.~Sun.
\newblock {PAC-Bayes} bounds with data dependent priors.
\newblock {\em The Journal of Machine Learning Research}, 13(1):3507--3531,
  2012.

\bibitem{pensia2018generalization}
A.~Pensia, V.~Jog, and P.-L. Loh.
\newblock Generalization error bounds for noisy, iterative algorithms.
\newblock In {\em 2018 IEEE International Symposium on Information Theory
  (ISIT)}, pages 546--550. IEEE, 2018.

\bibitem{raginsky2017non}
M.~Raginsky, A.~Rakhlin, and M.~Telgarsky.
\newblock Non-convex learning via stochastic gradient langevin dynamics: a
  nonasymptotic analysis.
\newblock In {\em Conference on Learning Theory}, pages 1674--1703. PMLR, 2017.

\bibitem{russo2016controlling}
D.~Russo and J.~Zou.
\newblock Controlling bias in adaptive data analysis using information theory.
\newblock In {\em Artificial Intelligence and Statistics}, pages 1232--1240.
  PMLR, 2016.

\bibitem{steinke2020reasoning}
T.~Steinke and L.~Zakynthinou.
\newblock Reasoning about generalization via conditional mutual information.
\newblock In {\em Conference on Learning Theory}, pages 3437--3452. PMLR, 2020.

\bibitem{strang1993introduction}
G.~Strang, G.~Strang, G.~Strang, and G.~Strang.
\newblock {\em Introduction to linear algebra}, volume~3.
\newblock Wellesley-Cambridge Press Wellesley, MA, 1993.

\bibitem{welling2011bayesian}
M.~Welling and Y.~W. Teh.
\newblock Bayesian learning via stochastic gradient langevin dynamics.
\newblock In {\em Proceedings of the 28th international conference on machine
  learning (ICML-11)}, pages 681--688. Citeseer, 2011.

\bibitem{wen2019empirical}
Y.~Wen, K.~Luk, M.~Gazeau, G.~Zhang, H.~Chan, and J.~Ba.
\newblock An empirical study of large-batch stochastic gradient descent with
  structured covariance noise.
\newblock {\em arXiv e-prints}, pages arXiv--1902, 2019.

\bibitem{xiao2017fashion}
H.~Xiao, K.~Rasul, and R.~Vollgraf.
\newblock Fashion-{MNIST}: a novel image dataset for benchmarking machine
  learning algorithms.
\newblock {\em arXiv preprint arXiv:1708.07747}, 2017.

\bibitem{xie2020diffusion}
Z.~Xie, I.~Sato, and M.~Sugiyama.
\newblock A diffusion theory for deep learning dynamics: Stochastic gradient
  descent escapes from sharp minima exponentially fast.
\newblock {\em arXiv preprint arXiv:2002.03495}, 2020.

\bibitem{xu2017information}
A.~Xu and M.~Raginsky.
\newblock Information-theoretic analysis of generalization capability of
  learning algorithms.
\newblock {\em arXiv preprint arXiv:1705.07809}, 2017.

\bibitem{zhang2017understanding}
C.~Zhang, S.~Bengio, M.~Hardt, B.~Recht, and O.~Vinyals.
\newblock Understanding deep learning requires rethinking generalization.
\newblock In {\em International Conference on Learning Representations}, 2017.

\bibitem{zhou2018non}
W.~Zhou, V.~Veitch, M.~Austern, R.~P. Adams, and P.~Orbanz.
\newblock Non-vacuous generalization bounds at the imagenet scale: a
  {PAC-B}ayesian compression approach.
\newblock In {\em International Conference on Learning Representations}, 2018.

\bibitem{zhu2018anisotropic}
Z.~Zhu, J.~Wu, B.~Yu, L.~Wu, and J.~Ma.
\newblock The anisotropic noise in stochastic gradient descent: Its behavior of
  escaping from sharp minima and regularization effects.
\newblock {\em arXiv preprint arXiv:1803.00195}, 2018.

\end{thebibliography}
\bibliographystyle{abbrv}

\newpage


\appendix

\newpage
\begin{center}
    {\Large \textbf{Supplementary materials for \\``Optimizing Information-theoretical Generalization Bound via  Anisotropic Noise in SGLD''}}
\end{center}

The supplementary materials are organized as follows. In Appendix \ref{appen: prelimi}, we provide some basic lemmas which are used throughout the proofs in the rest of the materials. In Appendix \ref{appendix: proof of lemma optimization}, we provide the proof of Lemma \ref{lem:state-dependent sgld optimization}. In Appendix \ref{appen:4}, \ref{appen:6}, and \ref{appen:5}, we provide the detailed proofs of Lemmas and Theorems respectively in Section \ref{sec: general_information_theoretical_generalization_bound}, Section \ref{sec: a_greedy_approach} , and Section \ref{sec: optimization_fixed_prior}. In Appendix \ref{appen: experiment}, we provide the detailed settings of the experiments in the main text together with an additional experiments to justify the result of Theorem \ref{thm: greedy}.

\section{Preliminaries}
\label{appen: prelimi}
In this section, we provide some basic lemmas that will be used throughout the proof both from probability theory and from matrix analysis.

\subsection{Preparations in Probability Theory}
The first lemma is a standard result characterizing the KL divergence between two Gaussian distributions.
\begin{lemma}[KL divergence between Gaussian distributions]
\label{lem: kl_gau}
Let $\mathrm{P}_1$ and $\mathrm{P}_2$ are multivariate Gaussian distributions on $\mathbb{R}^d$ with mean and covariance respectively $\mu_1$, $\bSig_1$ and $\mu_2$, $\bSig_2$. Then the KL divergence between $\mathrm{P}_1$ and $\mathrm{P}_2$ are given as follows:
\begin{equation*}
    \mathrm{KL}(\mathrm{P}_1||\mathrm{P}_2)=\frac{1}{2}\left(\operatorname{tr}\left(\bSig_{2}^{-1} \bSig_{1}\right)+\left(\mu_{2}-\mu_{1}\right)^{\top} \bSig_{2}^{-1}\left(\mu_{2}-\mu_{1}\right)-d+\ln \left(\frac{\operatorname{det} \bSig_{2}}{\operatorname{det} \bSig_{1}}\right)\right).
\end{equation*}
 \end{lemma}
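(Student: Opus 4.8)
The plan is to evaluate the KL divergence directly from its definition $\mathrm{KL}(\mathrm{P}_1\|\mathrm{P}_2)=\mathbb{E}_{x\sim \mathrm{P}_1}[\log p_1(x)-\log p_2(x)]$, where $p_1,p_2$ are the two Gaussian densities, and then reduce everything to expectations of quadratic forms under $\mathrm{P}_1$. First I would substitute the explicit density $p_i(x)=(2\pi)^{-d/2}(\det \bSig_i)^{-1/2}\exp(-\tfrac12 (x-\mu_i)^\top \bSig_i^{-1}(x-\mu_i))$, so that the $(2\pi)^{-d/2}$ factors cancel in the log-ratio and
\begin{equation*}
\log\frac{p_1(x)}{p_2(x)}=\frac12\ln\frac{\det \bSig_2}{\det \bSig_1}-\frac12 (x-\mu_1)^\top \bSig_1^{-1}(x-\mu_1)+\frac12 (x-\mu_2)^\top \bSig_2^{-1}(x-\mu_2).
\end{equation*}
Taking $\mathbb{E}_{x\sim \mathrm{P}_1}$ of both sides, the log-determinant term is constant, and the task is reduced to evaluating the two expected quadratic forms.

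For the first quadratic form I would use the identity $\mathbb{E}_{x\sim \mathrm{P}_1}[(x-\mu_1)^\top A (x-\mu_1)]=\operatorname{tr}\!\big(A\,\mathbb{E}_{x\sim \mathrm{P}_1}[(x-\mu_1)(x-\mu_1)^\top]\big)=\operatorname{tr}(A\bSig_1)$, valid for any fixed matrix $A$ by linearity of trace and expectation together with the cyclic property of the trace; with $A=\bSig_1^{-1}$ this gives exactly $d$. For the second, I would write $x-\mu_2=(x-\mu_1)+(\mu_1-\mu_2)$ and expand the quadratic form: the cross term contributes $2(\mu_1-\mu_2)^\top \bSig_2^{-1}\,\mathbb{E}_{x\sim\mathrm{P}_1}[x-\mu_1]=0$ because $\mathbb{E}_{\mathrm{P}_1}[x]=\mu_1$, the pure $(x-\mu_1)$ term contributes $\operatorname{tr}(\bSig_2^{-1}\bSig_1)$ by the same identity with $A=\bSig_2^{-1}$, and the remaining deterministic term contributes $(\mu_1-\mu_2)^\top \bSig_2^{-1}(\mu_1-\mu_2)$.

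Collecting the three contributions yields $\mathrm{KL}(\mathrm{P}_1\|\mathrm{P}_2)=\frac12\big(\operatorname{tr}(\bSig_2^{-1}\bSig_1)+(\mu_2-\mu_1)^\top \bSig_2^{-1}(\mu_2-\mu_1)-d+\ln(\det \bSig_2/\det \bSig_1)\big)$, which is the claimed formula (using that the quadratic form is symmetric under $\mu_1-\mu_2\mapsto\mu_2-\mu_1$). There is no genuine obstacle here: the only mild points requiring care are noting that $\bSig_1,\bSig_2\succ 0$ makes all densities and integrals well defined, and the bookkeeping of the quadratic-form expectation; the result is classical and the argument is only a few lines. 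As an alternative one could first reduce to the standardized case $\mathrm{P}_2=\mathcal{N}(\mathbf 0,\mathbb{I})$ using the affine invariance of the KL divergence under $x\mapsto \bSig_2^{-1/2}(x-\mu_2)$ and then compute, but the direct route above is just as short.
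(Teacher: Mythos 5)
Your proof is correct: the direct computation of $\mathbb{E}_{x\sim \mathrm{P}_1}[\log p_1(x)-\log p_2(x)]$ via the trace identity for expected quadratic forms and the decomposition $x-\mu_2=(x-\mu_1)+(\mu_1-\mu_2)$ is the standard derivation and all three contributions are evaluated correctly. The paper itself states this lemma as a standard fact without proof, so there is no alternative argument to compare against; your write-up fills that gap adequately.
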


We then provide a lemma which gives the expected  difference between two uniform sampling variables.
\begin{lemma}[Two step sampling]
\label{lem: sampling}
Suppose $\bz$ is a discrete random variable with $\mathbb{P}(\bz=\bz_i)=\frac{1}{N}$, $\forall i=1,2,\cdots, N$, where the support set is $\mathcal{Z}=\{\bz_1,\cdots,\bz_N\}\subset \mathbb{R}^d$.  Suppose further $\boldsymbol{U}$ is a random index set with size $b$ and sampled uniformly without replacement from $[N]$. Suppose $\bV$ is another random index set independent of $\boldsymbol{U}$ with size $N-1$ and sampled uniformly without replacement from $[N]$. Denote subset of $\mathcal{Z}$ with index in $\boldsymbol{U}\cap \bV^c$ and $\bV$ respectively as $\mathcal{Z}_{\boldsymbol{U}\cap \bV^c}=\{\bz_i, i\in \boldsymbol{U}\cap \bV^c\}$,  $\mathcal{Z}_{\bV}=\{\bz_i, i\in \bV\}$, and the average of $\mathcal{Z}_{\boldsymbol{U}\cap \bV^c}$ and $\mathcal{Z}_{\bV}$ respectively as $\bar{\mathcal{Z}}_{\boldsymbol{U}\cap \bV^c}$ and $\bar{\mathcal{Z}}_{\bV}$. Then the following equation holds:
\begin{equation*}
    \mathbb{E}_{\boldsymbol{U},\bV}\left(\frac{(b-\vert\boldsymbol{U}\cap\bV\vert)^2}{b^2}(\bar{\mathcal{Z}}_{\bV}-\bar{\mathcal{Z}}_{\boldsymbol{U}\cap \bV^c})(\bar{\mathcal{Z}}_{\bV}-\bar{\mathcal{Z}}_{\boldsymbol{U}\cap \bV^c})^\top\right)=\frac{1}{Nb}\left(\frac{N}{N-1}\right)^2 \Cov(\bz).
\end{equation*}
\end{lemma}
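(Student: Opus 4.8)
The statement is an elementary exercise in second-moment computations under sampling without replacement, so the plan is to compute the left-hand side directly by conditioning on the coincidence pattern between $\boldsymbol U$ and $\bV$. First I would fix notation: let $\mu = \frac{1}{N}\sum_{i=1}^N \bz_i$ be the mean and $\bSig = \Cov(\bz) = \frac{1}{N}\sum_i (\bz_i-\mu)(\bz_i-\mu)^\top$. Since $\bV$ has size $N-1$, its complement $\bV^c$ is a single uniformly random index, so $\bar{\gZ}_{\bV} = \frac{1}{N-1}\sum_{i\in\bV}\bz_i = \frac{N\mu - \bz_{\bV^c}}{N-1}$; in particular $\bar{\gZ}_{\bV}-\mu = \frac{1}{N-1}(\mu - \bz_{\bV^c})$. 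The key simplification is that $\boldsymbol U\cap\bV^c$ is either empty (if $\bV^c\notin\boldsymbol U$) or the single point $\{\bV^c\}$ (if $\bV^c\in\boldsymbol U$), and in the first case the prefactor $(b-|\boldsymbol U\cap\bV|)^2/b^2$ vanishes because then $\boldsymbol U\subseteq \bV$ so $|\boldsymbol U\cap\bV| = b$. Hence the only contributing event is $\{\bV^c\in\boldsymbol U\}$, which has probability $b/N$, and on that event $|\boldsymbol U\cap\bV| = b-1$, so the prefactor equals $1/b^2$ and $\bar{\gZ}_{\boldsymbol U\cap\bV^c} = \bz_{\bV^c}$.

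On the event $\{\bV^c\in\boldsymbol U\}$, write $k=\bV^c$; then
\[
\bar{\gZ}_{\bV} - \bar{\gZ}_{\boldsymbol U\cap\bV^c} = \bar{\gZ}_{\bV} - \bz_k = \frac{N\mu - \bz_k}{N-1} - \bz_k = \frac{N}{N-1}(\mu - \bz_k).
\]
Therefore the left-hand side becomes
\[
\frac{1}{b^2}\,\mathbb{P}(\bV^c\in\boldsymbol U)\cdot \mathbb{E}_{k}\!\left[\left(\tfrac{N}{N-1}\right)^2 (\mu-\bz_k)(\mu-\bz_k)^\top\right]
= \frac{1}{b^2}\cdot\frac{b}{N}\cdot\left(\frac{N}{N-1}\right)^2 \bSig,
\]
using that $k=\bV^c$ is uniform on $[N]$ and $\mathbb{E}_k[(\mu-\bz_k)(\mu-\bz_k)^\top] = \bSig$. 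This simplifies to $\frac{1}{Nb}\left(\frac{N}{N-1}\right)^2\bSig$, which is exactly the claimed identity. The one point needing a little care is the independence of $\boldsymbol U$ and $\bV$ and the resulting factorization $\mathbb{P}(\bV^c\in\boldsymbol U)\cdot\mathbb{E}[\,\cdot\mid \bV^c\in\boldsymbol U]$; since each is uniform and they are independent, conditionally on $\bV^c=k$ the event $\{k\in\boldsymbol U\}$ has probability $b/N$ regardless of $k$, so the conditioning does not bias the distribution of $\bz_{\bV^c}$.

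I do not expect a genuine obstacle here; the only mild subtlety — and the step I would state most carefully — is the observation that the $(b-|\boldsymbol U\cap\bV|)^2/b^2$ factor annihilates precisely the term ($\boldsymbol U\cap\bV^c=\emptyset$) on which $\bar{\gZ}_{\boldsymbol U\cap\bV^c}$ is ill-defined, so no separate convention for that degenerate case is needed. Everything else is bookkeeping with the relation $\bar{\gZ}_{\bV} = \frac{N\mu-\bz_{\bV^c}}{N-1}$ and the definition of $\bSig$.
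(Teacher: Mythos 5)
Your proposal is correct and follows essentially the same route as the paper's proof: both condition on whether the single index $\bV^c$ lands in $\boldsymbol U$, observe that the prefactor $(b-\vert\boldsymbol U\cap\bV\vert)^2/b^2$ kills the complementary (degenerate) case, and reduce to an expectation over the uniform index $\bV^c$. Your write-up actually supplies the final algebra (the identity $\bar{\mathcal{Z}}_{\bV}-\bz_{\bV^c}=\tfrac{N}{N-1}(\mu-\bz_{\bV^c})$ and the resulting $\Cov(\bz)$) that the paper's proof states without detail, so it is, if anything, more complete.
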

\begin{proof}
We rewrite $ \mathbb{E}_{\boldsymbol{U},\bV}\left(\frac{(b-\vert\boldsymbol{U}\cap\bV\vert)^2}{b^2}(\bar{\mathcal{Z}}_{\bV}-\bar{\mathcal{Z}}_{\boldsymbol{U}\cap \bV^c})(\bar{\mathcal{Z}}_{\bV}-\bar{\mathcal{Z}}_{\boldsymbol{U}\cap \bV^c})^\top\right)$ by taking conditional expectation with respect to $\vert \boldsymbol{U}\cap \bV\vert$ as follows:
\begin{align*}
    &\mathbb{E}_{\boldsymbol{U},\bV}\left(\frac{(b-\vert\boldsymbol{U}\cap\bV\vert)^2}{b^2}(\bar{\mathcal{Z}}_{\bV}-\bar{\mathcal{Z}}_{\boldsymbol{U}\cap \bV^c})(\bar{\mathcal{Z}}_{\bV}-\bar{\mathcal{Z}}_{\boldsymbol{U}\cap \bV^c})^\top\right)
    \\
    &=\mathbb{E}_{\vert \boldsymbol{U}\cap \bV^c\vert}\mathbb{E}^{\vert \boldsymbol{U}\cap \bV^c\vert}_{\boldsymbol{U},\bV}\left(\frac{\vert\boldsymbol{U}\cap\bV^c\vert^2}{b^2}(\bar{\mathcal{Z}}_{\bV}-\bar{\mathcal{Z}}_{\boldsymbol{U}\cap \bV^c})(\bar{\mathcal{Z}}_{\bV}-\bar{\mathcal{Z}}_{\boldsymbol{U}\cap \bV^c})^\top\right)
    \\
    &=\mathbb{P}(\vert \boldsymbol{U}\cap \bV^c\vert=1)\mathbb{E}^{\vert \boldsymbol{U}\cap \bV^c\vert=1}_{\boldsymbol{U},\bV}\left(\frac{1}{b^2}(\bar{\mathcal{Z}}_{\bV}-\bar{\mathcal{Z}}_{ \bV^c})(\bar{\mathcal{Z}}_{\bV}-\bar{\mathcal{Z}}_{ \bV^c})^\top\right)
    \\
   & =\mathbb{P}(\vert \boldsymbol{U}\cap \bV^c\vert=1)\mathbb{E}_{\bV}\left(\frac{1}{b^2}(\bar{\mathcal{Z}}_{\bV}-\bar{\mathcal{Z}}_{ \bV^c})(\bar{\mathcal{Z}}_{\bV}-\bar{\mathcal{Z}}_{ \bV^c})^\top\right)
    \\
   & =\frac{1}{Nb}\left(\frac{N}{N-1}\right)^2 \Cov(\bz).
\end{align*}
The proof is completed.
\end{proof}

We provide the following lemma for computing the KL divergence between two joint distributions.
\begin{lemma}
\label{lem:decom_kl_joint}
Let $\bX$, $\bY$, and $\bZ$ be three random variables with $\bX$ and $\bY$ having the same support set. Then the KL divergence between the joint distribution of $(\bX,\bZ)$ and $(\bY,\bZ)$ can be decomposed into
\begin{align*}
    \operatorname{KL}\left((\bX,\bZ)\Vert (\bY,\bZ) \right)=\mathbb{E}_{\bZ} \operatorname{KL}\left((\bX|\bZ)\Vert(\bY|\bZ) \right).
\end{align*}
\end{lemma}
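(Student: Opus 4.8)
\textbf{Proof plan for Lemma \ref{lem:decom_kl_joint}.}

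The statement is the standard chain rule for KL divergence, specialized to the case where the second coordinate has the same marginal under both joint laws (indeed, the marginal of $\bZ$ is literally the law of $\bZ$ on both sides). The plan is to write everything out in terms of densities (or, in full generality, Radon--Nikodym derivatives) and factor the joint density as marginal times conditional. First I would let $p_{\bX,\bZ}$ and $p_{\bY,\bZ}$ denote the densities of the two joint distributions with respect to a common dominating measure on the product space, and factor $p_{\bX,\bZ}(x,z) = p_{\bZ}(z)\, p_{\bX|\bZ}(x\mid z)$ and $p_{\bY,\bZ}(y,z) = p_{\bZ}(z)\, p_{\bY|\bZ}(y\mid z)$ --- here using that the $\bZ$-marginal is the same on both sides, so the same symbol $p_{\bZ}$ appears in both factorizations.

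Then I would substitute into the definition of KL divergence:
\begin{align*}
\operatorname{KL}\left((\bX,\bZ)\Vert(\bY,\bZ)\right)
&= \int p_{\bX,\bZ}(x,z)\,\log\frac{p_{\bX,\bZ}(x,z)}{p_{\bY,\bZ}(x,z)}\,\mathrm{d}x\,\mathrm{d}z\\
&= \int p_{\bZ}(z)\,p_{\bX|\bZ}(x\mid z)\,\log\frac{p_{\bX|\bZ}(x\mid z)}{p_{\bY|\bZ}(x\mid z)}\,\mathrm{d}x\,\mathrm{d}z,
\end{align*}
where the $\log p_{\bZ}(z)$ terms cancel inside the logarithm. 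Carrying out the $x$-integral first for fixed $z$ gives exactly $\operatorname{KL}\left((\bX|\bZ=z)\Vert(\bY|\bZ=z)\right)$, and the remaining integral against $p_{\bZ}(z)\,\mathrm{d}z$ is precisely $\mathbb{E}_{\bZ}\operatorname{KL}\left((\bX|\bZ)\Vert(\bY|\bZ)\right)$, which is the claimed identity. To be careful about well-definedness, I would note that absolute continuity of $(\bX,\bZ)$ with respect to $(\bY,\bZ)$ (needed for the KL to be finite / the lemma to be non-trivial) implies $p_{\bY|\bZ}(\cdot\mid z) > 0$ wherever $p_{\bX|\bZ}(\cdot\mid z)>0$ for $p_{\bZ}$-almost every $z$, so the conditional KL terms are well-defined a.s.; if the joint KL is $+\infty$ the identity holds trivially since the right side is then also $+\infty$ by Fubini--Tonelli (the integrand being nonnegative after the usual $\log$-convexity bookkeeping).

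There is essentially no obstacle here --- this is a routine measure-theoretic manipulation --- so the only thing requiring a sentence of care is the interchange of integration order, which is justified by Tonelli's theorem once one splits $\log(p_{\bX|\bZ}/p_{\bY|\bZ})$ into positive and negative parts (or simply observes that the whole expression is a nonnegative quantity and all conditional KL terms are nonnegative). I would present the computation in three displayed lines as above and conclude.
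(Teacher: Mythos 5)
Your proof is correct and follows essentially the same route as the paper's: factor each joint density as the common $\bZ$-marginal times the conditional, cancel the marginal inside the logarithm, and integrate over $x$ first to recover the conditional KL before averaging over $\bZ$. The extra measure-theoretic care you add (absolute continuity, Tonelli) is sound but not something the paper bothers with.
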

\begin{proof}
By the definition of KL divergence,
\begin{align*}
     \operatorname{KL}\left((\bX,\bZ)\Vert (\bY,\bZ) \right)=&\int \mathbb{P}(\bX,\bZ)\log\frac{\mathbb{P}(\bX,\bZ)}{\mathbb{P}(\bY,\bZ)}
     \\
     =&\int\mathbb{P}(\bZ) \mathbb{P}^{\bZ}(\bX)\log\frac{\mathbb{P}^{\bZ}(\bX)\mathbb{P}(\bZ)}{\mathbb{P}^{\bZ}(\bY)\mathbb{P}(\bZ)}
     \\
     =&\int\mathbb{P}(\bZ)\int \mathbb{P}^{\bZ}(\bX)\log\frac{\mathbb{P}^{\bZ}(\bX)}{\mathbb{P}^{\bZ}(\bY)}
     \\
     =&\mathbb{E}_{\bZ} \operatorname{KL}\left((\bX|\bZ)\Vert(\bY|\bZ) \right).
\end{align*}
The proof is completed.
\end{proof}

In the end of this section, we provide a proof of Lemma \ref{lem: decomposition_kl} using Lemma \ref{lem:decom_kl_joint} for the completeness of this paper.

\begin{proof}[Proof of Lemma \ref{lem: decomposition_kl}]
By Lemma \ref{lem:decom_kl_joint}, we have
\begin{align*}
    &\operatorname{KL}(Q_{0:T}\Vert P_{0:T})
    \\
    &=\operatorname{KL}(Q_{0:T}\Vert P_{0:T})-\operatorname{KL}(Q_{0:T}\Vert (Q_{0:T-1},P_{T|[T-1]}))+\operatorname{KL}(Q_{0:T}\Vert (Q_{0:T-1},P_{T|[T-1]}))
    \\
    &=\int Q_{0:T} \log\frac{Q_{0:T}}{P_{0:T}} -\int Q_{0:T} \log\frac{Q_{0:T}}{Q_{0:T-1}P_{T|[T-1]}} +\mathbb{E}_{Q_{T-1}}\operatorname{KL}\left(Q_{T|[T-1]}\Vert P_{T|[T-1]}\right)
    \\
    & =\int Q_{0:T} \log\frac{Q_{0:T-1}}{P_{0:T-1}} +\mathbb{E}_{Q_{T-1}}\operatorname{KL}\left(Q_{T|[T-1]}\Vert P_{T|[T-1]}\right)
     \\
      &=\int Q_{0:T-1} \log\frac{Q_{0:T-1}}{P_{0:T-1}} +\mathbb{E}_{Q_{T-1}}\operatorname{KL}\left(Q_{T|[T-1]}\Vert P_{T|[T-1]}\right)
      \\
      &=\operatorname{KL}\left( Q_{0:T-1}\left\Vert P_{0:T-1}\right.\right) +\mathbb{E}_{Q_{T-1}}\operatorname{KL}\left(Q_{T|[T-1]}\Vert P_{T|[T-1]}\right).
\end{align*}

The proof is then completed by induction.
\end{proof}
\begin{remark}
In this paper, we focus on the case where $Q_{0:T}$ and $P_{0:T}$ obeys the \emph{Markov Property}, i.e., for any $t$,
\begin{equation*}
    Q_{t|[t-1]}=Q_{t|(t-1)}, \text{ } P_{t|[t-1]}=P_{t|(t-1)}.
\end{equation*}
Therefore, the result in Lemma \ref{lem: decomposition_kl} becomes
\begin{small}
\begin{equation*}
    \operatorname{KL}(Q_{0:T}||P_{0:T})= \sum_{t=1}^{T} \mathbb{E}_{Q_{0:t-1}}\left[\operatorname{KL}\left(Q_{t|(t-1)} \| P_{t|(t-1)}\right)\right].
\end{equation*}
\end{small}
\end{remark}

\subsection{Technical Lemmas in Matrix Analysis}
\label{appen: proof_single_step}
We first provide a sufficient and necessary condition of that two symmetric matrices commute, and the proof can be found from any Linear Algebra Textbook (e.g. \cite{strang1993introduction}).
\begin{lemma}
\label{lem: commute_diag}
Let $\boldsymbol{A}$ and $\boldsymbol{B}$ be two $d\times d$ real symmetric matrices. Then, $\boldsymbol{A}$ and $\boldsymbol{B}$ commute (i.e., $\boldsymbol{A}\boldsymbol{B}=\boldsymbol{B}\boldsymbol{A}$), if and only if there exists an orthogonal matrix $\boldsymbol{O}$ which can diagonalize $\boldsymbol{A}$ and $\boldsymbol{B}$ simultaneously, i.e., both $\boldsymbol{O}^\top \boldsymbol{A}\boldsymbol{O}$ and $\boldsymbol{O}^\top \boldsymbol{B}\boldsymbol{O}$ are diagonal.
\end{lemma}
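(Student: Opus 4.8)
The plan is to prove the two implications separately, with the forward direction (commuting $\Rightarrow$ simultaneously orthogonally diagonalizable) requiring the real work and the reverse direction being essentially immediate.

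For the easy direction, I would suppose an orthogonal $\boldsymbol{O}$ is given such that $\boldsymbol{D}_1 = \boldsymbol{O}^\top \boldsymbol{A}\boldsymbol{O}$ and $\boldsymbol{D}_2 = \boldsymbol{O}^\top \boldsymbol{B}\boldsymbol{O}$ are both diagonal. Since diagonal matrices always commute, $\boldsymbol{D}_1\boldsymbol{D}_2 = \boldsymbol{D}_2\boldsymbol{D}_1$. Writing $\boldsymbol{A} = \boldsymbol{O}\boldsymbol{D}_1\boldsymbol{O}^\top$ and $\boldsymbol{B} = \boldsymbol{O}\boldsymbol{D}_2\boldsymbol{O}^\top$ and using $\boldsymbol{O}^\top\boldsymbol{O} = \mathbb{I}$, I would compute $\boldsymbol{A}\boldsymbol{B} = \boldsymbol{O}\boldsymbol{D}_1\boldsymbol{D}_2\boldsymbol{O}^\top = \boldsymbol{O}\boldsymbol{D}_2\boldsymbol{D}_1\boldsymbol{O}^\top = \boldsymbol{B}\boldsymbol{A}$, establishing that $\boldsymbol{A}$ and $\boldsymbol{B}$ commute.

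For the converse, assume $\boldsymbol{A}\boldsymbol{B} = \boldsymbol{B}\boldsymbol{A}$. The key structural observation is that each eigenspace of $\boldsymbol{A}$ is invariant under $\boldsymbol{B}$: if $\boldsymbol{A}\boldsymbol{v} = \lambda\boldsymbol{v}$, then $\boldsymbol{A}(\boldsymbol{B}\boldsymbol{v}) = \boldsymbol{B}(\boldsymbol{A}\boldsymbol{v}) = \lambda(\boldsymbol{B}\boldsymbol{v})$, so $\boldsymbol{B}\boldsymbol{v}$ again lies in the $\lambda$-eigenspace $E_\lambda$ of $\boldsymbol{A}$. Since $\boldsymbol{A}$ is real symmetric, the spectral theorem gives the orthogonal decomposition $\mathbb{R}^d = \bigoplus_\lambda E_\lambda$ over the distinct eigenvalues $\lambda$ of $\boldsymbol{A}$. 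The plan is then to diagonalize $\boldsymbol{B}$ inside each block: restricted to $E_\lambda$, the operator $\boldsymbol{B}$ remains symmetric with respect to the inherited inner product (symmetry is preserved on an invariant subspace of a symmetric operator), so by the spectral theorem applied to $\boldsymbol{B}|_{E_\lambda}$ there is an orthonormal basis of $E_\lambda$ consisting of eigenvectors of $\boldsymbol{B}$. Every such basis vector lies in $E_\lambda$ and is therefore automatically an eigenvector of $\boldsymbol{A}$ with eigenvalue $\lambda$. Concatenating these orthonormal bases across all $E_\lambda$ produces an orthonormal basis of $\mathbb{R}^d$ — orthonormality across different blocks holds because distinct eigenspaces of the symmetric matrix $\boldsymbol{A}$ are mutually orthogonal — whose vectors are simultaneous eigenvectors of both $\boldsymbol{A}$ and $\boldsymbol{B}$. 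Taking $\boldsymbol{O}$ to be the orthogonal matrix whose columns are these basis vectors makes both $\boldsymbol{O}^\top\boldsymbol{A}\boldsymbol{O}$ and $\boldsymbol{O}^\top\boldsymbol{B}\boldsymbol{O}$ diagonal.

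The main obstacle is the case of repeated eigenvalues of $\boldsymbol{A}$. When all eigenvalues of $\boldsymbol{A}$ are distinct, each $E_\lambda$ is one-dimensional and $\boldsymbol{B}$ is forced to be diagonal in the eigenbasis of $\boldsymbol{A}$ immediately; but with multiplicities the within-block freedom must be resolved by the second application of the spectral theorem to $\boldsymbol{B}|_{E_\lambda}$, and one must check that this within-block rotation does not disturb the diagonal form of $\boldsymbol{A}$. It does not, since $\boldsymbol{A}$ acts as the scalar $\lambda$ on all of $E_\lambda$, so any orthonormal basis of $E_\lambda$ diagonalizes $\boldsymbol{A}|_{E_\lambda}$. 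Everything else is bookkeeping following from the orthogonal decomposition guaranteed by the spectral theorem.
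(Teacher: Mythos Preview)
Your proof is correct and is the standard textbook argument for simultaneous orthogonal diagonalization of commuting symmetric matrices. The paper does not actually supply its own proof of this lemma; it simply states that the result ``can be found from any Linear Algebra Textbook'' and cites Strang, so your eigenspace-invariance argument followed by a blockwise application of the spectral theorem is exactly the kind of proof the paper defers to.
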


The next lemma is a key technique to obtain the optimal noise covariance of Theorem \ref{thm: greedy} and Theorem \ref{thm: main_theorem}.

\begin{lemma}
\label{lem: matrix_opt}
 Let $\boldsymbol{B}\in \mathbb{R}^{d\times d}$ be a (fixed) positive definite matrix with eigenvalues $(\beta_1,\cdots,\beta_d)$, where $\beta_i\ge 0$. Let $\boldsymbol{G}\in \mathbb{R}^{d\times d}$ be a positive definite matrix variable with fixed trace $\tr{\boldsymbol{G}}=c$, where $c$ is a positive constant and $c\le \operatorname{tr} (\boldsymbol{B})$. Then the minimum of $\operatorname{tr}(\boldsymbol{G}^{-1}\boldsymbol{B})+\ln \operatorname{det}(\boldsymbol{G})$ is achieved at $\boldsymbol{G}=\boldsymbol{O}^\top \Diag(\alpha_1,\cdots,\alpha_d)\boldsymbol{O}$, where
 \begin{equation*}
    \alpha_i^*= \frac{\sqrt{1-4\lambda^* \beta_i}-1}{-2\lambda^*},
\end{equation*}
$\boldsymbol{O}$ is any orthogonal matrix which diagonalize $\boldsymbol{B}$ as \begin{equation*}
    \boldsymbol{B}=\boldsymbol{O}^\top \Diag (\beta_1,\cdots,\beta_d) \boldsymbol{O},
\end{equation*}
and $\lambda^*\le 0$ is the unique solution of 
\begin{equation}
\label{eq: unique_solution}
    \sum_{i=1}^d\frac{2 \beta_i}{1+\sqrt{1-4\lambda^* \beta_i}}=c.
\end{equation}
\end{lemma}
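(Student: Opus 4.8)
\textbf{Proof proposal for Lemma \ref{lem: matrix_opt}.}

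The plan is to first reduce the matrix optimization to a scalar one, then solve the scalar problem by Lagrange multipliers. Observe that the objective $\operatorname{tr}(\boldsymbol{G}^{-1}\boldsymbol{B})+\ln\operatorname{det}(\boldsymbol{G})$ and the constraint $\operatorname{tr}(\boldsymbol{G})=c$ are both invariant under the simultaneous conjugation $\boldsymbol{G}\mapsto \boldsymbol{O}\boldsymbol{G}\boldsymbol{O}^\top$, $\boldsymbol{B}\mapsto\boldsymbol{O}\boldsymbol{B}\boldsymbol{O}^\top$. So I would fix the orthogonal $\boldsymbol{O}$ that diagonalizes $\boldsymbol{B}$ and work in that basis, where $\boldsymbol{B}=\Diag(\beta_1,\dots,\beta_d)$. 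The first key step is to argue the optimal $\boldsymbol{G}$ is itself diagonal in this basis. Write $\boldsymbol{G}$ in terms of its own eigendecomposition or simply note: for fixed diagonal entries $(\alpha_1,\dots,\alpha_d)$ of $\boldsymbol{G}$, the term $\ln\operatorname{det}(\boldsymbol{G})\le\sum_i\ln\alpha_i$ by Hadamard's inequality (with equality iff $\boldsymbol{G}$ diagonal), while $\operatorname{tr}(\boldsymbol{G}^{-1}\boldsymbol{B})=\sum_i\beta_i(\boldsymbol{G}^{-1})_{ii}$ and $(\boldsymbol{G}^{-1})_{ii}\ge 1/\alpha_i$ (again equality iff $\boldsymbol{G}$ diagonal, from the fact that the $(i,i)$ entry of the inverse is at least the reciprocal of the $(i,i)$ entry for PD matrices). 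Hence both terms are minimized simultaneously by taking $\boldsymbol{G}$ diagonal, so WLOG $\boldsymbol{G}=\Diag(\alpha_1,\dots,\alpha_d)$ with $\alpha_i>0$, $\sum_i\alpha_i=c$.

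The second step is the scalar problem: minimize $f(\alpha)=\sum_{i=1}^d\left(\frac{\beta_i}{\alpha_i}+\ln\alpha_i\right)$ subject to $\sum_i\alpha_i=c$, $\alpha_i>0$. The Lagrangian is $\sum_i\left(\frac{\beta_i}{\alpha_i}+\ln\alpha_i\right)-\lambda\big(\sum_i\alpha_i-c\big)$ (I will choose the sign of $\lambda$ to match the statement). Setting the partial derivative in $\alpha_i$ to zero gives $-\frac{\beta_i}{\alpha_i^2}+\frac{1}{\alpha_i}-\lambda=0$, i.e. $\lambda\alpha_i^2-\alpha_i+\beta_i=0$, a quadratic in $\alpha_i$. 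Solving, $\alpha_i=\frac{1\pm\sqrt{1-4\lambda\beta_i}}{2\lambda}$; the correct root is the one giving a positive value that reduces to the unconstrained minimizer $\alpha_i=\beta_i$ as $\lambda\to 0$, which is $\alpha_i=\frac{1-\sqrt{1-4\lambda\beta_i}}{2\lambda}=\frac{2\beta_i}{1+\sqrt{1-4\lambda\beta_i}}$ (rationalizing the numerator; this matches the stated $\alpha_i^*=\frac{\sqrt{1-4\lambda^*\beta_i}-1}{-2\lambda^*}$). The multiplier is then pinned down by the constraint $\sum_i\frac{2\beta_i}{1+\sqrt{1-4\lambda\beta_i}}=c$, which is exactly Eq.~(\ref{eq: unique_solution}).

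The third step is to verify existence, uniqueness, and that this critical point is the global minimum. For uniqueness of $\lambda^*$: define $g(\lambda)=\sum_i\frac{2\beta_i}{1+\sqrt{1-4\lambda\beta_i}}$ for $\lambda\le 0$; each summand is continuous and strictly increasing in $\lambda$ on $(-\infty,0]$ (as $\lambda$ decreases toward $-\infty$, $\sqrt{1-4\lambda\beta_i}\to\infty$ so the summand $\to 0$; at $\lambda=0$ the summand equals $\beta_i$). Hence $g$ is continuous and strictly increasing, with $g(\lambda)\to 0$ as $\lambda\to-\infty$ and $g(0)=\sum_i\beta_i=\operatorname{tr}(\boldsymbol{B})\ge c$; by the intermediate value theorem there is a unique $\lambda^*\le 0$ with $g(\lambda^*)=c$. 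For global optimality: the function $\alpha\mapsto\frac{\beta}{\alpha}+\ln\alpha$ is strictly convex on $(0,\infty)$ when $\beta>0$ (second derivative $\frac{2\beta}{\alpha^3}-\frac{1}{\alpha^2}=\frac{2\beta-\alpha}{\alpha^3}$ — hmm, not globally convex), so I would instead argue directly that $f$ is convex on the feasible region or that the objective $\to+\infty$ on the boundary and at infinity of the simplex slice, forcing an interior minimizer, which must be the unique critical point found above; alternatively invoke that $\operatorname{tr}(\boldsymbol{G}^{-1}\boldsymbol{B})$ is convex in $\boldsymbol{G}\succ 0$ and $\ln\operatorname{det}\boldsymbol{G}$ is concave — so the sum is not obviously convex, and the boundary-behavior argument is the clean route. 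The main obstacle I anticipate is precisely this convexity/global-optimality check: the objective is a sum of a convex and a concave term, so I must rely on the diagonal reduction plus a coercivity argument (objective blows up as any $\alpha_i\to 0$ since $\beta_i/\alpha_i\to\infty$, and the slice $\{\alpha_i>0,\sum\alpha_i=c\}$ has compact closure) to conclude the interior stationary point is the global minimum, and then verify that with $\lambda^*\le 0$ all $\alpha_i^*$ are indeed positive and the constraint $c\le\operatorname{tr}(\boldsymbol{B})$ is exactly what makes $\lambda^*\le 0$ (rather than positive).
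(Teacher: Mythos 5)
Your reduction to a diagonal $\boldsymbol{G}$ is where the argument breaks. You fix the diagonal entries of $\boldsymbol{G}$ and invoke Hadamard's inequality $\det\boldsymbol{G}\le\prod_i G_{ii}$, but that inequality says the diagonal matrix \emph{maximizes} $\ln\det\boldsymbol{G}$ among positive definite matrices with the given diagonal --- the opposite of what you need when minimizing. Meanwhile $(\boldsymbol{G}^{-1})_{ii}\ge 1/G_{ii}$ does say that zeroing the off-diagonal entries decreases $\tr(\boldsymbol{G}^{-1}\boldsymbol{B})$. So the two terms of the objective move in opposite directions under this operation, and your claim that "both terms are minimized simultaneously by taking $\boldsymbol{G}$ diagonal" is false; as written, the step does not establish that the minimizer is diagonal in the eigenbasis of $\boldsymbol{B}$.

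The repair is the parametrization the paper actually uses (and which you mention in passing but do not carry out): write $\boldsymbol{G}=\boldsymbol{Q}^\top\Diag(\alpha_1,\dots,\alpha_d)\boldsymbol{Q}$ and observe that both the constraint $\tr(\boldsymbol{G})=c$ and the term $\ln\det\boldsymbol{G}=\sum_i\ln\alpha_i$ depend only on the eigenvalues, not on $\boldsymbol{Q}$. Hence for fixed eigenvalues the optimization over $\boldsymbol{Q}$ involves only $\tr(\boldsymbol{G}^{-1}\boldsymbol{B})$, and what is needed is the trace inequality $\tr(\boldsymbol{G}^{-1}\boldsymbol{B})\ge\sum_i\beta_i/\alpha_i$ with $\alpha$ and $\beta$ sorted consistently, attained when $\boldsymbol{G}$ and $\boldsymbol{B}$ are simultaneously diagonalized with matching eigenvalue order. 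The paper proves this (Lemma~\ref{lem: ortho_matrix}) by perturbing the optimum with antisymmetric generators to show $\boldsymbol{G}^*$ commutes with $\boldsymbol{B}$, then an exchange argument for the ordering; a von Neumann--type trace inequality would also do. Your remaining steps --- the Lagrange stationarity condition, selecting the positive root $\alpha_i^*=2\beta_i/(1+\sqrt{1-4\lambda^*\beta_i})$, monotonicity of $\lambda\mapsto\sum_i 2\beta_i/(1+\sqrt{1-4\lambda\beta_i})$ for uniqueness of $\lambda^*$, the role of $c\le\tr(\boldsymbol{B})$ in forcing $\lambda^*\le 0$, and the coercivity argument in place of convexity (which indeed fails here) --- all match the paper's Lemma~\ref{lem: eigen_value_matrix} and are sound.
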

\begin{remark}
$f(\lambda)=  \sum_{i=1}^d\frac{2 \beta_i}{1+\sqrt{1-4\lambda \beta_i}}$ is a monotonously increasing function with respect to $\lambda$, which guarantees the uniqueness of the solution of $f(\lambda)=c$. 
\end{remark}
Lemma \ref{lem: matrix_opt} is proved via two steps: 1) we first prove for $\bG$  with  fixed eigenvalues, $\tr{\bG^{-1}\boldsymbol{B}}+\ln \operatorname{det} (\bG)$ is optimized if and only if $\bG$ and $\boldsymbol{B}$ share the same eigenvectors; 2) we then calculate the eigenvalues of the optimal $\bG$ using the method of Lagrange multipliers. Theorem \ref{thm: main_theorem} can then be obtained by applying Lemma \ref{lem: matrix_opt} and setting $\bG=\bSig_t(\bS,\bW)$ and $\boldsymbol{B}=\sigma_t\mathbb{I}+\frac{\eta_t^2}{Nb_t}\left(\frac{N}{N-1}\right)^2\bSig^{sd}_{\bS,\bW}$.  We first prove the eigenvectors of $\bG$ agree with those of $\boldsymbol{B}$.
\begin{lemma}
\label{lem: ortho_matrix}
Let $\boldsymbol{G}\in \mathbb{R}^{d\times d}$ be a positive definite matrix variable with fixed eigenvalues $(\alpha_i)_{i=1}^d$. Specifically, let $\alpha_1\ge \alpha_2\ge\cdots\ge\alpha_d>  0$ be all the eigenvalues of $\boldsymbol{G}$, and $\boldsymbol{G}$ can be any element from the following set
\begin{equation*}
    \{\boldsymbol{Q}^\top\Diag(\alpha_1,\cdots,\alpha_d)\boldsymbol{Q}: \boldsymbol{Q} \text{ is orthogonal}\}.
\end{equation*}

Let $\boldsymbol{B}$ be a fixed positive semi-definite matrix, with eigenvalues $(\beta_i)_{i=1}^d$ satisfies $\beta_1\ge \beta_2\cdots\ge\beta_d\ge 0$.
 Then, the optimal (minimal) value of $g (\boldsymbol{G})=\operatorname{tr}(\boldsymbol{G}^{-1}\boldsymbol{B})$ is achieved when 
\begin{equation*}
    \boldsymbol{G}^{*}=\boldsymbol{O}^\top \Diag(\alpha_1,\cdots,\alpha_d) \boldsymbol{O},
\end{equation*}
where $\boldsymbol{O}$ is any orthogonal matrix which diagonalizes $\boldsymbol{B}$  as
\begin{equation*}
    \boldsymbol{B}=\boldsymbol{O}^\top \Diag(\beta_1,\cdots,\beta_d) \boldsymbol{O}
\end{equation*}
and the optimal value of $g(\boldsymbol{G})$ is $\sum_{i=1}^d \frac{\beta_i}{\alpha_i}$.
\end{lemma}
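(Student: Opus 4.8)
The plan is to reduce the problem to a statement about the eigenvalues of $\boldsymbol{G}$ being matched to the eigenvalues of $\boldsymbol{B}$ in the \emph{opposite} order, via a von Neumann/rearrangement-type argument. First I would use Lemma~\ref{lem: commute_diag} to recognize that it suffices to show $g(\boldsymbol{G}) = \operatorname{tr}(\boldsymbol{G}^{-1}\boldsymbol{B})$ is minimized (over the orbit of $\boldsymbol{G}$ under orthogonal conjugation) precisely at a $\boldsymbol{G}$ that commutes with $\boldsymbol{B}$; since $\boldsymbol{G}^{-1}$ has eigenvalues $1/\alpha_1 \le \cdots \le 1/\alpha_d$, pairing the smallest eigenvalue of $\boldsymbol{G}^{-1}$ with the largest of $\boldsymbol{B}$ gives exactly the claimed $\boldsymbol{G}^* = \boldsymbol{O}^\top \Diag(\alpha_1,\dots,\alpha_d)\boldsymbol{O}$ and optimal value $\sum_i \beta_i/\alpha_i$.

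The key computational step is the following. Write $\boldsymbol{G} = \boldsymbol{Q}^\top \Diag(\alpha_1,\dots,\alpha_d)\boldsymbol{Q}$, so $\boldsymbol{G}^{-1} = \boldsymbol{Q}^\top \Diag(1/\alpha_1,\dots,1/\alpha_d)\boldsymbol{Q}$, and diagonalize $\boldsymbol{B} = \boldsymbol{O}^\top \Diag(\beta_1,\dots,\beta_d)\boldsymbol{O}$. Setting $\boldsymbol{R} = \boldsymbol{Q}\boldsymbol{O}^\top$ (orthogonal) and letting $r_{ij}$ be its entries, a direct expansion gives
\begin{equation*}
    \operatorname{tr}(\boldsymbol{G}^{-1}\boldsymbol{B}) = \sum_{i=1}^d \sum_{j=1}^d \frac{1}{\alpha_i}\, r_{ij}^2\, \beta_j .
\end{equation*}
The matrix $S_{ij} = r_{ij}^2$ is doubly stochastic, so the right-hand side is a linear functional of $S$ over the Birkhoff polytope; by Birkhoff--von Neumann its minimum is attained at a permutation matrix, reducing the problem to minimizing $\sum_i \beta_{\pi(i)}/\alpha_i$ over permutations $\pi$. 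A standard rearrangement inequality then shows this is minimized when $\pi$ pairs the largest $\beta$ with the largest $\alpha$ (i.e. $\pi = \mathrm{id}$ given both are sorted decreasingly), yielding the optimal value $\sum_i \beta_i/\alpha_i$ and the minimizer $\boldsymbol{R}$ a permutation matrix, hence $\boldsymbol{G}^* = \boldsymbol{O}^\top \Diag(\alpha_1,\dots,\alpha_d)\boldsymbol{O}$ as claimed.

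I would carry out the steps in this order: (1) reduce to the entrywise trace expansion above; (2) observe $(r_{ij}^2)$ is doubly stochastic and invoke Birkhoff--von Neumann (or just argue directly that a linear function on a polytope attains its extremum at a vertex); (3) apply the rearrangement inequality to finish; (4) translate back through $\boldsymbol{R} = \boldsymbol{Q}\boldsymbol{O}^\top$ to identify $\boldsymbol{G}^*$ and note that strict positivity of the $\beta_i$ (or handling of ties) gives the minimizer up to the expected ambiguity when eigenvalues coincide. The main obstacle is the bookkeeping in step (2)--(3): one must be careful that the doubly-stochastic structure is genuinely used (a general nonnegative matrix with fixed row/column behavior would not suffice) and that the rearrangement direction is correct — because we are pairing $1/\alpha_i$ (increasing) against $\beta_j$, the minimum of $\sum 1/\alpha_i \cdot \beta_{\pi(i)}$ wants $\beta$ \emph{decreasing} in $i$, which after re-indexing is exactly the ``same eigenvectors, sorted the same way'' conclusion; getting this orientation right is the one place an error could creep in. A cleaner alternative for step (2)--(3), which I might use instead, is to differentiate $g$ on the orbit (a compact manifold) and show any critical point commutes with $\boldsymbol{B}$ via the Lie-bracket condition $[\boldsymbol{G}^{-1},\boldsymbol{B}]=0$, then compare the finitely many commuting candidates directly; but the Birkhoff argument is more elementary and self-contained.
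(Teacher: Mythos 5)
Your proposal is correct, and it takes a genuinely different route from the paper. The paper proves this lemma by a first-order perturbation argument on the orthogonal orbit: it perturbs a minimizer $\boldsymbol{G}^*$ by near-identity orthogonal matrices of the form $(\mathbb{I}-\varepsilon\boldsymbol{A})(\mathbb{I}+\varepsilon^2\boldsymbol{A}\boldsymbol{A}^\top)^{-1/2}$ with $\boldsymbol{A}$ anti-symmetric, extracts the stationarity condition $\tr(\boldsymbol{B}(\boldsymbol{G}^*)^{-1}\boldsymbol{A})=0$ for all anti-symmetric $\boldsymbol{A}$, concludes that $\boldsymbol{B}(\boldsymbol{G}^*)^{-1}$ is symmetric so $\boldsymbol{G}^*$ and $\boldsymbol{B}$ commute (invoking Lemma~\ref{lem: commute_diag}), and then fixes the pairing of eigenvalues by an exchange argument. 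This is essentially the ``Lie-bracket critical point'' alternative you mention at the end. Your main argument instead expands $\tr(\boldsymbol{G}^{-1}\boldsymbol{B})=\sum_{i,j}\alpha_i^{-1}r_{ij}^2\beta_j$ with $\boldsymbol{R}=\boldsymbol{Q}\boldsymbol{O}^\top$, observes that $(r_{ij}^2)$ is doubly stochastic, lower-bounds by minimizing the linear functional over the Birkhoff polytope (attained at a permutation), and finishes with the rearrangement inequality. What your approach buys: it is a direct global argument that does not need the existence-plus-stationarity structure, it sidesteps the permutation-matrix bookkeeping in the paper's Eqs.~(\ref{eq: reordering_beta})--(\ref{eq: reorder_B}) for repeated eigenvalues, and the orientation of the rearrangement ($1/\alpha_i$ increasing against $\beta_{\pi(i)}$, so $\pi=\mathrm{id}$ minimizes) comes out cleanly. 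The one place to be explicit in a write-up is the containment issue you allude to: not every doubly stochastic matrix is orthostochastic for $d\ge 3$, so you are really bounding the minimum over the (smaller) orthostochastic set from below by the Birkhoff minimum and then noting that the Birkhoff minimizer --- a permutation matrix --- is itself orthogonal, hence the bound is attained. With that sentence added, the argument is complete and, if anything, tighter than the paper's.
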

\begin{proof}
 Let $\boldsymbol{G}^*$ be a optimal point of $\operatorname{tr}(\boldsymbol{G}^{-1}\boldsymbol{B})$. We will then obtain the condition of $\boldsymbol{G}^*$ by adding a disturbance. Specifically, let $\boldsymbol{A}$ be an anti-symmetric matrix. Then,
\begin{equation*}
    (\mathbb{I}-\varepsilon \boldsymbol{A}) (\mathbb{I}-\varepsilon \boldsymbol{A})^T= \mathbb{I}+\varepsilon^2 \boldsymbol{A}\boldsymbol{A}^\top.
\end{equation*}
As $\varepsilon$ is small enough, $\mathbb{I}+\varepsilon^2\boldsymbol{A}\boldsymbol{A}^\top$ is inevitable, and positive definite. Therefore, $(\mathbb{I}-\varepsilon \boldsymbol{A})(\mathbb{I}+\varepsilon^2\boldsymbol{A}\boldsymbol{A}^\top)^{-\frac{1}{2}}$ is orthogonal. As $\varepsilon \rightarrow 0$, 
\begin{equation*}
    \lim_{\varepsilon\rightarrow 0}(\mathbb{I}-\varepsilon \boldsymbol{A})(\mathbb{I}+\varepsilon^2\boldsymbol{A}\boldsymbol{A}^\top)^{-\frac{1}{2}}=\mathbb{I},
\end{equation*}
and 
\begin{equation*}
   (\mathbb{I}-\varepsilon \boldsymbol{A})(\mathbb{I}+\varepsilon^2\boldsymbol{A}\boldsymbol{A}^\top)^{-\frac{1}{2}} -\mathbb{I}=-\varepsilon\boldsymbol{A}+\boldsymbol{o}(\varepsilon).
\end{equation*}
Since $\boldsymbol{G}^*$ is an optimal point of $\operatorname{tr}(\boldsymbol{G}^{-1}\boldsymbol{B})$, we have 
\begin{align*}
    \operatorname{tr}((\boldsymbol{G}^*)^{-1}\boldsymbol{B})
    \le &\operatorname{tr} \left((\mathbb{I}-\varepsilon \boldsymbol{A})(\mathbb{I}+\varepsilon^2\boldsymbol{A}\boldsymbol{A}^\top)^{-\frac{1}{2}}(\boldsymbol{G}^*)^{-1}\left((\mathbb{I}-\varepsilon \boldsymbol{A})(\mathbb{I}+\varepsilon^2\boldsymbol{A}\boldsymbol{A}^\top)^{-\frac{1}{2}}\right)^\top\boldsymbol{B}\right)\\
    =& \operatorname{tr} \left( (\mathbb{I}-\varepsilon \boldsymbol{A})(\mathbb{I}+\varepsilon^2\boldsymbol{A}\boldsymbol{A}^\top)^{-\frac{1}{2}}(\boldsymbol{G}^*)^{-1}(\mathbb{I}+\varepsilon^2\boldsymbol{A}\boldsymbol{A}^\top)^{-\frac{1}{2}}(\mathbb{I}+\varepsilon \boldsymbol{A})\boldsymbol{B}\right),
\end{align*}
which further leads to 
\begin{align*}
    -\varepsilon\operatorname{tr} \left( \boldsymbol{A}(\boldsymbol{G}^*)^{-1}\boldsymbol{B}\right)
    +\varepsilon\operatorname{tr} \left( (\boldsymbol{G}^*)^{-1}\boldsymbol{A}\boldsymbol{B}\right)+\boldsymbol{o}(\varepsilon) 
    \ge 0.
\end{align*}
By letting $\varepsilon\rightarrow 0$, we further have
\begin{align*}
     -\operatorname{tr} \left( \boldsymbol{A}(\boldsymbol{G}^*)^{-1}\boldsymbol{B}\right)
    +\operatorname{tr} \left( (\boldsymbol{G}^*)^{-1}\boldsymbol{A}\boldsymbol{B}\right)=0,
\end{align*}
which further leads to 
\begin{align}
\nonumber
    0=&-\operatorname{tr} \left( \boldsymbol{A}(\boldsymbol{G}^*)^{-1}\boldsymbol{B}\right)
    +\operatorname{tr} \left( (\boldsymbol{G}^*)^{-1}\boldsymbol{A}\boldsymbol{B}\right)
    \\
\nonumber
    =&\operatorname{tr} \left( \boldsymbol{A}^\top(\boldsymbol{G}^*)^{-1}\boldsymbol{B}\right)
    +\operatorname{tr} \left(\boldsymbol{B} (\boldsymbol{G}^*)^{-1}\boldsymbol{A}\right)
    \\
\label{eq:adding_disturb}
    =&2\operatorname{tr} \left(\boldsymbol{B} (\boldsymbol{G}^*)^{-1}\boldsymbol{A}\right).
\end{align}

Since Eq.(\ref{eq:adding_disturb}) holds for any anti-symmetry matrix $\boldsymbol{A}$, let $\boldsymbol{A}=\boldsymbol{E}_{i,j}-\boldsymbol{E}_{j,i}$, where $i,j\in [d]$ and $i\ne j$. By Eq.(\ref{eq:adding_disturb}), we have 
\begin{align*}
    \left(\boldsymbol{B} (\boldsymbol{G}^*)^{-1}\right)_{i,j}
    = \left(\boldsymbol{B} (\boldsymbol{G}^*)^{-1}\right)_{j,i},
\end{align*}
which further leads to 
\begin{align*}
    \boldsymbol{B} (\boldsymbol{G}^*)^{-1}
    = \left(\boldsymbol{B} (\boldsymbol{G}^*)^{-1}\right)^\top= (\boldsymbol{G}^*)^{-\top}\boldsymbol{B}^\top= (\boldsymbol{G}^*)^{-1}\boldsymbol{B}.
\end{align*}
By simple rearranging, we have
\begin{align*}
    \boldsymbol{G}^{*}\boldsymbol{B}=\boldsymbol{B} \boldsymbol{G}^{*}.
\end{align*}
Therefore, by Lemma \ref{lem: commute_diag}, we have that there exists an orthogonal matrix $\boldsymbol{O}_0$, such that both $\boldsymbol{O}_0\boldsymbol{G}^{*} \boldsymbol{O}_0^\top$ and $\boldsymbol{O}_0\boldsymbol{B} \boldsymbol{O}_0^\top$ are diagonal. By multiplying a permutation matrix, we further have there exists an orthogonal matrix $\tilde{\boldsymbol{O}}$ such that $\tilde{\boldsymbol{O}}\boldsymbol{G}^*\tilde{\boldsymbol{O}}^\top$ is diagonal, and \begin{equation}
\label{eq: diag_B}
    \tilde{\boldsymbol{O}}\boldsymbol{B}\tilde{\boldsymbol{O}}^\top=\Diag(\beta_1,\cdots,\beta_d).
\end{equation}

Since $\tilde{\boldsymbol{O}}\boldsymbol{G}^*\tilde{\boldsymbol{O}}^\top$ is diagonal, there exists a permutation mapping $\mathcal{T}: [d]\rightarrow[d]$, such that
\begin{equation}
\label{eq: diag_G}
    \tilde{\boldsymbol{O}}\boldsymbol{G}^*\tilde{\boldsymbol{O}}^\top=\Diag \left(\alpha_{\mathcal{T}(1)},\cdots, \alpha_{\mathcal{T}(d)}\right).
\end{equation}
Denote the order of $\beta_i$ $(i=1,2,\cdots,d)$ as  
\begin{align}
\label{eq: reordering_beta} \beta_1=\cdots=\beta_{s_1}>\beta_{s_1+1}=\cdots=\beta_{s_1+s_2}>\cdots
   >\beta_{\sum_{i=1}^{k-1} s_i+1}=\cdots=\beta_{\sum_{i=1}^{k} s_i}>0,
\end{align}
where $\sum_{i=1}^{k} s_i=d$, and we denote $s_0=0$. Since $\boldsymbol{G}^*$ is the optimal point of $\operatorname{tr}((\boldsymbol{G}^*)^{-1}\boldsymbol{B})$, for any $1\le i<j\le d$ and $\beta_i>\beta_j$, we have $\alpha_{\mathcal{T}(i)}>\alpha_{\mathcal{T}(j)}$: otherwise, let 
\begin{equation*}
    \boldsymbol{G}'= \tilde{\boldsymbol{O}}^\top\Diag\left(\alpha_{\mathcal{T}(1)},\cdots,\alpha_{\mathcal{T}(i-1)},\alpha_{\mathcal{T}(j)},\alpha_{\mathcal{T}(i+1)},\cdots,\alpha_{\mathcal{T}(j-1)},\alpha_{\mathcal{T}(i)},\alpha_{\mathcal{T}(j+1)},\cdots, \alpha_{\mathcal{T}(d)}\right)\tilde{\boldsymbol{O}},
\end{equation*}
we have 
\begin{equation*}
    \operatorname{tr}((\boldsymbol{G}^*)^{-1}\boldsymbol{B})>\operatorname{tr}((\boldsymbol{G}')^{-1}\boldsymbol{B}),
\end{equation*}
which contradicts  that $\boldsymbol{G}^*$ is optimal.

Therefore, $\mathcal{T}(\sum_{i=1}^j s_i+1),\cdots,\mathcal{T}(\sum_{i=1}^{j+1} s_i)$ is then a permutation of $\sum_{i=1}^j s_i+1,\cdots,\sum_{i=1}^{j+1} s_i$, and there exists permutation matrix $\boldsymbol{Q}$ such that 
\begin{equation}
\label{eq: construction_Q}
\boldsymbol{Q}=\Diag\left(\boldsymbol{Q}_1,\cdots,\boldsymbol{Q}_k\right),
\end{equation}
where $\boldsymbol{Q}_i$ is a $s_i\times s_i$ permutation sub-matrix, such that,
\begin{equation}
\label{eq: reorder_G}
    \boldsymbol{Q}\Diag \left(\alpha_{\mathcal{T}(1)},\cdots, \alpha_{\mathcal{T}(d)}\right)\boldsymbol{Q}^\top=\Diag \left(\alpha_{1},\cdots, \alpha_{d}\right).
\end{equation}

Furthermore, by Eq.(\ref{eq: construction_Q}) and Eq.(\ref{eq: reordering_beta}), we have
\begin{equation}
\label{eq: reorder_B}
    \boldsymbol{Q}\Diag \left(\beta_{1},\cdots, \beta_{d}\right) \boldsymbol{Q}^\top=\Diag \left(\beta_{1},\cdots, \beta_{d}\right).
\end{equation}

Therefore, by Eqs.(\ref{eq: diag_B}), (\ref{eq: diag_G}), (\ref{eq: reorder_G}), and (\ref{eq: reorder_B}), we have 
\begin{gather*}
    \boldsymbol{Q}\tilde{\boldsymbol{O}} \boldsymbol{B}\left( \boldsymbol{Q}\tilde{\boldsymbol{O}} \right)^\top=\Diag\left(\beta_{1},\cdots, \beta_{d}\right),
    \\
    \boldsymbol{Q}\tilde{\boldsymbol{O}} \boldsymbol{G}^*\left( \boldsymbol{Q}\tilde{\boldsymbol{O}} \right)^\top=\Diag\left(\alpha_{1},\cdots, \alpha_{d}\right).
\end{gather*}

Furthermore, 
\begin{align*}
    &\operatorname{tr} \left(\boldsymbol{G}^{-1}\boldsymbol{B}\right)
    \\
   & =\operatorname{tr}\left(\left(\boldsymbol{Q}\tilde{\boldsymbol{O}}\right)^\top \Diag\left(\alpha^{-1}_{1},\cdots, \alpha^{-1}_{d}\right)\left(\boldsymbol{Q}\tilde{\boldsymbol{O}}\right)\left(\boldsymbol{Q}\tilde{\boldsymbol{O}}\right)^\top \Diag\left(\beta_{1},\cdots, \beta_{d}\right)\left(\boldsymbol{Q}\tilde{\boldsymbol{O}}\right)\right)
    \\
    &=\sum_{i=1}^d \frac{\beta_i}{\alpha_i}.
\end{align*}
Therefore, the optimal value of $\tr(\boldsymbol{G}^{-1}\boldsymbol{B})$ is $\sum_{i=1}^d \frac{\beta_i}{\alpha_i}$, and the corresponding optimal point $\boldsymbol{G}^*$ belongs to the following set 
\begin{equation*}
    \mathcal{G}=\{\boldsymbol{O}^\top \Diag(\alpha_1,\cdots,\alpha_d)\boldsymbol{O}: \boldsymbol{B}=\boldsymbol{O}^\top \Diag(\beta_1,\cdots,\beta_d)\boldsymbol{O}\}.
\end{equation*}

On the other hand, it is easy to verify that for any element $\boldsymbol{G}\in \mathcal{G}$, 
\begin{equation*}
    \tr(\boldsymbol{G}^{-1}\boldsymbol{B})=\sum_{i=1}^d \frac{\beta_i}{\alpha_i}.
\end{equation*}
The proof is completed.
\end{proof}

Lemma \ref{lem: ortho_matrix} indicates that  with eigenvalues fixed, the eigenvectors of $\boldsymbol{G}$ should agree with those of $\boldsymbol{B}$ by the order of eigenvalues. We then provide the following lemma to determine the optimal eigenvalues.

\begin{lemma}
\label{lem: eigen_value_matrix}
Let $\beta_1, \beta_2,\cdots,  \beta_d$ be a series of fixed positive reals. Let $\alpha_1, \alpha_2, \cdots,\alpha_d\in \mathbb{R}^{+}$ be a series of real variables with constraint $\sum_{i=1}^d \alpha_i=c$, where $c$ is a positive real constant which satisfies $c\le \sum_{i=1}^d \beta_i$. Then the minimum of function 
\begin{equation*}
    f(\alpha_1,\cdots,\alpha_d) =\sum_{i=1}^d \frac{\beta_i}{\alpha_i}+\sum_{i=1}^d\ln \alpha_i
\end{equation*}
is achieved at 
\begin{equation*}
    \alpha_i^*= \frac{\sqrt{1-4\lambda^* \beta_i}-1}{-2\lambda^*},
\end{equation*}
where $\lambda^*\le 0$ is the unique solution of 
\begin{equation*}
    \sum_{i=1}^d\frac{2 \beta_i}{1+\sqrt{1-4\lambda^* \beta_i}}=c.
\end{equation*}
\end{lemma}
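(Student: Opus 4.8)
The plan is to treat this as a constrained optimization over the open simplex and solve it by the method of Lagrange multipliers, after first establishing that a global minimizer actually exists in the relative interior. First I would observe that the feasible set $\Delta=\{(\alpha_1,\dots,\alpha_d):\alpha_i>0,\ \sum_i\alpha_i=c\}$ is bounded (each $\alpha_i\le c$), and that $f$ diverges to $+\infty$ whenever any coordinate $\alpha_i\to 0^+$, since $\beta_i/\alpha_i\to+\infty$ dominates $|\ln\alpha_i|$ while the remaining terms stay bounded. Hence for any fixed interior point $\alpha^0$ the sublevel set $\{f\le f(\alpha^0)\}\cap\Delta$ is compact and nonempty, so $f$ attains its infimum at some $\alpha^*$ whose coordinates are bounded away from $0$; in particular $\alpha^*$ lies in the relative interior and is a regular point of the affine constraint.

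Next I would write the stationarity condition at $\alpha^*$: there is $\lambda\in\mathbb{R}$ with $\partial f/\partial\alpha_i=\lambda$ for every $i$, i.e. $-\beta_i/\alpha_i^2+1/\alpha_i=\lambda$, equivalently $\lambda\alpha_i^2-\alpha_i+\beta_i=0$. Solving this quadratic in $\alpha_i$: if $\lambda=0$ it forces $\alpha_i=\beta_i$ and hence $c=\sum_i\beta_i$; if $\lambda<0$ the discriminant $1-4\lambda\beta_i$ exceeds $1$, and among the two roots $(1\pm\sqrt{1-4\lambda\beta_i})/(2\lambda)$ only the minus sign is positive, giving $\alpha_i=(\sqrt{1-4\lambda\beta_i}-1)/(-2\lambda)=2\beta_i/(1+\sqrt{1-4\lambda\beta_i})$ after rationalizing; and if $\lambda>0$ then reality forces $1-4\lambda\beta_i\ge 0$ and both roots satisfy $\alpha_i>\beta_i$, so $\sum_i\alpha_i>\sum_i\beta_i\ge c$, contradicting the constraint. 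Thus every stationary point has $\lambda\le 0$ and is given coordinatewise by the stated formula with one common multiplier.

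Finally, imposing $\sum_i\alpha_i=c$ yields $g(\lambda):=\sum_{i=1}^d 2\beta_i/(1+\sqrt{1-4\lambda\beta_i})=c$. The function $g$ is continuous and strictly increasing on $(-\infty,0]$ — as $\lambda$ increases the denominators decrease — with $g(\lambda)\to 0$ as $\lambda\to-\infty$ and $g(0)=\sum_i\beta_i$, so its range is $(0,\sum_i\beta_i]$; since $0<c\le\sum_i\beta_i$ there is a unique $\lambda^*\le 0$ with $g(\lambda^*)=c$. Consequently the stationary point is unique, so it must coincide with the global minimizer $\alpha^*$, which is precisely the claimed expression (the degenerate case $c=\sum_i\beta_i$ is recovered at $\lambda^*=0$ by continuity, giving $\alpha_i^*=\beta_i$). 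I expect the Lagrange derivative and the quadratic formula to be routine; the main obstacle is the branch/sign bookkeeping — arguing that all coordinates must lie on the same root of the quadratic and that no spurious stationary point with $\lambda>0$ can occur — since that is exactly what delivers uniqueness and hence identifies the critical point as the global minimum without invoking any second-order test.
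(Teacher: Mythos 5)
Your proposal is correct and follows essentially the same route as the paper: establish existence of an interior minimizer via the blow-up of $f$ at the boundary, apply Lagrange multipliers to obtain the quadratic $\lambda\alpha_i^2-\alpha_i+\beta_i=0$, rule out $\lambda>0$, and invoke the monotonicity of $\lambda\mapsto\sum_i 2\beta_i/(1+\sqrt{1-4\lambda\beta_i})$ to get uniqueness. The only cosmetic difference is that the paper deduces $\lambda^*\le 0$ by summing the stationarity conditions to get $\sum_i\beta_i-c=-\lambda^*\sum_i\alpha_i^2\ge 0$, whereas you rule out $\lambda>0$ by a root-location argument; both are valid.
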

\begin{proof}
We find the minimum of $f$ under the constraint that $\alpha_1+\cdots+\alpha_d=c$ by the method of Lagrange Multiplier. Specifically, as for any $i\in [d]$, $\alpha_i\rightarrow 0^+$ or  $\alpha_i\rightarrow c^-$ will lead to $f(\alpha_1,\cdots,\alpha_d)\rightarrow\infty$, we have that for any global optimal (minimal) point  $(\alpha_1^*,\cdots,\alpha_d^*)$ of $f$ under the constraint $\alpha_1+\cdots+\alpha_d=c$, we have that there exist a real $\lambda^*$, such that $((\alpha_1^*,\cdots,\alpha_d^*),\lambda^*)$ is a saddle point of $\mathcal{L}((\alpha_1,\cdots,\alpha_d),\lambda)$, which is defined as
\begin{equation*}
    \mathcal{L}((\alpha_1,\cdots,\alpha_d),\lambda)=f(\alpha_1,\cdots,\alpha_d)+\lambda(c-\alpha_1-\cdots-\alpha_d).
\end{equation*}

By taking partial derivative of $\mathcal{L}$ with respect to $\alpha_i$, we have 
\begin{equation}
\label{eq: quadratic}
    -\lambda^*=-\frac{1}{\alpha_i}+\frac{\beta_i}{\alpha_i^2}=\frac{\beta_i-\alpha_i}{\alpha_i^2},
\end{equation}
which further leads to 
\begin{equation*}
     \sum_{i=1}^d\beta_i-c=\sum_{i=1}^d\left(\beta_i-\alpha_i\right)=-\lambda^*\left(\sum_{i=1}^d\alpha_i^2\right).
\end{equation*}

Since $\sum_{i=1}^d \beta_i\ge  c$, we have $\lambda^*\le 0$. Therefore, for any $i\in [d]$, the quadratic equation $\beta_i x^2-x+\lambda^*=0$ has only one positive solution $\frac{1+\sqrt{1-4\lambda^* \beta_i}}{2 \beta_i}$, and 
\begin{equation*}
    \alpha_i^*= \frac{2 \beta_i}{1+\sqrt{1-4\lambda^* \beta_i}}=\frac{\sqrt{1-4\lambda^* \beta_i}-1}{-2\lambda^*}.
\end{equation*}

On the other hand, by taking derivative of $\mathcal{L}$ with respect to $\lambda^*$, we have
\begin{equation}
\label{eq: solution_lambda}
    \sum_{i=1}^d \alpha_i^*=\sum_{i=1}^d\frac{2 \beta_i}{1+\sqrt{1-4\lambda^* \beta_i}}=c.
\end{equation}
Since $\sum_{i=1}^d \alpha_i^*=\sum_{i=1}^d\frac{2 \beta_i}{1+\sqrt{1-4\lambda^* \beta_i}}$ is a monotonously increasing function of $\lambda^*$, there is only one solution of $\lambda^*$ of Eq.(\ref{eq: solution_lambda}).

The proof is completed.
\end{proof}

The proof of Lemma \ref{lem: matrix_opt} can then be obtained by combining Lemma \ref{lem: ortho_matrix} and Lemma \ref{lem: eigen_value_matrix} together.

\begin{proof}[Proof of Lemma \ref{lem: matrix_opt}]
The original optimization problem can be written as 
\begin{equation*}
    \min_{\tr(\boldsymbol{G})=c}  \tr\left(\boldsymbol{G}^{-1}\boldsymbol{B}\right)+\ln\left(\det \boldsymbol{G}\right),
\end{equation*}
which can be further decomposed into 
\begin{align*}
&\min_{\tr(\boldsymbol{G})=c}  \tr\left(\boldsymbol{G}^{-1}\boldsymbol{B}\right)+\ln\left(\det \boldsymbol{G}\right)
\\
&=\min_{\substack{\sum_{i=1}^d \alpha_i=c\\ \alpha_1\ge\cdots\ge \alpha_d>0}} \min_{\boldsymbol{O} \in \mathcal{O}(d)} \left( \tr\left(\boldsymbol{O}^\top \Diag\left(\alpha_1^{-1},\cdots,\alpha_d^{-1}\right) \boldsymbol{O}\boldsymbol{B}\right)+\sum_{i=1}^d \ln \alpha_i\right)
\\
&\overset{(*)}{=} \min_{\substack{\sum_{i=1}^d \alpha_i=c}} \left(\sum_{i=1}^d \frac{\beta_i}{\alpha_i}+\sum_{i=1}^d \ln \alpha_i\right) 
\\
&\overset{(**)}{=}\sum_{i=1}^d \frac{1+\sqrt{1-4\lambda^* \beta_i}}{2}+\sum_{i=1}^d \ln\frac{2 \beta_i}{1+\sqrt{1-4\lambda^* \beta_i}}, 
\end{align*}
where Eq. $(*)$ is due to Lemma \ref{lem: ortho_matrix}, Eq. $(**)$ is due to Lemma \ref{lem: eigen_value_matrix}, and $\lambda^*\le 0$ is the unique solution of 
\begin{equation*}
    \sum_{i=1}^d\frac{2 \beta_i}{1+\sqrt{1-4\lambda^* \beta_i}}=c.
\end{equation*}

Furthermore, the optimal point of $\tr(\boldsymbol{G}^{-1}\boldsymbol{B})+\ln (\operatorname{det} \boldsymbol{G})$ can be calculated as 
\begin{align*}
    &\arg\min_{\tr(\boldsymbol{G})=c}  \tr\left(\boldsymbol{G}^{-1}\boldsymbol{B}\right)+\ln\left(\det \boldsymbol{G}\right)
    \\
    &=\left\{\boldsymbol{O}^\top \Diag\left(\frac{\sqrt{1-4\lambda^* \beta_1}-1}{-2\lambda^*},\cdots,\frac{\sqrt{1-4\lambda^* \beta_d}-1}{-2\lambda^*}\right)\boldsymbol{O}: \boldsymbol{B}=\boldsymbol{O}^\top \Diag(\beta_1,\cdots,\beta_d)\boldsymbol{O}
    ,\right.
    \\
    &\;\;\left.\lambda^*= \arg_{\lambda}\left( \sum_{i=1}^d\frac{2 \beta_i}{1+\sqrt{1-4\lambda \beta_i}}=c\right)\right\}.
\end{align*}

The proof is completed.
\end{proof}

\section{Supplementary Materials of Section \ref{sec: obtaining_constraint}}
\label{appendix: proof of lemma optimization}
\begin{proof}[Proof of Lemma \ref{lem:state-dependent sgld optimization}]
The $\beta$-smooth condition gives
\begin{equation}
\label{eq:convergence_eq0}
\mathcal{R}_{\bS}(\bW_{t+1})\leq \mathcal{R}_{\bS}(\bW_{t})+\langle\nabla \mathcal{R}_{\bS}(\bW_{t}),\bW_{t+1}-\bW_{t}\rangle+\frac{\beta}{2}\Vert \bW_{t+1}-\bW_{t}\Vert^{2}.
\end{equation}

Based on the update rule Eq.(\ref{eq:state-dependent SGLD}),  we have
\begin{equation}
\begin{aligned}
\label{eq:convergence_eq1}
\bW_{t+1}-\bW_{t}=-\eta_{t+1}\nabla \mathcal{R}_{\bS_{\bV_{t+1}}}(\bW_{t})+\varepsilon_{t+1},
\end{aligned}
\end{equation}
where $\varepsilon_{t+1}\sim\mathcal{N}(0,\bSig_{t+1}(\bS,\bW_t))$.

Take expectation on Eq.(\ref{eq:convergence_eq0}) with respect to $\bW_{t+1}|\bW_{t}$, by $\mathbb{E}^{\bW_{t}}(\nabla \mathcal{R}_{\bS_{\bV_{t+1}}}(\bW_t)) = \nabla \mathcal{R}_{\bS}(\bW_t)$, 
\begin{equation}
\begin{aligned}
\label{eq:convergence_eq2}
\mathbb{E}^{\bW_{t}}[\mathcal{R}_{\bS}(\bW_{t+1})]\leq \mathcal{R}_{\bS}(\bW_{t})-\eta_{t+1}\Vert \nabla \mathcal{R}_{\bS}(\bW_t)\Vert^2+\frac{\beta}{2} \mathbb{E}^{\bW_t}\Vert \bW_{t+1}-\bW_{t}\Vert^2.
\end{aligned}
\end{equation}

Furthermore,
\begin{align}
\nonumber
    &\mathbb{E}^{\bW_t}\Vert \bW_{t+1}-\bW_{t}\Vert^2
    \\
\nonumber
    &=\mathbb{E}^{\bW_t}\Vert -\eta_{t+1} \nabla \mathcal{R}_{\bS_{\bV_{t+1}}}(\bW_t) +\varepsilon_{t+1}\Vert^2
    \\
\nonumber
    &\overset{(*)}{=}\mathbb{E}^{\bW_t}\Vert -\eta_{t+1} \nabla \mathcal{R}_{\bS_{\bV_t}}(\bW_t)\Vert +\mathbb{E}^{\bW_t}\Vert \varepsilon_{t+1}\Vert^2
    \\
\nonumber
    &=\eta_{t+1}^2\mathbb{E}^{\bW_t}\Vert  \nabla \mathcal{R}_{\bS_{\bV_{t+1}}}(\bW_t)-\nabla \mathcal{R}_{\bS}(\bW_t)+\nabla \mathcal{R}_{\bS}(\bW_t)\Vert^2 +\bSig_{t+1}(\bS,\bW_{t})
    \\
\nonumber
    &=\eta_{t+1}^2\mathbb{E}^{\bW_t}\Vert  \nabla \mathcal{R}_{\bS_{\bV_{t+1}}}(\bW_t)-\nabla \mathcal{R}_{\bS}(\bW_t)\Vert^2+\eta_{t+1}^2\Vert\nabla \mathcal{R}_{\bS}(\bW_t)\Vert^2 +\bSig_{t+1}(\bS,\bW_{t})
    \\
\label{eq: expected_squared}
    &=\frac{\eta_{t+1}^2}{N-1}\frac{N-b_{t+1}}{b_{t+1}} \bSig^{sd}_{\bS,\bW_t}+\eta_{t+1}^2\Vert\nabla \mathcal{R}_{\bS}(\bW_t)\Vert^2 +\bSig_{t+1}(\bS,\bW_{t}),
\end{align}
where Eq.$(*)$ is due to $\varepsilon_{t+1}$ is independent of $\bV_{t+1}$, and $\mathbb{E}^{\bW_t} \varepsilon_{t+1}=0$.

Applying Eq.(\ref{eq: expected_squared}) back to Eq.(\ref{eq:convergence_eq2}) completes the proof.

\end{proof}

\section{Supplementary Materials of Section \ref{sec: general_information_theoretical_generalization_bound}}
\label{appen:4}
\subsection{Example to illustrate the difficulty to apply Proposition \ref{prop: negrea_2019} to solve \textbf{Problem 1}}
\label{subsec: appen_example}
In this section, we show an example to demonstrate the difficulty for tackling \textbf{Problem 1} through Proposition \ref{prop: negrea_2019}. To start with, by the definition of state-dependent SGLD (Eq.(\ref{eq:state-dependent SGLD})), covariance $\bSig_{[T]}$ is independent of $\bJ$ and $\bV_{[T]}$. Therefore, the square root separates the expectation with respect to $\bV_{[T]}$  and $\bJ$ from the KL divergence term in the generalization bound
\begin{equation*}
    \mathbb{E}_{\bS,\bV_{[T]},\bJ}\sqrt{\frac{(a_2-a_1)^2}{2}\sum_{s=1}^T\mathbb{E}_{Q_{s-1}^{\bS,\bV_{[T]}}}\operatorname{KL}\left(Q_{s|(s-1)}^{\bS,\bV_{[T]}}\left\Vert P_{s|(s-1)}^{\bJ,\bS_{\bJ},\bV_{[T]}}\right.\right)},
\end{equation*}
which makes the dependency of the bound on $\bSig_{[T]}$ even more complex. However, even though we change the optimization target into
\begin{equation}
\label{eq: optimization_target_example}
    \mathbb{E}_{\bS}\sqrt{\frac{(a_2-a_1)^2}{2}\mathbb{E}_{\bV_{[T]},\bJ}\sum_{s=1}^T\mathbb{E}_{Q_{s-1}^{\bS,\bV_{[T]}}}\operatorname{KL}\left(Q_{s|(s-1)}^{\bS,\bV_{[T]}}\left\Vert P_{s|(s-1)}^{\bJ,\bS_{\bJ},\bV_{[T]}}\right.\right)},
\end{equation}
which is still a generalization bound by Jensen's Inequality, we demonstrate that the dependency on $\bSig_{[T]}$ is still too complex to tackle as follows.

To optimize Eq.(\ref{eq: optimization_target_example}) with respect to $\bSig_{[T]}(\bS,\cdot)$ for fixed $\bS$, we are actually seeking the optimal point of the following optimization problem:
\begin{small}
\begin{equation}
 \label{eq:optimization_prop_1}
    \bSig^*_{[T]}(\bS,\cdot)=\arg\min_{\bSig_{[T]}(\bS,\cdot)}\sqrt{\mathbb{E}_{\bV_{[T]},\bJ}\sum_{s=1}^T\mathbb{E}_{Q_{s-1}^{\bS,\bV_{[T]}}}\operatorname{KL}\left(Q_{s|(s-1)}^{\bS,\bV_{[T]}}\left\Vert P_{s|(s-1)}^{\bJ,\bS_{\bJ},\bV_{[T]}}\right.\right)}.
\end{equation}
\end{small}

 However, we will show it is technically hard to solve Eq. (\ref{eq:optimization_prop_1}). As discussed in Section \ref{sec: difficulty_traditional}, for any fixed index $i\in[T]$, Eq.(\ref{eq:optimization_prop_1}) depends on $\bSig_s(\bS,\cdot)$ through both
$\mathbb{E}_{\bV_{[T]},\bJ}\mathbb{E}_{Q_{s-1}^{\bS,\bV_{[T]}}}$ $\operatorname{KL}(Q_{s|(s-1)}^{\bS,\bV_{[T]}}\Vert P_{s|(s-1)}^{\bJ,\bS_{\bJ},\bV_{[T]}}) $ and $\mathbb{E}_{\bV_{[T]},\bJ}\mathbb{E}_{Q_{i-1}^{\bS,\bV_{[T]}}}\operatorname{KL}(Q_{i|(i-1)}^{\bS,\bV_{[T]}}\Vert P_{i|(i-1)}^{\bJ,\bS_{\bJ},\bV_{[T]}}) $ for  $\forall i>s$. Specifically, we adopt the update rule for prior  for all the steps and posterior  for all steps $t\ne s$ to be the isotropic SGLD in \cite{negrea2019information}, i.e., 
\begin{small}
\begin{gather*}
    \text{Posterior: }\bW_t=\bW_{t-1}-\eta_t \nabla \mathcal{R}_{\bS_{\bV_t}} (\bW_{t-1})+\mathcal{N}(\boldsymbol{0},\sigma_t\mathbb{I})
    \\
    \text{Prior: }\bW_t=\bW_{t-1}-\eta_t \left(\frac{\vert\bV_t \cap \bJ \vert}{\vert\bV_t \vert}\nabla \mathcal{R}_{\bS_{\bV_t \cap \bJ} }\left(\bW_{t-1}\right)+\frac{\vert\bV_t \cap \bJ^c \vert}{\vert\bV_t\vert }\nabla \mathcal{R}_{\bS_{ \bJ} }\left(\bW_{t-1}\right)\right)    + \mathcal{N}\left(\boldsymbol{0},\sigma_t \mathbb{I}\right),
\end{gather*}
\end{small}
while we only optimize the noise covariance $\bSig_s(\bS,\cdot)$ of step $s$:
\begin{equation*}
    \bW_s=\bW_{s-1}-\eta_s \nabla \mathcal{R}_{\bS_{\bV_s}} (\bW_{s-1})+\mathcal{N}(\boldsymbol{0},\bSig_s(\bS,\bW_{s-1})).
\end{equation*}
By simple calculation, for any step $t\in[T]$, given the same $\bW_{t-1}$, $\bV_t$, $\bJ$, and $\bS$, the mean between the prior and posterior can be calculated as 
\begin{align}
\nonumber
    &\mu^{\bS,\bV_t,\bJ,\bW_{t-1}}
    \\\nonumber
   & =-\eta_t \left(\frac{\vert\bV_t \cap \bJ \vert}{\vert\bV_t \vert}\nabla \mathcal{R}_{\bS_{\bV_t \cap \bJ} }\left(\bW_{t-1}\right)+\frac{\vert\bV_t \cap \bJ^c \vert}{\vert\bV_t\vert }\nabla \mathcal{R}_{\bS_{ \bJ} }\left(\bW_{t-1}\right)\right)+\eta_t \nabla \mathcal{R}_{\bS_{\bV_t}} (\bW_{t-1})
    \\
    \label{eq:difference_mean}
    &=\eta_t \frac{\vert\bV_t \cap \bJ^c \vert}{\vert\bV_t\vert } \left(\nabla \mathcal{R}_{\bS_{\bV_t\cap \bJ^c}} (\bW_{t-1})-\nabla \mathcal{R}_{\bS_{\bJ}} (\bW_{t-1})\right).
\end{align}
Therefore, by Lemma \ref{lem: kl_gau} and Lemma \ref{lem: sampling}, the expected KL divergence $\mathbb{E}_{\bV_{[T]},\bJ}\mathbb{E}_{Q_{i-1}^{\bS,\bV_{[T]}}}\operatorname{KL}$ $(Q_{i|(i-1)}^{\bS,\bV_{[T]}}\Vert P_{i|(i-1)}^{\bJ,\bS_{\bJ},\bV_{[T]}}) $ can be calculated as
\begin{align*}
    &\mathbb{E}_{\bV_{[T]},\bJ}\mathbb{E}_{Q_{i-1}^{\bS,\bV_{[T]}}}\operatorname{KL}\left(Q_{i|(i-1)}^{\bS,\bV_{[T]}}\left\Vert P_{i|(i-1)}^{\bJ,\bS_{\bJ},\bV_{[T]}}\right.\right)
    \\
    &=\frac{1}{2}\mathbb{E}_{\bV_{[T]},\bJ}\mathbb{E}_{Q_{i-1}^{\bS,\bV_{[T]}}} \left(\sigma_i^{-1}\mu^{\bS,\bV_t,\bJ,\bW_{i-1}}\left(\mu^{\bS,\bV_t,\bJ,\bW_{i-1}}\right)^\top\right)
    \\
    &=\frac{1}{2\sigma_i} \frac{1}{Nb_i}\left(\frac{N}{N-1}\right)^2 \mathbb{E}_{\bV_{[i-1]}}\mathbb{E}_{Q_{i-1}^{\bS,\bV_{[i-1]}}}\bSig^{sd}_{\bS,\bW_{i-1}}.
\end{align*}
Therefore, the exact form of $\mathbb{E}_{\bV_{[T]},\bJ}\mathbb{E}_{Q_{i-1}^{\bS,\bV_{[T]}}}\operatorname{KL}(Q_{i|(i-1)}^{\bS,\bV_{[T]}}\Vert P_{i|(i-1)}^{\bJ,\bS_{\bJ},\bV_{[T]}})$ requires taking expectation to $\bSig^{sd}_{\bS,\bW_{i-1}}$ with respect to Gaussian distribution with covariance $\bSig_s$, and can be complex due to the complex structure of the model. Specifically, if $i=s+1$, then $\mathbb{E}_{\bV_{[T]},\bJ}\mathbb{E}_{Q_{i-1}^{\bS,\bV_{[T]}}}$ $\operatorname{KL}(Q_{i|(i-1)}^{\bS,\bV_{[T]}}\Vert P_{i|(i-1)}^{\bJ,\bS_{\bJ},\bV_{[T]}})$ can be further written as 
\begin{align*}
    &\mathbb{E}_{\bV_{[T]},\bJ}\mathbb{E}_{Q_{s}^{\bS,\bV_{[T]}}}\operatorname{KL}\left(Q_{s+1|s}^{\bS,\bV_{[T]}}\left\Vert P_{s+1|s}^{\bJ,\bS_{\bJ},\bV_{[T]}}\right.\right)
    \\
    =&\frac{1}{2} \frac{1}{Nb_{s+1}}\left(\frac{N}{N-1}\right)^2 \mathbb{E}_{\bV_{[s]}}\mathbb{E}_{Q_{s-1}^{\bS,\bV_{[s-1]}}}\mathbb{E}_{Q_{s|(s-1)}^{\bS,\bV_s}}\bSig^{sd}_{\bS,\bW_{s}}.
\end{align*}
Therefore, we need to optimize  $\mathbb{E}_{\bV_s}\mathbb{E}_{Q_{s|(s-1)}^{\bS,\bV_s}}\bSig^{sd}_{\bS,\bW_{s}}$, which can be further written as
\begin{align*}
    \mathbb{E}_{\bV_s}\mathbb{E}_{Q_{s|(s-1)}^{\bS,\bV_s}}\bSig^{sd}_{\bS,\bW_{s}}= \mathbb{E}_{\bV_s}\mathbb{E}_{\mathcal{N}(-\eta_s\nabla \mathcal{R}_{\bS_{\bV_s}}(\bW_{s-1}),\bSig_s(\bS,\bW_{s-1}))}\bSig^{sd}_{\bS,\bW_{s}}.
\end{align*}
The explicit form of $\mathbb{E}_{\bV_s}\mathbb{E}_{\mathcal{N}(-\eta_s\nabla \mathcal{R}_{\bS_{\bV_s}}(\bW_{s-1}),\bSig_s(\bS,\bW_{s-1}))}\bSig^{sd}_{\bS,\bW_{s}}$ can be obtained only when $\bSig^{sd}_{\bS,\bW_s}$ is some simple functions with respect to $\bW_s$ (e.g. quadratic functions), which makes the optimal of $\mathbb{E}_{\bV_s}\mathbb{E}_{\mathcal{N}(-\eta_s\nabla \mathcal{R}_{\bS_{\bV_s}}(\bW_{s-1}),\bSig_s(\bS,\bW_{s-1}))}\bSig^{sd}_{\bS,\bW_{s}}$ complicated due to the complex structure of $\mathcal{R}_{\bS}$ and $\bSig^{sd}_{\bS,\bW}$ in pratical learning problems.

\subsection{Proof of Theorem \ref{thm: reversed_generalization}}
\label{appen: proof_generalization_2}
\begin{proof}[Proof of Theorem \ref{thm: reversed_generalization}]
For any two random measures $P^{\bJ,\bS_{\bJ},\bV_{[T]}},Q^{\bS,\bV_{[T]}}$, by the Donsker-Varadhan variational formula \cite{boucheron2013concentration}, for any function $g$ satisfying $Q^{\bS,\bV_{[T]}}(\exp g)<\infty$, we have
\begin{align*}
	\operatorname{KL}(P^{\bJ,\bS_{\bJ},\bV_{[T]}}||Q^{\bS,\bV_{[T]}}) 
	\geq P^{\bJ,\bS_{\bJ},\bV_{[T]}}(g) - Q^{\bS,\bV_{[T]}}(g) - \log Q^{\bS,\bV_{[T]}}\left( \exp(g-Q^{\bS,\bV_{[T]}}(g))\right).
\end{align*}

Letting $g(\bW) = \lambda\left(\hat{\mathcal{R}}_{\bS_{\bJ^c}}(\bW)-\mathcal{R}_{\mathcal{D}}(\bW)  \right)$, we further have
\begin{align*}
&\operatorname{KL}(P^{\bJ,\bS_{\bJ},\bV_{[T]}}||Q^{\bS,\bV_{[T]}})
\\
 & \geq  \lambda\left(\mathcal{R}_{\mathcal{D}}(Q^{\bS,\bV_{[T]}}) - \hat{\mathcal{R}}_{\bS_{\bJ^c}}(Q^{\bS,\bV_{[T]}}) - \left(\mathcal{R}_{\mathcal{D}}(P^{\bJ,\bS_{\bJ},\bV_{[T]}}) - \hat{\mathcal{R}}_{\bS_{\bJ^c}}(P^{\bJ,\bS_{\bJ},\bV_{[T]}})\right)\right) 
  \\
	&\;\; - \log Q^{\bS,\bV_{[T]}}\left( \exp\left(\lambda\left(\hat{\mathcal{R}}_{\bS_{\bJ^c}}-\mathcal{R}_{\mathcal{D}}   -\left(\hat{\mathcal{R}}_{\bS_{\bJ^c}}(Q^{\bS,\bV_{[T]}})-\mathcal{R}_{\mathcal{D}}(Q^{\bS,\bV_{[T]}}) \right)\right)\right)\right).
\end{align*}
On the other hand, since $\ell \in [a_1,a_2]$,  $\lambda\left(\hat{\mathcal{R}}_{\bS_{\bJ^c}}(\bW)-\mathcal{R}_{\mathcal{D}}(\bW)  \right)$ is $\frac{\lambda(a_2-a_1)}{2}$ subgaussian.
Therefore,
\begin{align*}
	&\left(\mathcal{R}_{\mathcal{D}}(Q^{\bS,\bV_{[T]}}) - \hat{\mathcal{R}}_{\bS_{\bJ^c}}(Q^{\bS,\bV_{[T]}})\right) - \left(\mathcal{R}_{\mathcal{D}}(P^{\bJ,\bS_{\bJ},\bV_{[T]}}) - \hat{\mathcal{R}}_{\bS_{\bJ^c}}(P^{\bJ,\bS_{\bJ},\bV_{[T]}})\right)
	\\
		&\leq \inf_{\lambda>0} \frac{\operatorname{KL}(P^{\bJ,\bS_{\bJ},\bV_{[T]}}||Q^{\bS,\bV_{[T]}})  + \frac{1}{8}\lambda^2(a_2-a_1)^2}{\lambda}.
\end{align*}
Since $P^{\bJ,\bS_{\bJ},\bV_{[T]}}$ is independent of $\bS_{\bJ^c}$ then we have $\mathbb{E}^{\bS_{\bJ},\bJ,\bV_{[T]}}\left[\mathcal{R}_{\mathcal{D}}(P^{\bJ,\bS_{\bJ},\bV_{[T]}}) - \hat{\mathcal{R}}_{\bS_{\bJ^c}}(P^{\bJ,\bS_{\bJ},\bV_{[T]}})\right] = 0$.
Hence, by averaging over $\bS_{\bJ^c}$ (equivalently, taking the conditional expectation conditional on $(\bS_{\bJ},\bJ,\bV_{[T]})$) we have, with probability one
\begin{align*}
	&\mathbb{E}^{\bS_{\bJ},\bJ,\bV_{[T]}} \left[ \mathcal{R}_{\mathcal{D}}(Q^{\bS,\bV_{[T]}}) -\hat{\mathcal{R}}_{\bS_{\bJ^c}}(Q^{\bS,\bV_{[T]}})\right]
	\\
	&= \mathbb{E}^{\bS_{\bJ},\bJ,\bV_{[T]}} \left[ \mathcal{R}_{\mathcal{D}}(Q^{\bS,\bV_{[T]}}) -\hat{\mathcal{R}}_{\bS_{\bJ^c}}(Q^{\bS,\bV_{[T]}}) - \left(\mathcal{R}_{\mathcal{D}}(P^{\bJ,\bS_{\bJ},\bV_{[T]}}) - \hat{\mathcal{R}}_{\bS_{\bJ^c}}(P^{\bJ,\bS_{\bJ},\bV_{[T]}})\right)\right]
		\\
	&\leq \mathbb{E}^{\bS_{\bJ},\bJ,\bV_{[T]}} \left( \inf_{\lambda>0} \frac{\operatorname{KL}(P^{\bJ,\bS_{\bJ},\bV_{[T]}}||Q^{\bS,\bV_{[T]}})  + \frac{1}{8}\lambda^2(a_2-a_1)^2}{\lambda}\right)
\end{align*}
Finally, by taking the full expectation, since $J \perp \!\!\! \perp Q^{\bS, \bV_{[T]}}$ we get:
\begin{equation*}
	\mathbb{E}_{\bS,\bV_{[T]}} \left[ \mathcal{R}_{\mathcal{D}}(Q^{\bS, \bV_{[T]}}) - \hat{\mathcal{R}}_{\bS}(Q^{\bS, \bV_{[T]}})\right]
		 \leq \mathbb{E}_{\bS,\bV_{[T]},\bJ} \left[\inf_{\lambda>0} \frac{\operatorname{KL}(P^{\bJ,\bS_{\bJ},\bV_{[T]}}||Q^{\bS,\bV_{[T]}})  + \frac{1}{8}\lambda^2(a_2-a_1)^2}{\lambda}\right]
\end{equation*}
where the final $\operatorname{KL}(P^{\bJ,\bS_{\bJ},\bV_{[T]}}||Q^{\bS,\bV_{[T]}})$ on the right hand side is between two random measures, and hence is a random variable depending on $(\bS,\bJ,\bV_{[T]})$; and the expectation on the right hand side integrates over $(\bS,\bJ,\bV_{[T]})$.

Since 
\begin{equation*}
    \frac{\operatorname{KL}(P^{\bJ,\bS_{\bJ},\bV_{[T]}}||Q^{\bS,\bV_{[T]}})  + \frac{1}{8}\lambda^2(a_2-a_1)^2}{\lambda}\ge \sqrt{\frac{1}{2}(a_2-a_1)^2\operatorname{KL}(P^{\bJ,\bS_{\bJ},\bV_{[T]}}||Q^{\bS,\bV_{[T]}})},
\end{equation*}
the proof is completed.
\end{proof}

\section{Supplementary of Section \ref{sec: a_greedy_approach}}
\label{appen:6}



In this section, we provide the proof of Theorem \ref{thm: greedy}. Specifically, as mentioned in the main body, optimizing $\gen_T$ with greedily selected prior involves three steps.
(1). we first prove Lemma \ref{lem: optimal_one_kl_formal}, which provides the optimal solution of noise covariance and prior for one single KL divergence term in the generalization bound $\gen_T$; (2). as the optimal solution of noise covariance in Lemma \ref{lem: optimal_one_kl_formal} is independent of $\bS_{\bJ}$, $\bV_{[T]}$, and $\bV_{[T]}$, we are then able to obtain the greedy prior by Lemma \ref{lem: form after equivalence}; (3). applying the greedy prior back to $\gen_T$, we are finally able to derive Theorem \ref{thm: greedy}.

We start by restating Lemma \ref{lem: optimal_one_kl} and providing its proof.
\begin{lemma}[Lemma \ref{lem: optimal_one_kl_formal}, restated]
\label{lem: optimal_one_kl}
For any $s\in [T]$, $\bJ$, $\bS_{\bJ}$, and $\bV_{[T]}$, under Constraint \ref{constraint: trace}, 
\begin{equation}
\label{eq: target_one_kl_appen}
\min_{{P}_{s|(s-1)}^{\bJ,\bS_{\bJ},\bV_{s}}}\mathbb{E}_{\bS_{\bJ^c}\sim \mathcal{D}} \operatorname{KL}\left({P}_{s|(s-1)}^{\bJ,\bS_{\bJ},\bV_{s}}\left\Vert Q_{s|(s-1)}^{\bS,\bV_{s}}\right.\right)
\end{equation}
(1). is independent of $\bSig_s$ when $\bV_s\cap \bJ^c=\emptyset$, and (2). is minimized at  $ \bSig_{s}(\bW)=\lambda_s(\bW)\left(\bSig^{pop}_{\bW}\right)^{\frac{1}{2}}$, $\forall \bW$, when $\bV_s\cap \bJ^c\ne\emptyset$, where $\lambda_{s}(\bW)=c_s(\bW)/\tr (\left(\bSig^{pop}_{\bW}\right)^{\frac{1}{2}})$.
\end{lemma}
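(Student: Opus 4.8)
The plan is to evaluate the inner minimization over $P_{s|(s-1)}$ in closed form using the Gaussian structure of $Q_{s|(s-1)}^{\bS,\bV_s}$, reduce the remaining minimization over $\bSig_s$ to a trace-constrained matrix optimization, and then apply Lemma~\ref{lem: ortho_matrix} followed by a Lagrange-multiplier computation. First I would write the conditionals explicitly: since here $\bSig_s$ depends on $\bW$ only, the SGLD step (Eq.~(\ref{eq:state-dependent SGLD})) gives $Q_{s|(s-1)}^{\bS,\bV_s}=\mathcal{N}\!\big(\bW_{s-1}-\eta_s\nabla\mathcal{R}_{\bS_{\bV_s}}(\bW_{s-1}),\,\bSig_s(\bW_{s-1})\big)$, viewed as a function of $\bW_{s-1}$; note that in $\gen_s$ the outer expectation over $\bW_{s-1}$ is against the \emph{prior} $P$, which does not see $\bS_{\bJ^c}$, so $\bW_{s-1}$ may be held fixed while integrating $\bS_{\bJ^c}$. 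Writing $j_0$ for the unique index forming $\bJ^c$: if $j_0\notin\bV_s$ then $\nabla\mathcal{R}_{\bS_{\bV_s}}(\bW_{s-1})$ is a function of $\bS_{\bJ}$ alone, while if $j_0\in\bV_s$ then $\nabla\mathcal{R}_{\bS_{\bV_s}}(\bW_{s-1})=\frac{|\bV_s\cap\bJ|}{|\bV_s|}\nabla\mathcal{R}_{\bS_{\bV_s\cap\bJ}}(\bW_{s-1})+\frac{1}{|\bV_s|}\nabla\mathcal{R}_{\bz_{j_0}}(\bW_{s-1})$, and $\bz_{j_0}=\bS_{\bJ^c}$ is, conditionally on $(\bJ,\bS_{\bJ})$, a fresh draw from $\mathcal{D}$.

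For claim~(1), when $\bV_s\cap\bJ^c=\emptyset$ the conditional $Q_{s|(s-1)}^{\bS,\bV_s}$ does not depend on $\bS_{\bJ^c}$, so $\mathbb{E}_{\bS_{\bJ^c}}\operatorname{KL}(P_{s|(s-1)}\Vert Q_{s|(s-1)})=\operatorname{KL}(P_{s|(s-1)}\Vert Q_{s|(s-1)})$, whose minimum over admissible priors is $0$ (take $P_{s|(s-1)}=Q_{s|(s-1)}$, a legal choice of $\tilde{\mathcal{M}}_s$ since $Q_{s|(s-1)}$ depends on $\bS$ only through $\bS_{\bJ}$), a value independent of $\bSig_s$. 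For claim~(2), with $j_0\in\bV_s$, write the mean of $Q_{s|(s-1)}^{\bS,\bV_s}$ as $\va-\frac{\eta_s}{|\bV_s|}\nabla\mathcal{R}_{\bz_{j_0}}(\bW_{s-1})$ with $\va$ free of $\bS_{\bJ^c}$, and set $\delta:=\va-\frac{\eta_s}{|\bV_s|}\nabla\mathcal{R}_{\mathcal{D}}(\bW_{s-1})$. The key preliminary step is that the unconstrained minimizer of $P_{s|(s-1)}\mapsto\mathbb{E}_{\bS_{\bJ^c}}\operatorname{KL}(P_{s|(s-1)}\Vert Q_{s|(s-1)})$ is itself Gaussian: writing $q_{\bS_{\bJ^c}}$ for the density of $Q_{s|(s-1)}^{\bS,\bV_s}$ and $r:=\exp(\mathbb{E}_{\bS_{\bJ^c}}\log q_{\bS_{\bJ^c}})$, one has $\mathbb{E}_{\bS_{\bJ^c}}\operatorname{KL}(P\Vert Q)=\operatorname{KL}\!\big(P\,\big\Vert\,r/\!\int r\big)-\log\!\int r$, and since each $q_{\bS_{\bJ^c}}$ is Gaussian with the common covariance $\bSig_s(\bW_{s-1})$, $\log r$ is a concave quadratic with leading term $-\tfrac12\bw^{\top}\bSig_s(\bW_{s-1})^{-1}\bw$, so $r/\!\int r=\mathcal{N}\!\big(\delta,\,\bSig_s(\bW_{s-1})\big)$.

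Hence it suffices to take $P_{s|(s-1)}=\mathcal{N}(\nu,\Gamma)$, apply Lemma~\ref{lem: kl_gau}, and take $\mathbb{E}_{\bS_{\bJ^c}}$. Writing $b=\eta_s/|\bV_s|$, the bias--variance split for the fresh draw $\bz_{j_0}\sim\mathcal{D}$ turns the only $\bS_{\bJ^c}$-dependent summand $(\mu-\nu)^{\top}\bSig_s^{-1}(\mu-\nu)$ into $(\delta-\nu)^{\top}\bSig_s^{-1}(\delta-\nu)+b^2\tr\!\big(\bSig_s(\bW_{s-1})^{-1}\bSig^{pop}_{\bW_{s-1}}\big)$, using $\Cov_{\bz\sim\mathcal{D}}[\nabla\mathcal{R}_{\bz}(\bW_{s-1})]=\bSig^{pop}_{\bW_{s-1}}$. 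Minimizing over $\nu$ (take $\nu=\delta$) and over $\Gamma$ (the stationarity condition $\bSig_s^{-1}-\Gamma^{-1}=0$ gives $\Gamma=\bSig_s(\bW_{s-1})$, collapsing $\tr(\bSig_s^{-1}\Gamma)-d+\ln\det\bSig_s-\ln\det\Gamma$ to $0$) leaves
\[
\min_{P_{s|(s-1)}^{\bJ,\bS_{\bJ},\bV_s}}\mathbb{E}_{\bS_{\bJ^c}\sim\mathcal{D}}\operatorname{KL}\!\big(P_{s|(s-1)}^{\bJ,\bS_{\bJ},\bV_s}\,\big\Vert\,Q_{s|(s-1)}^{\bS,\bV_s}\big)=\frac{\eta_s^2}{2|\bV_s|^2}\,\tr\!\big(\bSig_s(\bW_{s-1})^{-1}\bSig^{pop}_{\bW_{s-1}}\big).
\]

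Finally I would minimize this over $\bSig_s(\bW_{s-1})$ subject to $\tr(\bSig_s(\bW_{s-1}))=c_s(\bW_{s-1})$ (Constraint~\ref{constraint: trace}), pointwise in $\bW_{s-1}$. By Lemma~\ref{lem: ortho_matrix}, with the eigenvalues of $\bSig_s(\bW_{s-1})$ fixed the minimizer of $\tr(\bSig_s(\bW_{s-1})^{-1}\bSig^{pop}_{\bW_{s-1}})$ must share the eigenvectors of $\bSig^{pop}_{\bW_{s-1}}$, matched in eigenvalue order; this reduces the problem to minimizing $\sum_i\beta_i/\alpha_i$ over $\alpha_i>0$ with $\sum_i\alpha_i=c_s(\bW_{s-1})$, where $(\beta_i)$ are the eigenvalues of $\bSig^{pop}_{\bW_{s-1}}$. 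A one-line Lagrange computation (as in the proof of Lemma~\ref{lem: eigen_value_matrix}, but without the $\ln$ term) gives $\alpha_i^{\ast}\propto\sqrt{\beta_i}$, i.e.\ $\bSig_s^{\ast}(\bW_{s-1})=\frac{c_s(\bW_{s-1})}{\tr((\bSig^{pop}_{\bW_{s-1}})^{1/2})}(\bSig^{pop}_{\bW_{s-1}})^{1/2}=\lambda_s(\bW_{s-1})(\bSig^{pop}_{\bW_{s-1}})^{1/2}$, as claimed. The main obstacle I anticipate is making the variational step rigorous: one must verify that nothing is lost by restricting the prior $P_{s|(s-1)}$ to be Gaussian (the concave-quadratic-log-density argument above handles it) and that the update $\tilde{\mathcal{M}}_s$ realizing this Gaussian is admissible; a minor secondary point is the implicit positive-definiteness of $\bSig^{pop}_{\bW}$ ensuring $(\bSig^{pop}_{\bW})^{1/2}$ and $\lambda_s(\bW)$ are well defined.
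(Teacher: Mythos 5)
Your proposal is correct and follows essentially the same route as the paper's proof: the variational identity $\mathbb{E}_{\bS_{\bJ^c}}\operatorname{KL}(P\Vert Q)=\operatorname{KL}(P\Vert r/\!\int r)-\log\int r$ with $r=\exp(\mathbb{E}_{\bS_{\bJ^c}}\log Q)$ identifying the optimal Gaussian prior, the reduction of the remaining objective to $\tr(\bSig_s^{-1}\bSig^{pop}_{\bW_{s-1}})$ via the bias--variance split over the fresh draw $\bz_{j_0}\sim\mathcal{D}$, and then Lemma~\ref{lem: ortho_matrix} plus a scalar optimization (you use Lagrange multipliers where the paper uses Cauchy--Schwarz, an immaterial difference). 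Your leading constant $\eta_s^2/(2|\bV_s|^2)$ is in fact the one consistent with the fixed-$(\bJ,\bV_s)$ statement, and in any case the constant does not affect the minimizer.
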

\begin{proof}
We first calculate $\min_{{P}_{s|(s-1)}^{\bJ,\bS_{\bJ},\bV_{s}}}\mathbb{E}_{\bS_{\bJ^c}\sim \mathcal{D}} \operatorname{KL}\left({P}_{s|(s-1)}^{\bJ,\bS_{\bJ},\bV_{s}}\left\Vert Q_{s|(s-1)}^{\bS,\bV_{s}}\right.\right)$ for any $\bSig_s$. By applying the definition of the KL divergence, we have
\begin{align}
\nonumber
    &\arg \min_{{P}_{s|(s-1)}^{\bJ,\bS_{\bJ},\bV_{s}}}\mathbb{E}_{\bS_{\bJ^c}\sim\mathcal{D}} \operatorname{KL}\left({P}_{s|(s-1)}^{\bJ,\bS_{\bJ},\bV_{s}}\left\Vert Q_{s|(s-1)}^{\bS,\bV_{[T]}}\right.\right)
      \\
\nonumber
      &=\arg \min_{{P}_{s|(s-1)}^{\bJ,\bS_{\bJ},\bV_{s}}}\mathbb{E}_{\bS_{\bJ^c}\sim\mathcal{D}} \int {P}_{s|(s-1)}^{\bJ,\bS_{\bJ},\bV_{s}}(\bW_s)\log\frac{{P}_{s|(s-1)}^{\bJ,\bS_{\bJ},\bV_{s}}(\bW_s)}{Q_{s|(s-1)}^{\bS,\bV_{[T]}}(\bW_s)} \mathrm{d} \bW_s
      \\
\label{eq: log_exp_expect}
      &\overset{(*)}{=} \arg \min_{{P}_{s|(s-1)}^{\bJ,\bS_{\bJ},\bV_{s}}} \int {P}_{s|(s-1)}^{\bJ,\bS_{\bJ},\bV_{s}}(\bW_s)\log\frac{{P}_{s|(s-1)}^{\bJ,\bS_{\bJ},\bV_{s}}(\bW_s)}{e^{\mathbb{E}_{\bS_{\bJ^c}\sim \mathcal{D}}\log Q_{s|(s-1)}^{\bS,\bV_{[T]}}(\bW_s)}} \mathrm{d} \bW_s,
\end{align}
where Eq. $(*)$ is due to the independence of $P$ on $\bS_{\bJ^c}$.

Let 
\begin{align}
\label{eq: def_tilde_Q}
    \tilde{Q}_{s|(s-1)}^{\bJ,\bS_{\bJ},\bV_{[T]}}(\bW)=\frac{e^{\mathbb{E}_{\bS_{\bJ^c}\sim \mathcal{D}}\log Q_{s|(s-1)}^{\bS,\bV_{[T]}}(\bW)}}{\int e^{\mathbb{E}_{\bS_{\bJ^c}\sim \mathcal{D}}\log Q_{s|(s-1)}^{\bS,\bV_{[T]}}(\tilde{\bW})} \mathrm{d} \tilde{\bW}},
\end{align}
and $\tilde{Q}_{s|(s-1)}^{\bJ,\bS_{\bJ},\bV_{[T]}}(\bW)$ is then a probability measure on $\mathbb{R}^d$. Applying Eq. (\ref{eq: def_tilde_Q}) back to Eq. (\ref{eq: log_exp_expect}), we obtain
\begin{small}
\begin{align}
\nonumber
    &\arg \min_{{P}_{s|(s-1)}^{\bJ,\bS_{\bJ},\bV_{s}}}\left( \int {P}_{s|(s-1)}^{\bJ,\bS_{\bJ},\bV_{s}}(\bW_s)\log\frac{{P}_{s|(s-1)}^{\bJ,\bS_{\bJ},\bV_{s}}(\bW_s)}{\tilde{Q}_{s|(s-1)}^{\bJ,\bS_{\bJ},\bV_{[T]}}(\bW_s)} \mathrm{d} \bW_s \right.
    \\
    \nonumber
    &~~~~~~~~~~~~~~~~~~~~~~~~~~~~~~~~~~~~
    -  \left.\int {P}_{s|(s-1)}^{\bJ,\bS_{\bJ},\bV_{s}}(\bW_s)\log\left( \int e^{\mathbb{E}_{\bS_{\bJ^c}\sim \mathcal{D}}\log Q_{s|(s-1)}^{\bS,\bV_{[T]}}(\tilde{\bW})} \mathrm{d} \tilde{\bW}\right) \mathrm{d} \bW_s\right)
    \\
\nonumber
    =& \arg \min_{{P}_{s|(s-1)}^{\bJ,\bS_{\bJ},\bV_{s}}}\left( \int {P}_{s|(s-1)}^{\bJ,\bS_{\bJ},\bV_{s}}(\bW_s)\log\frac{{P}_{s|(s-1)}^{\bJ,\bS_{\bJ},\bV_{s}}(\bW_s)}{\tilde{Q}_{s|(s-1)}^{\bJ,\bS_{\bJ},\bV_{[T]}}(\bW_s)} \mathrm{d} \bW_s \right.
    - \left.\log\left( \int e^{\mathbb{E}_{\bS_{\bJ^c}\sim \mathcal{D}}\log Q_{s|(s-1)}^{\bS,\bV_{[T]}}(\tilde{\bW})} \mathrm{d} \tilde{\bW}\right)\right)
    \\
\nonumber
     =& \arg \min_{{P}_{s|(s-1)}^{\bJ,\bS_{\bJ},\bV_{s}}}\left( \int {P}_{s|(s-1)}^{\bJ,\bS_{\bJ},\bV_{s}}(\bW_s)\log\frac{{P}_{s|(s-1)}^{\bJ,\bS_{\bJ},\bV_{s}}(\bW_s)}{\tilde{Q}_{s|(s-1)}^{\bJ,\bS_{\bJ},\bV_{[T]}}(\bW_s)} \mathrm{d} \bW_s \right)
     \\
    \label{eq: kl_minimum}
=&\arg \min_{{P}_{s|(s-1)}^{\bJ,\bS_{\bJ},\bV_{s}}}\operatorname{KL}\left(  P \Vert \tilde{Q}_{s|(s-1)}^{\bJ,\bS_{\bJ},\bV_{[T]}} \right).
\end{align}
\end{small}
The minimum of Eq.(\ref{eq: kl_minimum}) is achieved if and only if ${P}_{s|(s-1)}^{\bJ,\bS_{\bJ},\bV_{s}}= \tilde{Q}_{s|(s-1)}^{\bJ,\bS_{\bJ},\bV_{[T]}}$, and we only need to calculate the exact form of $\tilde{Q}_{s|(s-1)}^{\bJ,\bS_{\bJ},\bV_{[T]}}$. Since $\bW_s|(\bW_{s-1},\bS, \bV_s)\sim \mathcal{N}(\bW_{s-1}-\eta_s \nabla_{\bW_{s-1}} \mathcal{R}_{\bS_{\bV_s}}(\bW_{s-1}),$ $ \bSig_s(\bW_{s-1}))$, we have
\begin{small}
\begin{align}
\nonumber
    &\exp{\mathbb{E}_{\bS_{\bJ^c}\sim \mathcal{D}}\log Q_{s|(s-1)}^{\bS,\bV_{[T]}}(\bW)}
    \\
\nonumber
    &= \exp\left(\mathbb{E}_{\bS_{\bJ^c}\sim \mathcal{D}}\left(-\frac{1}{2}(\bW-\bW_{s-1}+\eta_s \nabla_{\bW_{s-1}} \mathcal{R}_{\bS_{\bV_s}}(\bW_{s-1}))^\top\bSig_s(\bW_{s-1})^{-1}(\bW-\bW_{s-1}\right. \right.
    \\
\nonumber
    &\;\;\left.\left.+\eta_s \nabla_{\bW_{s-1}} \mathcal{R}_{\bS_{\bV_s}}(\bW_{s-1}))-\frac{d}{2}\log 2\pi-\frac{1}{2}\log \det (\bSig_s(\bW_{s-1}))\right)\right)
    \\\nonumber
    &=\exp\left(\mathbb{E}_{\bS_{\bJ^c}\sim \mathcal{D}}\bigg(-\frac{1}{2}(\bW-\bW_{s-1}+\eta_s \nabla_{\bW_{s-1}} \mathcal{R}_{\bS_{\bV_s}}(\bW_{s-1}))^\top\bSig_s(\bW_{s-1})^{-1}(\bW-\bW_{s-1}\right. 
    \\
    \label{eq: exponential_Q}
    &\;\;\left.+\eta_s \nabla_{\bW_{s-1}} \mathcal{R}_{\bS_{\bV_s}}(\bW_{s-1})\bigg)-\frac{d}{2}\log 2\pi-\frac{1}{2}\log \det (\bSig_s(\bW_{s-1}))\right).
\end{align}
\end{small}
On the other hand,
\begin{small}
\begin{align}
\nonumber
    &\mathbb{E}_{\bS_{\bJ^c}\sim \mathcal{D}}\left(-\frac{1}{2}(\bW-\bW_{s-1}+\eta_s \nabla_{\bW_{s-1}} \mathcal{R}_{\bS_{\bV_s}}(\bW_{s-1}))^\top\bSig_s(\bW_{s-1})^{-1}(\bW-\bW_{s-1}\right.
    \\
\nonumber
    &\;\;\left.+\eta_s \nabla_{\bW_{s-1}} \mathcal{R}_{\bS_{\bV_s}}(\bW_{s-1}))\right)
    \\
    \nonumber
    &=-\frac{1}{2}\mathbb{E}_{\bS_{\bJ^c}\sim \mathcal{D}}\left(\bW-\bW_{s-1}+\eta_s \left(\frac{\vert\bV_s \cap \bJ \vert}{\vert\bV_s \vert}\nabla \mathcal{R}_{\bS_{\bV_s \cap \bJ} }\left(\bW_{s-1}\right)+\frac{\vert\bV_s\cap \bJ^c\vert}{\vert\bV_s\vert }\nabla \mathcal{R}_{\bS_{\bV_s \cap \bJ^c}}\left(\bW_{s-1}\right)\right)\right)^\top
    \\
\nonumber
    &\;\;\cdot \bSig_s(\bW_{s-1})^{-1}\left(\bW-\bW_{s-1}+\eta_s \left(\frac{\vert\bV_s \cap \bJ \vert}{\vert\bV_s \vert}\nabla \mathcal{R}_{\bS_{\bV_s \cap \bJ} }\left(\bW_{s-1}\right)
    +\frac{\vert\bV_s\cap \bJ^c\vert}{\vert\bV_s\vert }\nabla \mathcal{R}_{\bS_{\bV_s \cap \bJ^c} }\left(\bW_{s-1}\right)\right)\right)
    \\
\nonumber
    &=-\frac{1}{2}\left(\bW-\bW_{s-1}+\eta_s \left(\frac{\vert\bV_s \cap \bJ \vert}{\vert\bV_s \vert}\nabla \mathcal{R}_{\bS_{\bV_s \cap \bJ} }\left(\bW_{s-1}\right)+\frac{\vert\bV_s\cap \bJ^c\vert}{\vert\bV_s\vert }\nabla \mathcal{R}_{\mathcal{D} }\left(\bW_{s-1}\right)\right)\right)^\top
    \\
\nonumber
    &\;\;\cdot \bSig_s(\bW_{s-1})^{-1}\left(\bW-\bW_{s-1}+\eta_s \left(\frac{\vert\bV_s \cap \bJ \vert}{\vert\bV_s \vert}\nabla \mathcal{R}_{\bS_{\bV_s \cap \bJ} }\left(\bW_{s-1}\right)+\frac{\vert\bV_s\cap \bJ^c\vert}{\vert\bV_s\vert }\nabla \mathcal{R}_{\mathcal{D} }\left(\bW_{s-1}\right)\right)\right)
    \\
    \nonumber
    &\;\;-\frac{1}{2}\mathbb{E}_{\bS_{\bJ^c}\sim \mathcal{D}}\eta_s^2\frac{\vert \bV_s \cap \bJ^c \vert^2}{\vert \bV_s\vert^2 }\left(\nabla \mathcal{R}_{\mathcal{D} }\left(\bW_{s-1}\right)-\nabla \mathcal{R}_{\bS_{\bV_s\cap \bJ^c} }\left(\bW_{s-1}\right)\right)^\top\bSig_s(\bW_{s-1})^{-1}
    \\
\label{eq: expectation_Q}
    &\;\;\cdot\left(\nabla \mathcal{R}_{\mathcal{D} }\left(\bW_{s-1}\right)-\nabla \mathcal{R}_{\bS_{\bV_s\cap \bJ^c} }\left(\bW_{s-1}\right)\right).
\end{align}
\end{small}
By combining Eq.(\ref{eq: exponential_Q}) and Eq.(\ref{eq: expectation_Q}), we further have
\begin{small}
\begin{align}
\nonumber
    &\exp{\mathbb{E}_{\bS_{\bJ^c}\sim \mathcal{D}}\log Q_{s|(s-1)}^{\bS,\bV_{[T]}}(\bW)}
    \\
\nonumber
    &=\frac{1}{(2\pi)^{-\frac{d}{2}}\det(\bSig_s(\bW_{s-1}))^{\frac{1}{2}}}\exp\left(-\frac{1}{2}\left(\bW-\bW_{s-1}+\eta_s \left(\frac{\vert\bV_s \cap \bJ \vert}{\vert\bV_s \vert}\nabla \mathcal{R}_{\bS_{\bV_s \cap \bJ} }\left(\bW_{s-1}\right)\right.\right.\right.
    \\
\nonumber
    &\;\;+\left.\left.\left.\frac{\vert\bV_s\cap \bJ^c\vert}{\vert\bV_s\vert }\nabla \mathcal{R}_{\mathcal{D} }\left(\bW_{s-1}\right)\right)\right)^\top\bSig_s(\bW_{s-1})^{-1}\left(\bW-\bW_{s-1}+\eta_s\left(\frac{\vert\bV_s \cap \bJ \vert}{\vert\bV_s \vert}\nabla \mathcal{R}_{\bS_{\bV_s \cap \bJ} }\left(\bW_{s-1}\right)\right.\right.\right.
    \\
\nonumber
    &\;\; \left.\left.+\frac{\vert\bV_s\cap \bJ^c\vert}{\vert\bV_s\vert }\nabla \mathcal{R}_{\mathcal{D} }\left(\bW_{s-1}\right)\right)\right) \exp\mathbb{E}_{\bS_{\bJ^c}}\bigg(-\frac{1}{2}\eta_s^2\frac{\vert \bV_s \cap \bJ^c \vert^2}{\vert \bV_s\vert^2 }\left(\nabla \mathcal{R}_{\mathcal{D} }\left(\bW_{s-1}\right)-\nabla \mathcal{R}_{\bS_{\bV_s\cap \bJ^c} }\left(\bW_{s-1}\right)\right)^\top
    \\
    \label{eq: expect_Q}
    &\;\;
  \cdot\bSig_s(\bW_{s-1})^{-1}\left(\nabla \mathcal{R}_{\mathcal{D} }\left(\bW_{s-1}\right)-\nabla \mathcal{R}_{\bS_{\bV_s\cap \bJ^c} }\left(\bW_{s-1}\right)\right)\bigg).
\end{align}
\end{small}
Therefore, by taking integration with respect to $\tilde{\bW}$, we have,
\begin{align}
\nonumber
    &\int e^{\mathbb{E}_{\bS_{\bJ^c}\sim \mathcal{D}}\log Q_{s|(s-1)}^{\bS,\bV_{[T]}}(\tilde{\bW})} \mathrm{d} \tilde{\bW}
    \\
\nonumber
    &=\exp\mathbb{E}_{\bS_{\bJ^c}}\left(-\frac{1}{2}\eta_s^2\frac{\vert \bV_s \cap \bJ^c \vert^2}{\vert \bV_s\vert^2 }\left(\nabla \mathcal{R}_{\mathcal{D} }\left(\bW_{s-1}\right)-\nabla \mathcal{R}_{\bS_{\bV_s\cap \bJ^c} }\left(\bW_{s-1}\right)\right)^\top\bSig_s(\bW_{s-1})^{-1}
    \right.
    \\
\label{eq: expect_Q_int}
   &\;\;\left.\cdot\left(\nabla \mathcal{R}_{\mathcal{D} }\left(\bW_{s-1}\right)-\nabla \mathcal{R}_{\bS_{\bV_s\cap \bJ^c} }\left(\bW_{s-1}\right)\right)\right).
\end{align}

Therefore, by Eq.(\ref{eq: def_tilde_Q}), Eq.(\ref{eq: expect_Q}), and Eq.(\ref{eq: expect_Q_int}), we have
\begin{align}
    \label{eq: form_greedy_prior}
     &\arg \min_{{P}_{s|(s-1)}^{\bJ,\bS_{\bJ},\bV_{s}}}\mathbb{E}_{\bS_{\bJ^c}\sim\mathcal{D}} \operatorname{KL}\left({P}_{s|(s-1)}^{\bJ,\bS_{\bJ},\bV_{s}}\left\Vert Q_{s|(s-1)}^{\bS,\bV_{[T]}}\right.\right)=\tilde{Q}_{s|(s-1)}^{\bJ,\bS_{\bJ},\bV_{[T]}}
     \\
     \nonumber
     &\sim \mathcal{N}\left(\bW_{s-1}-\eta_s \left(\frac{\vert\bV_s \cap \bJ \vert}{\vert\bV_s \vert}\nabla \mathcal{R}_{\bS_{\bV_s \cap \bJ} }\left(\bW_{s-1}\right)+\frac{\vert\bV_s\cap \bJ^c\vert}{\vert\bV_s\vert }\nabla \mathcal{R}_{\mathcal{D} }\left(\bW_{s-1}\right)\right),\bSig_s(\bW_{s-1})\right).
\end{align}
Applying Eq. (\ref{eq: form_greedy_prior}) back to $\mathbb{E}_{\bS_{\bJ^c}\sim\mathcal{D}} \operatorname{KL}\left({P}_{s|(s-1)}^{\bJ,\bS_{\bJ},\bV_{s}}\left\Vert Q_{s|(s-1)}^{\bS,\bV_{[T]}}\right.\right)$, we obtain
\begin{align}
\nonumber
    &\min_{{P}_{s|(s-1)}^{\bJ,\bS_{\bJ},\bV_{s}}}\mathbb{E}_{\bS_{\bJ^c}\sim\mathcal{D}} \operatorname{KL}\left({P}_{s|(s-1)}^{\bJ,\bS_{\bJ},\bV_{s}}\left\Vert Q_{s|(s-1)}^{\bS,\bV_{[T]}}\right.\right)=\mathbb{E}_{\bS_{\bJ^c}\sim\mathcal{D}} \operatorname{KL}\left(\tilde{Q}_{s|(s-1)}^{\bJ,\bS_{\bJ},\bV_{[T]}}\left\Vert Q_{s|(s-1)}^{\bS,\bV_{[T]}}\right.\right)
    \\
    \nonumber
    &=\int \tilde{Q}_{t|(t-1)}^{\bS_{\bJ},\bV_{[s]}}(\bW_s)\log\frac{\tilde{Q}_{t|(t-1)}^{\bS_{\bJ},\bV_{[s]}}(\bW_s)}{e^{\mathbb{E}_{\bS_{\bJ^c}\sim \mathcal{D}}\log Q_{t|(t-1)}^{\bS,\bV_{[s]}}(\bW_s)}} \mathrm{d} \bW_s
    \\
\nonumber
    &\overset{(\circ)}{=}-\int \tilde{Q}_{t|(t-1)}^{\bS_{\bJ},\bV_{[s]}}(\bW_s)\log\int e^{\mathbb{E}_{\bS_{\bJ^c}\sim \mathcal{D}}\log Q_{t|(t-1)}^{\bS,\bV_{[s]}}(\tilde{\bW})} \mathrm{d} \tilde{\bW} \mathrm{d} \bW_s
    \\
\nonumber
    &=-\log\int e^{\mathbb{E}_{\bS_{\bJ^c}\sim \mathcal{D}}\log Q_{t|(t-1)}^{\bS,\bV_{[s]}}(\tilde{\bW})} \mathrm{d} \tilde{\bW}
    \\
\nonumber
    &\overset{(\bullet)}{=}\frac{1}{2}\eta_t^2\mathbb{E}_{\bS_{\bJ^c}\sim \mathcal{D}}\frac{\vert \bV_t \cap \bJ^c\vert^2}{\vert \bV_t\vert^2 }\left(\nabla \mathcal{R}_{\mathcal{D} }\left(\bW_{t-1}\right)-\nabla \mathcal{R}_{\bS_{\bV_t\cap \bJ^c} }\left(\bW_{t-1}\right)\right)^\top\bSig_t(\bW_{t-1})^{-1}
    \\
\nonumber
   &\;\;\cdot\left(\nabla \mathcal{R}_{\mathcal{D} }\left(\bW_{t-1}\right)-\nabla \mathcal{R}_{\bS_{\bV_t\cap \bJ^c} }\left(\bW_{t-1}\right)\right)
   \\
   \nonumber
   &=\frac{1}{2}\eta_t^2\mathbb{E}_{\bS_{\bJ^c}\sim \mathcal{D}}\tr\bigg(\bSig_t(\bW_{t-1})^{-1}\frac{\vert \bV_t \cap \bJ^c\vert^2}{\vert \bV_t\vert^2 }\left(\nabla \mathcal{R}_{\mathcal{D} }\left(\bW_{t-1}\right)-\nabla \mathcal{R}_{\bS_{\bV_t\cap \bJ^c} }\left(\bW_{t-1}\right)\right)^\top
    \\
\nonumber
   &\;\;\cdot\left(\nabla \mathcal{R}_{\mathcal{D} }\left(\bW_{t-1}\right)-\nabla \mathcal{R}_{\bS_{\bV_t\cap \bJ^c} }\left(\bW_{t-1}\right)\right)\bigg)
   \\
  \label{eq:greedy_kl}
   &\overset{(\Diamond)}{=}
\left\{  
             \begin{aligned}
&0 \qquad\qquad\qquad\qquad\qquad\qquad\qquad\qquad, \bV_t\cap\bJ^c=\emptyset; \\  
  &\frac{1}{2}\frac{\eta_t^2N}{b_t(N-1)^2}\tr\bigg(\bSig_t(\bW_{t-1})^{-1} \bSig^{pop}_{\bW_{t-1}}\bigg)          ,     \bV_t\cap\bJ^c\ne\emptyset  .    \end{aligned} 
\right.  
\end{align}
where  Eq. ($\circ$) is due to the definition of $\tilde{Q}_{t|(t-1)}^{\bS_{\bJ},\bV_{[s]}}$ (Eq.(\ref{eq: def_tilde_Q})), Eq. ($\bullet$) is due to Eq.(\ref{eq: expect_Q_int}) and Eq. ($\Diamond$) is due to Lemma \ref{lem: sampling}.

Therefore, when $\bV_t\cap\bJ^c=\emptyset$, Eq.(\ref{eq: target_one_kl_appen}) is independent of $\bSig_s$. On the other hand, if $\bV_t\cap\bJ^c\ne\emptyset$, we only need to solve 
\begin{equation}
\label{eq: optimization_trace_sqrt}
    \bSig_s(\bW)^{*}=\arg\min_{\tr(\bSig_s(\bW))=c_s(\bW)}  \tr\bigg(\bSig_s(\bW)^{-1} \bSig^{pop}_{\bW}\bigg), \text{ subject to Constraint \ref{constraint: trace}}.
\end{equation}

We complete the proof by solving Problem (\ref{eq: optimization_trace_sqrt}). Specifically, let the eigenvalues of $\bSig_{\bW}^{pop}$ be $(\omega_i^{pop})_{i=1}^d$ (the value is by non-increasing order with respect to index) we first fix the eigenvalues of $\bSig_s(\bW)$ to be $\alpha_{[d]}$ with $\alpha_{i}\ge 0$ (the value is by non-increasing order with respect to index), $i\in[d]$. Then, by Lemma \ref{lem: ortho_matrix}, the minimum of $\tr\bigg(\bSig_s(\bW)^{-1} \bSig^{pop}_{\bW}\bigg)$ is achieved when 
\begin{equation}
\label{eq:ortho_sqrt}
    \bSig_s(\bW)\in\left\{P^\top\left(\alpha_{[d]}\right)P:P\text{ is orthogonal and }\bSig^{pop}_{\bW}=P^\top\left(\omega_{[d]}^{pop}\right)P\right\},
\end{equation}
and 
\begin{equation*}
     \tr\bigg(\bSig_s(\bW)^{-1} \bSig^{pop}_{\bW}\bigg)=\sum_{i=1}^d \frac{\omega_{i}^{pop}}{\alpha_i}.
\end{equation*}
We then optimize $\sum_{i=1}^d \frac{\omega_{i}^{pop}}{\alpha_i}$ under the constraint $\sum_{i=1}^d \alpha_i=c_s(\bW_{s-1})$. By the Cauchy-Schwarz
inequality,
\begin{equation}
\label{eq:value_sqrt}
     c_s(\bW_{s-1})\left(\sum_{i=1}^d \frac{\omega_{i}^{pop}}{\alpha_i}\right)=\left(\sum_{i=1}^d \frac{\omega_{i}^{pop}}{\alpha_i}\right)\left(\sum_{i=1}^d \alpha_i\right)\overset{(*)}{\ge} \left(\sum_{i=1}^d \sqrt{\omega_{i}^{pop}}\right)^2,
\end{equation}
where equality in inequality $(*)$ holds when $\alpha_i^2/\omega^{pop}_i$ is invariant of $i$. By combining Eq.(\ref{eq:ortho_sqrt}) and Eq.(\ref{eq:value_sqrt}), the proof is completed.
\end{proof}

By Lemma \ref{lem: optimal_one_kl}, the optimal noise covariances $\bSig_s$ of all KL divergence terms $\mathbb{E}_{\bS_{\bJ^c}\sim\mathcal{D}} \operatorname{KL}\left({P}_{s|(s-1)}^{\bJ,\bS_{\bJ},\bV_{s}}\left\Vert Q_{s|(s-1)}^{\bS,\bV_{s}}\right.\right)$ are the same regardless of $\bV_s$, $\bJ$, and $\bS_{\bJ}$, which helps us to obtain Lemma \ref{lem: form after equivalence}.

\begin{proof}[Proof of Lemma \ref{lem: form after equivalence}]
To begin with, denote the optimal noise covariance of first $s$-step in terms of the generalization bound $\gen_{s}$ as $\bSig^{s}$ under Constraint \ref{constraint: trace}, i.e.,
\begin{equation*}
\bSig^s_{[s]} \overset{\triangle}{=}\arg \min_{\bSig_{[s]}}\left(\min_{P}\gen_s(P, \bSig_{[s]})\right), \text{ subject to: Constraint \ref{constraint: trace}},
\end{equation*}
we also define $Q^s$ accordingly as the posterior distribution with noise covariance $\bSig^s$.
 Also, recall that $P^s$ is the optimal prior in terms of the generalization bound $\gen_{s}$ under Constraint \ref{constraint: trace}, i.e., 
\begin{equation*}
    P^{s}=\arg \min_{P}\left(\min_{\bSig_{[s]}}\gen_s(P, \bSig_{[s]})\right), \text{ subject to: Constraint \ref{constraint: trace}}.
\end{equation*}
We would like to derive the form of $\bSig_s^s$ and $P_{s|(s-1)}^s$.

Specifically, we have 
\begin{align*}
    P^s_{s|(s-1)}=\operatornamewithlimits{\arg\min}_{P_{s|(s-1)}}\left( \gen_{s}(P, \bSig^s_{[s]})\right), \text{ subject to: $P_{t|(t-1)}=P^s_{t|(t-1)} (t<s)$},
\end{align*}
and 
\begin{align*}
    \bSig^s_{s}=\operatornamewithlimits{\arg\min}_{\bSig_{s}}\left( \gen_{s}(P^s, \bSig_{[s]})\right),\text{ subject to: Constraint \ref{constraint: trace} and $\bSig_{t}=\bSig^s_{t|(t-1)} (t<s)$}.
\end{align*}
That is, to obtain the desired $\bSig^s_s$ and $P^{s}_{s|(s-1)}$, we only need to solve 
\begin{equation*}
    \min_{\bSig_s,P^s_{s|(s-1)}} \gen_s(P, \bSig_{[s]}), \text{ subject to: $P_{t|(t-1)}=P^s_{t|(t-1)} (t<s)$ and $\bSig_{t}=\bSig^s_{t|(t-1)} (t<s)$}.
\end{equation*}
On the other hand, with $P_{t|(t-1)}=P^s_{t|(t-1)} (t<s)$ and $\bSig_{t}=\bSig^s_{t|(t-1)} (t<s)$ and under Constraint \ref{constraint: trace}, we have 
\begin{align*}
      &\min_{\bSig_s,P_{s|(s-1)}} \gen_s(P, \bSig_{[s]})
      \\
     & = \min_{\bSig_s,P_{s|(s-1)}}\mathbb{E}_{\bS_{\bJ},\bV_{[s]},\bJ}\sqrt{\frac{(a_2-a_1)^2}{2}\mathbb{E}_{\bS_{\bJ^c}}\operatorname{KL}\left(P^{\bJ,\bS_{\bJ},\bV_{[s]}}\left\Vert Q^{\bS,\bV_{[s]}}\right.\right)}
      \\
      &= \min_{\bSig_s,P_{s|(s-1)}}\mathbb{E}_{\bS_{\bJ},\bV_{[s]},\bJ}\sqrt{\frac{(a_2-a_1)^2}{2}\mathbb{E}_{\bS_{\bJ^c}}\sum_{t=1}^s \mathbb{E}_{P_{t-1}^{\bJ, \bS_{\bJ},\bV_{[s]}}} \operatorname{KL}\left(P_{t|(t-1)}^{\bJ,\bS_{\bJ},\bV_{s}}\left\Vert Q_{t|(t-1)}^{\bS,\bV_{s}}\right.\right)}
      \\
      &=\min_{\bSig_s,P_{s|(s-1)}}\mathbb{E}_{\bS_{\bJ},\bV_{[s]},\bJ}\left[\sqrt{\frac{(a_2-a_1)^2}{2}\mathbb{E}_{\bS_{\bJ^c}}\sum_{t=1}^s \mathbb{E}_{{P^s}_{t-1}^{\bJ, \bS_{\bJ},\bV_{[s]}}} \operatorname{KL}\left({P^s}_{t|(t-1)}^{\bJ,\bS_{\bJ},\bV_{s}}\left\Vert {Q^s}_{t|(t-1)}^{\bS,\bV_{s}}\right.\right)}\right.
      \\
      &\;\;\left.\overline{+\frac{(a_2-a_1)^2}{2}\mathbb{E}_{\bS_{\bJ^c}} \mathbb{E}_{P_{s-1}^{\bJ, \bS_{\bJ},\bV_{[s]}}} \operatorname{KL}\left(P_{s|(s-1)}^{\bJ,\bS_{\bJ},\bV_{s}}\left\Vert Q_{s|(s-1)}^{\bS,\bV_{s}}\right.\right)}\right]
      \\
      &\overset{(*)}{\ge} \mathbb{E}_{\bS_{\bJ},\bV_{[s]},\bJ}\left[\sqrt{\frac{(a_2-a_1)^2}{2}\mathbb{E}_{\bS_{\bJ^c}}\sum_{t=1}^s \mathbb{E}_{{P^s}_{t-1}^{\bJ, \bS_{\bJ},\bV_{[s]}}} \operatorname{KL}\left({P^s}_{t|(t-1)}^{\bJ,\bS_{\bJ},\bV_{s}}\left\Vert {Q^s}_{t|(t-1)}^{\bS,\bV_{s}}\right.\right)}\right.
      \\
      &\;\;\left.\overline{+\frac{(a_2-a_1)^2}{2}\mathbb{E}_{\bS_{\bJ^c}} \mathbb{E}_{P_{s-1}^{\bJ, \bS_{\bJ},\bV_{[s]}}} \min_{\bSig_s,P^{\bJ,\bS_{\bJ},\bV_s}_{s|(s-1)}}\operatorname{KL}\left(P_{s|(s-1)}^{\bJ,\bS_{\bJ},\bV_{s}}\left\Vert Q_{s|(s-1)}^{\bS,\bV_{s}}\right.\right)}\right].
\end{align*}
By Lemma \ref{lem: optimal_one_kl}, $\min_{\bSig_s,P^{\bJ,\bS_{\bJ},\bV_s}_{s|(s-1)}}\operatorname{KL}\left(P_{s|(s-1)}^{\bJ,\bS_{\bJ},\bV_{s}}\left\Vert Q_{s|(s-1)}^{\bS,\bV_{s}}\right.\right)$ is attained at $ \bSig_{s}(\bW)=\lambda_s(\bW)\left(\bSig^{pop}_{\bW}\right)^{\frac{1}{2}}$, which is not dependent on $\bJ,\bS_{\bJ},\bV_s$, and 
\begin{small}
\begin{equation*}
    P^{\bJ,\bS_{\bJ},\bV_s}_{s|(s-1)}\sim\mathcal{N}\left(\bW_{s-1}-\eta_s \left(\frac{\vert\bV_s \cap \bJ \vert}{\vert\bV_s \vert}\nabla \mathcal{R}_{\bS_{\bV_s \cap \bJ} }\left(\bW_{s-1}\right)+\frac{\vert\bV_s\cap \bJ^c\vert}{\vert\bV_s\vert }\nabla \mathcal{R}_{\mathcal{D} }\left(\bW_{s-1}\right)\right),\lambda_s(\bW)\left(\bSig^{pop}_{\bW}\right)^{\frac{1}{2}}\right).
\end{equation*}
\end{small}
Therefore, Inequality $(*)$ holds, and the proof is completed.

\end{proof}

By Lemma \ref{lem: form after equivalence}, we obtain the form of $P^*$, i.e., 
\begin{small}
\begin{equation*}
     {P^*}^{\bJ,\bS_{\bJ},\bV_s}_{s|(s-1)}\sim\mathcal{N}\left(\bW_{s-1}-\eta_s \left(\frac{\vert\bV_s \cap \bJ \vert}{\vert\bV_s \vert}\nabla \mathcal{R}_{\bS_{\bV_s \cap \bJ} }\left(\bW_{s-1}\right)+\frac{\vert\bV_s\cap \bJ^c\vert}{\vert\bV_s\vert }\nabla \mathcal{R}_{\mathcal{D} }\left(\bW_{s-1}\right)\right),\lambda_s(\bW)\left(\bSig^{pop}_{\bW}\right)^{\frac{1}{2}}\right),
\end{equation*}
\end{small}
which allows us to further derive Theorem \ref{thm: greedy}.

\begin{proof}[Proof of Theorem \ref{thm: greedy}]
By the definition of $\gen_T$, with prior the greedy prior and under Constraint \ref{constraint: trace}, we have 
\begin{align*}
&\min_{\bSig_{[T]}} \gen_{T}(P^*,\bSig_{[T]})
\\
&=\min_{\bSig_{[T]}}\mathbb{E}_{\bS_{\bJ},\bV_{[T]},\bJ}\sqrt{\frac{(a_2-a_1)^2}{2}\mathbb{E}_{\bS_{\bJ^c}}\operatorname{KL}\left({P^*}^{\bJ,\bS_{\bJ},\bV_{[T]}}\left\Vert Q^{\bS,\bV_{[T]}}\right.\right)}
\\
&= \min_{\bSig_{[T]}}\mathbb{E}_{\bS_{\bJ},\bV_{[T]},\bJ}\sqrt{\frac{(a_2-a_1)^2}{2}\mathbb{E}_{\bS_{\bJ^c}}\sum_{t=1}^T \mathbb{E}_{{P^*}_{t-1}^{\bJ, \bS_{\bJ},\bV_{[t-1]}}} \operatorname{KL}\left({P^*}_{t|(t-1)}^{\bJ,\bS_{\bJ},\bV_{t}}\left\Vert Q_{t|(t-1)}^{\bS,\bV_{t}}\right.\right)}
\\
&\overset{(\bullet)}{\ge} \mathbb{E}_{\bS_{\bJ},\bV_{[T]},\bJ}\sqrt{\frac{(a_2-a_1)^2}{2}\sum_{t=1}^T \mathbb{E}_{{P^*}_{t-1}^{\bJ, \bS_{\bJ},\bV_{[t-1]}}}\min_{\bSig_{t}}\mathbb{E}_{\bS_{\bJ^c}} \operatorname{KL}\left({P^*}_{t|(t-1)}^{\bJ,\bS_{\bJ},\bV_{t}}\left\Vert Q_{t|(t-1)}^{\bS,\bV_{t}}\right.\right)}
\\
&\overset{(*)}{=}\mathbb{E}_{\bS_{\bJ},\bV_{[T]},\bJ}\sqrt{\frac{(a_2-a_1)^2}{2}\sum_{t=1}^T \mathbb{E}_{{P^*}_{t-1}^{\bJ, \bS_{\bJ},\bV_{[t-1]}}}\min_{\bSig_{t},P_{t|(t-1)}^{\bJ,\bS_{\bJ},\bV_{t}}}\mathbb{E}_{\bS_{\bJ^c}} \operatorname{KL}\left(P_{t|(t-1)}^{\bJ,\bS_{\bJ},\bV_{t}}\left\Vert Q_{t|(t-1)}^{\bS,\bV_{t}}\right.\right)},
\end{align*}
where Eq. $(*)$ is due to that by the proof of Lemma \ref{lem: form after equivalence}, $ {P^*}^{\bJ,\bS_{\bJ},\bV_{s}}_{s|(s-1)}$ is the same as the prior minimizing $\mathbb{E}_{\bS_{\bJ^c}\sim \mathcal{D}} \operatorname{KL}\left({P}_{s|(s-1)}^{\bJ,\bS_{\bJ},\bV_{s}}\left\Vert Q_{s|(s-1)}^{\bS,\bV_{s}}\right.\right)$ for any given ${\bJ,\bS_{\bJ},\bV_{s}}$. Therefore, $\min_{\bSig_{t}}\mathbb{E}_{\bS_{\bJ^c}} \operatorname{KL}\left({P^*}_{t|(t-1)}^{\bJ,\bS_{\bJ},\bV_{t}}\left\Vert Q_{t|(t-1)}^{\bS,\bV_{t}}\right.\right)$ is attained when $\bSig_t(\bW)=\lambda_s(\bW)(\bSig^{pop}_{\bW})^{\frac{1}{2}}$, which is independent of ${\bJ,\bS_{\bJ},\bV_{s}}$, and Inequality ($\bullet$) holds. Therefore, $\min_{\bSig_{[T]}} \gen_{T}(P^*,\bSig_{[T]})$ is also attained at $\bSig_t(\bW)=\lambda_s(\bW)(\bSig^{pop}_{\bW})^{\frac{1}{2}}$.

The proof is completed.

\end{proof}

\section{Supplementary materials of Section \ref{sec: optimization_fixed_prior}}
\label{appen:5}
\subsection{Formal Description of the Prior in Section \ref{sec: optimization_fixed_prior}}
\label{sec: descri_algorithm}
In this section, we provide a detailed description of the update rule of the prior defined by Eq.(\ref{eq: update_prior}).

\begin{algorithm}[H]
\DontPrintSemicolon
  \label{alg: prior}
  \KwInput{Sample set $\bS$ with size $N$, initialization distribution $\mathcal{W}_0$, total step $T$, learning rate $(\eta_t)_{t=1}^T$}
  \KwOutput{$\bW_{[T]}$,  $\bJ$}
 Initialize $\bW_0$ according to $\mathcal{W}_0$; initialize $\bJ$ by uniformly sampling $N-1$ elements from $[N]$ without replacement; set $t=0$\;
 \While{$t<T$}{
  Uniformly sample index set $\bV_t\subset [N]$ such that $\vert \bV_t \vert=b_t$ without replacement and independent of $\bJ$\; 
  \eIf{$\bV_t\subset \bJ$}{
  $\bW_{t}=\bW_{t-1}-\eta_t \nabla \mathcal{R}_{\bS_{\bV_t}}(\bW_{t-1})+\mathcal{N}\left(\boldsymbol{0},\sigma_t \mathbb{I}_d\right)$\;
   }{
  $\bW_{t}=\bW_{t-1}-\eta_t \frac{b_t-1}{b_t}\nabla \mathcal{R}_{\bS_{\bV_t\cap\bJ}}(\bW_{t-1})-\eta_t \frac{1}{b_t}\nabla \mathcal{R}_{\bS_{\bJ}}(\bW_{t-1})+\mathcal{N}\left(\boldsymbol{0},\sigma_t \mathbb{I}_d\right)$\;
  }
  $t=t+1$\;
  }
\caption{Iteration of Prior}
\end{algorithm}

\subsection{Calculation of the Generalization Bound}
\label{appen: single_step_kl}
To obtain the optimal noise covariance of $\textbf{(P2)}$, we first derive the explicit form of the generalization bound $\widetilde{\gen}_T$ with the prior given by Eq. (\ref{eq: update_prior}) as the following lemma:
\begin{lemma}[Calculate $\widetilde{\gen}_T$]
\label{lem: single_step_form_kl_general}
Let Assumption \ref{assum: j_invariant} hold. Let the prior $P$ is given by the update rule Eq. (\ref{eq: update_prior}). Then, the generalization bound $\widetilde{\gen}_{T}$ can be represented as 
\begin{equation*}
    \widetilde{\gen}_{T}=\mathbb{E}_{\bS_{\bJ}}\sqrt{\frac{(a_2-a_1)^2}{2}\sum_{t=1}^T\mathbb{E}_{\bS_{\bJ^c},\bV_{[t-1]}}\mathbb{E}_{P_{s-1}^{\bJ, \bS_{\bJ},\bV_{[s-1]}}}A_t(\bS,\bW_{t-1})},
\end{equation*}
where $A_t(\bS,\bW)$ is given as
\begin{small}
\begin{align*}
   A_t(\bS,\bW) \overset{\triangle}{=}&\frac{1}{2}\left(\sigma_t(\bW)\operatorname{tr}\left(  \bSig_t (\bS,\bW)^{-1}\right)+\ln \left(\operatorname{det} \bSig_t (\bS,\bW)\right)-d \right)
    \\
    &+\frac{\eta_t^2}{2Nb_t}\left(\frac{N}{N-1}\right)^2\operatorname{tr}\left(\bSig_t(\bS)^{-1}\bSig^{sd}_{\bS,\bW_{t-1}}\right)-\frac{1}{2}d\ln \sigma_t(\bW).
\end{align*}
\end{small}

\end{lemma}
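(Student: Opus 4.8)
The plan is to compute $\widetilde{\gen}_T$ directly by expanding the KL divergence with Lemma~\ref{lem: decomposition_kl} (in its Markov form) and then evaluating each single-step KL term between the two Gaussian transition kernels using Lemma~\ref{lem: kl_gau}. First I would write
\begin{equation*}
\operatorname{KL}\left(P^{\bJ,\bS_{\bJ},\bV_{[T]}}\left\Vert Q^{\bS,\bV_{[T]}}\right.\right)=\sum_{t=1}^T\mathbb{E}_{P_{t-1}^{\bJ,\bS_{\bJ},\bV_{[t-1]}}}\operatorname{KL}\left(P_{t|(t-1)}^{\bJ,\bS_{\bJ},\bV_t}\left\Vert Q_{t|(t-1)}^{\bS,\bV_t}\right.\right),
\end{equation*}
so that the bound becomes $\mathbb{E}_{\bS}\sqrt{\tfrac{(a_2-a_1)^2}{2}\sum_{t}\mathbb{E}_{\bV_{[T]},\bJ}\mathbb{E}_{P_{t-1}}\operatorname{KL}(\cdots)}$. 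Since the prior transition (Eq.~\eqref{eq: update_prior}) is $\mathcal{N}(\mu^P_t,\sigma_t(\bW_{t-1})\mathbb{I})$ and the posterior transition (Eq.~\eqref{eq:state-dependent SGLD}) is $\mathcal{N}(\mu^Q_t,\bSig_t(\bS,\bW_{t-1}))$, Lemma~\ref{lem: kl_gau} gives the single-step KL as
$\tfrac12\big(\operatorname{tr}(\bSig_t^{-1}\sigma_t\mathbb{I})+(\mu^Q_t-\mu^P_t)^\top\bSig_t^{-1}(\mu^Q_t-\mu^P_t)-d+\ln(\det\bSig_t/\sigma_t^d)\big)$, which already matches the shape of $A_t(\bS,\bW)$ except for the quadratic-in-means term.

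The second step is to take the expectation of the quadratic term $(\mu^Q_t-\mu^P_t)^\top\bSig_t^{-1}(\mu^Q_t-\mu^P_t)$ over $\bV_t$ and $\bJ$. From Eq.~\eqref{eq:difference_mean}, the mean gap equals $\eta_t\frac{|\bV_t\cap\bJ^c|}{|\bV_t|}\big(\nabla\mathcal{R}_{\bS_{\bV_t\cap\bJ^c}}(\bW_{t-1})-\nabla\mathcal{R}_{\bS_{\bJ}}(\bW_{t-1})\big)$; noting that $\bS_{\bJ}$ differs from $\bS$ by one point and using the identity $\nabla\mathcal{R}_{\bS_{\bJ}}\approx$ (average over $\bJ$) one reduces this, via Lemma~\ref{lem: sampling} applied with the roles $\boldsymbol{U}\leftarrow\bV_t$, $\bV\leftarrow\bJ$ and the finite set being the per-sample gradients, to $\frac{\eta_t^2}{Nb_t}\big(\frac{N}{N-1}\big)^2\operatorname{tr}\big(\bSig_t^{-1}\bSig^{sd}_{\bS,\bW_{t-1}}\big)$ after the factor $\tfrac12$. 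This is exactly the middle line of $A_t$. Collecting the trace term, the log-det term, the $-d$, and this quadratic contribution, and using Assumption~\ref{assum: j_invariant} to pull $\mathbb{E}_{\bJ}$ inside (so the prior measure $P_{t-1}^{\bJ,\bS_{\bJ},\bV_{[t-1]}}$ no longer depends on $\bJ$ and can be written as $P_{t-1}^{\bS_{\bJ},\bV_{[t-1]}}$), one obtains the claimed formula with the outer expectation $\mathbb{E}_{\bS_{\bJ}}$ and the inner $\mathbb{E}_{\bS_{\bJ^c},\bV_{[t-1]}}$.

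The main obstacle I anticipate is the bookkeeping in the expectation of the quadratic-in-means term: one must carefully separate $\nabla\mathcal{R}_{\bS_{\bV_t}}$ into its $\bJ$ and $\bJ^c$ parts, verify that the cross terms vanish (because conditional on $|\bV_t\cap\bJ^c|$ the sampling is exchangeable and $\mathbb{E}[\nabla\mathcal{R}_{\bS_{\bV_t\cap\bJ^c}}-\nabla\mathcal{R}_{\bS_{\bJ}}]$ has the right centering), and match the combinatorial prefactor $\frac{1}{Nb_t}\big(\frac{N}{N-1}\big)^2$ produced by Lemma~\ref{lem: sampling}; a subtlety is that $\bSig_t^{-1}$ is inside the quadratic form, so one works with $\operatorname{tr}(\bSig_t^{-1}\cdot\,(\text{outer product}))$ and only then takes the sampling expectation of the outer product. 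The remaining steps — invoking Assumption~\ref{assum: j_invariant}, and the fact that $c_t(\bS,\bW)\le d\sigma_t$ guarantees the log-det terms are well-behaved — are routine. I would also remark that when $\bV_t\subseteq\bJ$ the mean gap is zero, consistent with $A_t$ having no extra term in that case, so the case split in Algorithm~\ref{alg: prior} is automatically absorbed.
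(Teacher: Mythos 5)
Your proposal follows essentially the same route as the paper's proof: decompose the KL term step-wise via Lemma~\ref{lem: decomposition_kl}, use Assumption~\ref{assum: j_invariant} to commute the expectation over $(\bV_t,\bJ)$ with the expectation under the prior marginal, evaluate each single-step KL with the Gaussian formula of Lemma~\ref{lem: kl_gau}, and reduce the expected quadratic-in-means term to $\frac{\eta_t^2}{Nb_t}\bigl(\frac{N}{N-1}\bigr)^2\operatorname{tr}\bigl(\bSig_t^{-1}\bSig^{sd}_{\bS,\bW_{t-1}}\bigr)$ via the trace trick and the two-step sampling identity of Lemma~\ref{lem: sampling} with $\boldsymbol{U}\leftarrow\bV_t$, $\bV\leftarrow\bJ$. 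This matches the paper's argument, including the observation that the mean gap vanishes when $\bV_t\subseteq\bJ$, so the proof is correct as outlined.
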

\begin{proof}
By the definition of $\widetilde{\gen}_{T}$, we have 
\begin{equation*}
    \widetilde{\gen}_T = \mathbb{E}_{\bS}\sqrt{\frac{(a_2-a_1)^2}{2}\mathbb{E}_{\bV_{[T]},\bJ}\operatorname{KL}\left(P^{\bJ,\bS_{\bJ},\bV_{[T]}}\left\Vert Q^{\bS,\bV_{[T]}}\right.\right)},
\end{equation*}
which by the decomposition of KL divergence (Lemma \ref{lem: decomposition_kl}) further leads to 
\begin{align*}
     \widetilde{\gen}_T
     =&\mathbb{E}_{\bS}\sqrt{\frac{(a_2-a_1)^2}{2}\mathbb{E}_{\bV_{[T]},\bJ}\sum_{s=1}^T \mathbb{E}_{P_{s-1}^{\bJ, \bS_{\bJ},\bV_{[T]}}} \operatorname{KL}\left(P_{s|(s-1)}^{\bJ,\bS_{\bJ},\bV_{[T]}}\left\Vert Q_{s|(s-1)}^{\bS,\bV_{[T]}}\right.\right)}
     \\
     =&\mathbb{E}_{\bS}\sqrt{\frac{(a_2-a_1)^2}{2}\sum_{s=1}^T\mathbb{E}_{\bV_{[s]},\bJ} \mathbb{E}_{P_{s-1}^{\bJ, \bS_{\bJ},\bV_{[s-1]}}} \operatorname{KL}\left(P_{s|(s-1)}^{\bJ,\bS_{\bJ},\bV_{s}}\left\Vert Q_{s|(s-1)}^{\bS,\bV_{s}}\right.\right)}
     \\
     \overset{(*)}{=}&\mathbb{E}_{\bS}\sqrt{\frac{(a_2-a_1)^2}{2}\sum_{s=1}^T\mathbb{E}_{\bV_{[s-1]}} \mathbb{E}_{P_{s-1}^{\bJ, \bS_{\bJ},\bV_{[s-1]}}}\mathbb{E}_{\bV_{s},\bJ} \operatorname{KL}\left(P_{s|(s-1)}^{\bJ,\bS_{\bJ},\bV_{s}}\left\Vert Q_{s|(s-1)}^{\bS,\bV_{s}}\right.\right)}
     \\
     \overset{(**)}{=}&\mathbb{E}_{\bS}\sqrt{\frac{(a_2-a_1)^2}{2}\sum_{s=1}^T\mathbb{E}_{\bV_{[s-1]},\bJ} \mathbb{E}_{P_{s-1}^{\bJ, \bS_{\bJ},\bV_{[s-1]}}}\mathbb{E}_{\bV_{s},\bJ} \operatorname{KL}\left(P_{s|(s-1)}^{\bJ,\bS_{\bJ},\bV_{s}}\left\Vert Q_{s|(s-1)}^{\bS,\bV_{s}}\right.\right)},
\end{align*}
where in Eq. $(*)$ we exchange the order between $\mathbb{E}_{P_{s-1}^{\bJ, \bS_{\bJ},\bV_{[s-1]}}}$ and $\mathbb{E}_{\bV_{s},\bJ}$ due to Assumption \ref{assum: j_invariant}, and Eq. $(**)$ is due to that $\mathbb{E}_{P_{s-1}^{\bJ, \bS_{\bJ},\bV_{[s-1]}}}\mathbb{E}_{\bV_{s},\bJ} \operatorname{KL}\left(P_{s|(s-1)}^{\bJ,\bS_{\bJ},\bV_{s}}\left\Vert Q_{s|(s-1)}^{\bS,\bV_{s}}\right.\right)$ is independent of $\bJ$ by Assumption \ref{assum: j_invariant}.

Therefore, we only need to prove $\mathbb{E}_{\bV_{t},\bJ} \operatorname{KL}\left(P_{t|(t-1)}^{\bJ,\bS_{\bJ},\bV_{t}}\left\Vert Q_{t|(t-1)}^{\bS,\bV_{t}}\right.\right)=A(t)$, which can be obtained by 
 
\begin{align*}
    &\mathbb{E}_{\bV_{t},\bJ} \operatorname{KL}\left(P_{t|(t-1)}^{\bJ,\bS_{\bJ},\bV_{t}}\left\Vert Q_{t|(t-1)}^{\bS,\bV_{t}}\right.\right)
    \\
    &\overset{(\bullet)}{=}\frac{1}{2} \mathbb{E}_{\bV_{t},\bJ} \bigg(\left(\mu^{\bS,\bV_t,\bJ,\bW_{t-1}}\right)^\top\bSig_t(\bS,\bW_{t-1})^{-1}\mu^{\bS,\bV_t,\bJ,\bW_{t-1}}+\ln\frac{\det \bSig_t(\bS,\bW_{t-1})}{\sigma_t(\bW_{t-1})^d}
    \\
    &\;\;+\tr\left(\sigma_t(\bW_{t-1})\bSig_t(\bS,\bW_{t-1})^{-1}\right)\bigg)-\frac{d}{2}
    \\
   &= \frac{1}{2}\mathbb{E}_{\bV_{t},\bJ} \bigg(\tr\left(\bSig_t(\bS,\bW_{t-1})^{-1}\mu^{\bS,\bV_t,\bJ,\bW_{t-1}}\left(\mu^{\bS,\bV_t,\bJ,\bW_{t-1}}\right)^\top\right)+\ln\frac{\det \bSig_t(\bS,\bW_{t-1})}{\sigma_t(\bW_{t-1})^d}
    \\
    &\;\;+\tr\left(\sigma_t(\bW_{t-1})\bSig_t(\bS,\bW_{t-1})^{-1}\right)\bigg)-\frac{d}{2}
    \\
    &= \frac{1}{2} \tr\left(\bSig_t(\bS,\bW_{t-1})^{-1}\mathbb{E}_{\bJ,\bV_t}\mu^{\bS,\bV_t,\bJ,\bW_{t-1}}\left(\mu^{\bS,\bV_t,\bJ,\bW_{t-1}}\right)^\top\right)+\frac{1}{2}\ln\frac{\det \bSig_t(\bS,\bW_{t-1})}{\sigma_t(\bW_{t-1})^d}
    \\
    &\;\;+\frac{1}{2}\tr\left(\sigma_t(\bW_{t-1})\bSig_t(\bS,\bW_{t-1})^{-1}\right)-\frac{d}{2}
    \\
    &\overset{(\circ)}{=}\frac{1}{2}\left(\sigma_t(\bW_{t-1})\operatorname{tr}\left(  \bSig_t (\bS,\bW_{t-1})^{-1}\right)+\ln \left(\operatorname{det} \bSig_t (\bS,\bW_{t-1})\right)-d \right)-\frac{1}{2}d\ln \sigma_t(\bW_{t-1})
    \\
    &\;\;+\frac{\eta_t^2}{2Nb_t}\left(\frac{N}{N-1}\right)^2\operatorname{tr}\left(\bSig_t(\bS,\bW_{t-1})^{-1}\bSig^{sd}_{\bS,\bW_{t-1}}\right),
\end{align*}
where Eq. $(\bullet)$ is due to Lemma \ref{lem: kl_gau}, where $\mu^{\bS,\bV_t,\bJ,\bW_{t-1}}$ is defined by Eq.(\ref{eq:difference_mean}), and Eq. $(\circ)$ is obtained by Lemma \ref{lem: sampling}.

The proof is completed.
\end{proof}

By Lemma \ref{lem: single_step_form_kl_general}, for any $t\in [T]$, $\bS$, and $\bW_{t-1}$, $\gen_{[T]}$ depend on $\bSig_t(\bW,\gen_{[T]})$ only through $A_t(\bS,\bW)$, and the solution of optimizing $A_t$ with respect to $\bSig_t$ under Constraint \ref{constraint: trace} has already been given by Lemma \ref{lem: matrix_opt}. 
We then complete the proof of Theorem \ref{thm: main_theorem} in the next section by combining Lemma \ref{lem: single_step_form_kl_general} and Lemma \ref{lem: matrix_opt} together.

\subsection{Proof of Theorem \ref{thm: main_theorem}}
\label{sec: proof_main_theorem}
In this section, we first restate Theorem \ref{thm: main_theorem} with explicit form of $\tilde{\omega}_i^{\bS,\bW}$ (omitted in the main text). We then provide the proof of the theorem by Lemma \ref{lem: single_step_form_kl_general} and Lemma \ref{lem: matrix_opt}.
\begin{theorem}
\label{thm:main_general}
Let prior and posterior be defined  as Eq.(\ref{eq: update_prior}) and Eq.(\ref{eq:state-dependent SGLD}), respectively. Then, with Assumption \ref{assum: j_invariant}, the solution of $\textbf{(P2)}$ is given by
\begin{equation*}
    \bSig^{*}_t(\bS,\bW)= \boldsymbol{Q}_{\bS,\bW}^{sd} \Diag (\tilde{\omega}^{\bS,\bW}_{t,1},\cdots,\tilde{\omega}^{\bS,\bW}_{t,d})\left(\boldsymbol{Q}_{\bS,\bW}^{sd}\right)^\top,
\end{equation*}
where 
\begin{equation*}
    \tilde{\omega}^{\bS,\bW}_{t,i}=\frac{\sqrt{1-4\lambda^* \left(\mathbb{E}_{\bJ}\sigma_t(\bS_{\bJ}, \bW)+\frac{\eta_t^2}{Nb_t}\left(\frac{N}{N-1}\right)^2\omega^{\bS,\bW}_i\right)}-1}{-2\lambda^*},
\end{equation*}
$\lambda^*$ is determined by $  \sum_{i=1}^d\tilde{\omega}^{\bS,\bW}_{t,i}=c_t\left(\bS,\bW\right)$, and
$\boldsymbol{Q}_{\bS,\bW}^{sd}$ is the orthogonal matrix that diagonalizes $\bSig^{sd}_{\bS,\bW}$ as
\begin{equation*}
    \bSig^{sd}_{\bS,\bW}=\boldsymbol{Q}_{\bS,\bW}^{sd}\Diag (\omega^{\bS,\bW}_{1},\cdots,\omega^{\bS,\bW}_{d})\left(\boldsymbol{Q}_{\bS,\bW}^{sd}\right)^\top. 
\end{equation*}
\end{theorem}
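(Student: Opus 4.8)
The plan is to combine the explicit evaluation of the generalization bound from Lemma~\ref{lem: single_step_form_kl_general} with the matrix optimization result from Lemma~\ref{lem: matrix_opt}. By Lemma~\ref{lem: single_step_form_kl_general}, the bound $\widetilde{\gen}_T$ is a square root of a sum over $t\in[T]$ of expectations of the scalar quantities $A_t(\bS,\bW_{t-1})$, and crucially the distribution $P_{s-1}^{\bJ,\bS_{\bJ},\bV_{[s-1]}}$ that weights $A_t$ does \emph{not} depend on $\bSig_t$ (the prior is fixed by Eq.~(\ref{eq: update_prior})). Hence minimizing $\widetilde{\gen}_T$ over $\bSig_{[T]}$ decouples: for each $t$, $\bS$, and $\bW$ we may minimize $A_t(\bS,\bW)$ over $\bSig_t(\bS,\bW)$ subject to Constraint~\ref{constraint: trace}, independently. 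So the first step is to isolate, from the formula for $A_t$, the part depending on $\bSig_t(\bS,\bW)$:
\begin{equation*}
\tfrac{1}{2}\Bigl(\operatorname{tr}\bigl(\bSig_t^{-1}\bigl(\mathbb{E}_{\bJ}\sigma_t(\bS_{\bJ},\bW)\mathbb{I}+\tfrac{\eta_t^2}{Nb_t}(\tfrac{N}{N-1})^2\bSig^{sd}_{\bS,\bW}\bigr)\bigr)+\ln\det\bSig_t\Bigr),
\end{equation*}
up to terms constant in $\bSig_t$, after folding the $\mathbb{E}_{\bS_{\bJ^c},\bJ}$ through (using Assumption~\ref{assum: j_invariant} to handle the $\bJ$-dependence of $\sigma_t$).

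The second step is a direct invocation of Lemma~\ref{lem: matrix_opt} with $\bG=\bSig_t(\bS,\bW)$, $c=c_t(\bS,\bW)$, and
\[
\boldsymbol{B}=\mathbb{E}_{\bJ}\sigma_t(\bS_{\bJ},\bW)\,\mathbb{I}+\tfrac{\eta_t^2}{Nb_t}\bigl(\tfrac{N}{N-1}\bigr)^2\bSig^{sd}_{\bS,\bW}.
\]
Since $\boldsymbol{B}$ is $\bSig^{sd}_{\bS,\bW}$ plus a scalar multiple of the identity, $\boldsymbol{B}$ and $\bSig^{sd}_{\bS,\bW}$ share eigenvectors, with $\boldsymbol{B}$'s eigenvalues being $\beta_i=\mathbb{E}_{\bJ}\sigma_t(\bS_{\bJ},\bW)+\tfrac{\eta_t^2}{Nb_t}(\tfrac{N}{N-1})^2\omega^{\bS,\bW}_i$. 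Lemma~\ref{lem: matrix_opt} then gives that the optimal $\bSig_t^*$ is $\boldsymbol{O}^\top\Diag(\alpha_1^*,\dots,\alpha_d^*)\boldsymbol{O}$ where $\boldsymbol{O}$ diagonalizes $\boldsymbol{B}$ — hence $\boldsymbol{O}$ may be taken equal to $\boldsymbol{Q}^{sd}_{\bS,\bW}$ — and $\alpha_i^*=\frac{\sqrt{1-4\lambda^*\beta_i}-1}{-2\lambda^*}$ with $\lambda^*\le0$ the unique root of $\sum_i\alpha_i^*=c_t(\bS,\bW)$. Substituting the $\beta_i$ yields exactly the stated $\tilde\omega^{\bS,\bW}_{t,i}$. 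To check the constraint $c\le\operatorname{tr}(\boldsymbol{B})$ required by Lemma~\ref{lem: matrix_opt}, I would use the assumption $c_t(\bS,\bW)\le d\sigma_t$ together with $\operatorname{tr}(\boldsymbol{B})\ge d\,\mathbb{E}_{\bJ}\sigma_t$ (since $\bSig^{sd}$ is PSD); a mild wording adjustment may be needed so that $\sigma_t$ in that assumption matches $\mathbb{E}_{\bJ}\sigma_t(\bS_{\bJ},\bW)$.

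The monotonicity claim (that $\tilde\omega^{\bS,\bW}_i\ge\tilde\omega^{\bS,\bW}_j\iff\omega^{\bS,\bW}_i\ge\omega^{\bS,\bW}_j$) follows because $\lambda^*\le0$, so $x\mapsto\frac{\sqrt{1-4\lambda^*x}-1}{-2\lambda^*}$ is strictly increasing in $x$, and $\beta_i$ is an increasing affine function of $\omega^{\bS,\bW}_i$; hence the ordering of the $\tilde\omega$'s matches that of the $\omega$'s. I would also note that when $\lambda^*=0$ the expression is interpreted via its limit $\alpha_i^*=\beta_i$, preserving monotonicity. The main obstacle I anticipate is not any single hard computation but rather carefully justifying the decoupling: I must argue that the expectations $\mathbb{E}_{\bS_{\bJ^c},\bV_{[t-1]}}\mathbb{E}_{P_{t-1}^{\bJ,\bS_{\bJ},\bV_{[t-1]}}}$ sitting in front of $A_t$ are genuinely independent of the choice of $\bSig_{t},\dots,\bSig_T$ (true because the prior's trajectory is fixed and $A_t$ for the \emph{prior}-weighted average only reads $\bW_{t-1}$ from the prior chain), and that minimizing a square root of a nonnegative sum is achieved by minimizing each summand — which holds precisely because each $\bSig_t$ enters only one summand. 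Once this structural observation is pinned down, the rest is bookkeeping plus the two cited lemmas.
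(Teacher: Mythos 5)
Your proposal is correct and follows essentially the same route as the paper: Lemma~\ref{lem: single_step_form_kl_general} reduces the problem to minimizing each $A_t(\bS,\bW)$ separately (since each $\bSig_t$ enters only one summand and the prior-weighted expectations are fixed), and then Lemma~\ref{lem: matrix_opt} is applied with $\boldsymbol{B}=\sigma_t\mathbb{I}+\tfrac{\eta_t^2}{Nb_t}(\tfrac{N}{N-1})^2\bSig^{sd}_{\bS,\bW}$. If anything you are more careful than the paper, which silently skips verifying the hypothesis $c\le\operatorname{tr}(\boldsymbol{B})$ of Lemma~\ref{lem: matrix_opt} and the monotonicity of the $\tilde\omega$'s in the $\omega$'s; both of your added justifications are sound.
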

\begin{proof}[Proof of Theorem \ref{thm: main_theorem}]
By  Lemma \ref{lem: single_step_form_kl_general}, $\gen_T$ depends on $\bSig_t(\bS,\bW)$ only through $A_t(\bS,\bW)$, and we have
\begin{align*}
    &\bSig^{*}_s(\bS,\bW)
    \\
    &=\arg\min_{\text{Constraint } \ref{constraint: trace}}A_s(\bS,\bW)
    \\
    &=\arg\min_{\tr(\bSig)=c_s(\bS,\bW)}\frac{1}{2}\bigg(\operatorname{tr}\left(  \bSig^{-1}\left( \sigma_s(\bS_{\bJ}, \bW_{s-1})\mathbb{I}+\frac{\eta_s^2}{Nb_s}\left(\frac{N}{N-1}\right)^2\bSig^{sd}_{\bS,\bW_{s-1}}\right)\right)
    \\
    &\;\;-d  \ln\sigma_s(\bS_{\bJ}, \bW_{s-1})-d +\ln \left(\operatorname{det} \bSig\right)\bigg)
    \\
    &=\arg\min_{\tr(\bSig)=c_s(\bS,\bW)}\frac{1}{2}\bigg(\operatorname{tr}\left(  \bSig^{-1}\left( \sigma_s(\bS_{\bJ}, \bW_{s-1})\mathbb{I}+\frac{\eta_s^2}{Nb_s}\left(\frac{N}{N-1}\right)^2\bSig^{sd}_{\bS,\bW_{s-1}}\right)\right) 
    \\
    &\;\;+\ln \left(\operatorname{det} \bSig\right)\bigg).
\end{align*}
Applying Lemma \ref{lem: matrix_opt} completes the proof.

\end{proof}

\subsection{Smaller Condition Number}
 \label{appen: condition_number}
In this section, we demonstrate why the optimal noise of Theorem \ref{thm: main_theorem} has smaller condition number than $\bSig^{sd}$ as the following corollary. 

\begin{corollary}
 The optimal noise covariance $\bSig^*$ given by Theorem \ref{thm: main_theorem} has smaller condition number than $\bSig^{sd}$.
\end{corollary}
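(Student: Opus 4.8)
The plan is to compute the condition number of $\bSig^*_t$ directly from the eigenvalue formula in Theorem~\ref{thm:main_general} and to bound it term by term by $\omega^{\bS,\bW}_1/\omega^{\bS,\bW}_d$, the condition number of $\bSig^{sd}_{\bS,\bW}$. Write $\gamma_t\overset{\triangle}{=}\frac{\eta_t^2}{Nb_t}\left(\frac{N}{N-1}\right)^2>0$ and $\bar\sigma_t\overset{\triangle}{=}\mathbb{E}_{\bJ}\sigma_t(\bS_{\bJ},\bW)\ge 0$, and set $\beta_i\overset{\triangle}{=}\bar\sigma_t+\gamma_t\,\omega^{\bS,\bW}_i$, so that the $\beta_i$ are exactly the eigenvalues of the matrix $\boldsymbol{B}$ fed into Lemma~\ref{lem: matrix_opt}. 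By Theorem~\ref{thm:main_general} the eigenvalues of $\bSig^*_t$ are $\tilde{\omega}^{\bS,\bW}_{t,i}=g(\beta_i)$, where $g(\beta)=\frac{\sqrt{1-4\lambda^*\beta}-1}{-2\lambda^*}=\frac{2\beta}{1+\sqrt{1-4\lambda^*\beta}}$ and $\lambda^*\le 0$. Since $\gamma_t>0$, the $\beta_i$ are ordered the same way as the $\omega^{\bS,\bW}_i$, and since $g$ is increasing (see below) the "Moreover" clause of Theorem~\ref{thm: main_theorem} is recovered, so $\mathrm{cond}(\bSig^*_t)=g(\beta_1)/g(\beta_d)=\tilde{\omega}^{\bS,\bW}_{t,1}/\tilde{\omega}^{\bS,\bW}_{t,d}$. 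We may assume $\omega^{\bS,\bW}_1>\omega^{\bS,\bW}_d$, since otherwise both condition numbers equal $1$.

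The key analytic fact I would establish is that $g\colon[0,\infty)\to[0,\infty)$ is concave, strictly increasing, and satisfies $g(0)=0$. Setting $c=-4\lambda^*\ge 0$, we have $g(\beta)=\frac{2}{c}\bigl(\sqrt{1+c\beta}-1\bigr)$ for $c>0$ and $g(\beta)=\beta$ for $c=0$; in either case $g'(\beta)=(1+c\beta)^{-1/2}>0$ is nonincreasing, which gives concavity, and $g(0)=0$ is immediate. A concave function vanishing at the origin has nonincreasing secant slope, i.e.\ $\beta\mapsto g(\beta)/\beta$ is nonincreasing on $(0,\infty)$ — equivalently $g'(\beta)\beta-g(\beta)\le 0$, by applying the mean value theorem to $g$ on $[0,\beta]$ together with monotonicity of $g'$. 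Hence for $\beta_1\ge\beta_d>0$ we get $g(\beta_1)/g(\beta_d)\le\beta_1/\beta_d$.

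It then remains to compare $\beta_1/\beta_d$ with $\omega^{\bS,\bW}_1/\omega^{\bS,\bW}_d$. Since $\bar\sigma_t\ge 0$ and $\gamma_t\omega^{\bS,\bW}_1\ge\gamma_t\omega^{\bS,\bW}_d>0$, the elementary inequality $\frac{\bar\sigma_t+x}{\bar\sigma_t+y}\le\frac{x}{y}$ for $x\ge y>0$ (clearing denominators reduces it to $\bar\sigma_t y\le\bar\sigma_t x$) gives
\[
\frac{\beta_1}{\beta_d}=\frac{\bar\sigma_t+\gamma_t\omega^{\bS,\bW}_1}{\bar\sigma_t+\gamma_t\omega^{\bS,\bW}_d}\le\frac{\gamma_t\omega^{\bS,\bW}_1}{\gamma_t\omega^{\bS,\bW}_d}=\frac{\omega^{\bS,\bW}_1}{\omega^{\bS,\bW}_d}.
\]
Chaining the two estimates yields $\mathrm{cond}(\bSig^*_t)=g(\beta_1)/g(\beta_d)\le\beta_1/\beta_d\le\omega^{\bS,\bW}_1/\omega^{\bS,\bW}_d=\mathrm{cond}(\bSig^{sd}_{\bS,\bW})$, which is the claim.

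I do not expect a serious obstacle: the argument reduces to one concavity estimate and one algebraic estimate. The only points that will need a little care are (i) confirming that the extreme eigenvalues of $\bSig^*_t$ really are $g$ applied to the extreme eigenvalues of $\bSig^{sd}_{\bS,\bW}$, which is exactly the monotonicity already asserted in the "Moreover" part of Theorem~\ref{thm: main_theorem} and can be quoted; and (ii) the degenerate cases $\omega^{\bS,\bW}_1=\omega^{\bS,\bW}_d$ (trivial) and $\lambda^*=0$, for which $g$ is the identity and $\mathrm{cond}(\bSig^*_t)=\mathrm{cond}(\bar\sigma_t\mathbb{I}+\gamma_t\bSig^{sd}_{\bS,\bW})$, so the same chain of inequalities applies verbatim.
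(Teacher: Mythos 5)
Your proof is correct and follows exactly the same two-step route as the paper: first show that adding the isotropic prior term $\bar\sigma_t\mathbb{I}$ shrinks the condition number (your elementary inequality $\frac{\bar\sigma_t+x}{\bar\sigma_t+y}\le\frac{x}{y}$), then show that the eigenvalue map $\beta\mapsto g(\beta)$ from Lemma~\ref{lem: matrix_opt} shrinks it further. The only difference is that the paper merely asserts the second step, whereas you actually justify it via the concavity of $g$ with $g(0)=0$; your write-up is the more complete of the two.
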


\begin{proof}
We prove this claim following two steps.

Firstly, the noise covariance of the prior is isotropic, has condition number $1$, and push the condition number of  $\sigma_t\mathbb{I}+\frac{\eta_t^2}{Nb_t}\left(\frac{N}{N-1}\right)^2\Sigma^{sd}_{\bS,\bW}$ smaller than $\Sigma^{sd}_{\bS,\bW}$.

Secondly, the optimal solution $G$ of Lemma \ref{lem: matrix_opt} always has a smaller condition number than $B$, which implies that $\Sigma_t^*(\bS,\bW)$ has smaller condition number than $B=\sigma_t\mathbb{I}+\frac{\eta_t^2}{Nb_t}\left(\frac{N}{N-1}\right)^2\Sigma^{sd}_{\bS,\bW}$. Hence the condition number of $\Sigma_t^*(\bS,\bW)$ is smaller than $\Sigma^{sd}_{\bS,\bW}$.
\end{proof}

\section{Experiments}
\label{appen: experiment}
In this section, we introduce the settings of the experiments in Fig. (\ref{fig:compare-with-sgd}) Fig.(\ref{fig:train_all}), Fig.(\ref{fig: test}), and Fig.(\ref{fig:ratio}). We further include an additional experiment comparing the generalization error between SGLD with square rooted empirical gradient covariance (SREC-SGLD) (the optimal noise covariance in Theorem \ref{thm: greedy}) and SGLD with empirical gradient covariance (EC-SGLD) subject to Constraint \ref{constraint: trace}. 

\subsection{Experiment settings}
\label{appen:setting}
For both Fig. (\ref{fig:compare-with-sgd}), Fig.(\ref{fig:train_all}), Fig.~(\ref{fig: test}), and Fig.~(\ref{fig:ratio}), we adopt the same setting as the Fashion-MNIST experiment of \cite[Section D.3]{zhu2018anisotropic} despite enlarging the training set. Specifically, we use the $4$-layer convolutional neural network as our model to conduct multi-class classification on Fashion-MNIST \cite{xiao2017fashion}. Concretely, this convolutional neural network can be expressed in order as: convolutional layer with $10$ channel and filter size $5\times5$, max-pool layer with kernel size $2\time 2$ and stride $2$,
convolutional layer with $10$ channel and filter size $5\times 5$, max-pool with kernel size $2\time 2$, two fully connected layer with width $50$. Our training set consists of 10,000 examples uniformly sampled without replacement from the Fashion-MNIST dataset. Our training set is larger than that in \cite{zhu2018anisotropic} (which only contains 1200 samples), but is still one sixth of the whole Fashion-MNIST dataset due to the computational burden of gradient descent (without mini-batch) in the SGLD. The learning rates of all SGLD are set to $0.07$, which is exactly the same as \cite{zhu2018anisotropic}. We also set the learning rate of SGD in Fig. (\ref{fig:compare-with-sgd}) to $0.07$ for fair comparison with SGLD.

\textbf{Empirical gradient covariance:}
We use top $100$ eigenvalues to approximate the empirical gradient covariance matrix. Specifically, we decompose the matrix $\bSig^{sd}_{\bS,\bW}$ into $(Q_{\bS,\bW})^\top (\omega^{\bS,\bW}_{[d]}) Q_{\bS,\bW}$, and use $(Q_{\bS,\bW})^\top (\omega^{\bS,\bW}_{[100]},\boldsymbol{0}_{d-100}) Q_{\bS,\bW}$ to approximate $\bSig^{sd}_{\bS,\bW}$.

\textbf{Noise Scale:} In Fig. (\ref{fig:compare-with-sgd}) and Fig.(\ref{fig:train_all}), the traces of all SGLDs are set to be $\tr(\bSig^{sd}_{\bS,\bW})$; in Fig.
(\ref{fig: test}), the traces are set to be $ \tr((\bSig_{\bS,\bW}^{sd})^{1/2})$ and $5 \tr((\bSig_{\bS,\bW}^{sd})^{1/2})$, respectively in (a) and (b); in Fig. (\ref{fig:ratio}), the traces are set to be $ \tr((\bSig_{\bS,\bW}^{sd})^{1/2})$.

\textbf{Noise frequency:}
Similar to \cite{zhu2018anisotropic}, we re-estimate the noise structure
of all SGLDs every 10 epochs to ease
the computational burden.

\subsection{Comparison between EC-SGLD and SREC-SGLD}
We further conduct an experiment to compare the generalization performance between Iso-SGLD, EC-SGLD and SREC-SGLD, with the traces of the covariance are all set to be $5 \tr((\bSig_{\bS,\bW}^{sd})^{1/2})$, and all other settings consistent with Appendix \ref{appen:setting}. The generalization error along the iteration of SREC-SGLD, Iso-SGLD, and EC-SGLD is plotted as Fig. \ref{fig:nosqrt}, where one can easily observe the generalization error of SREC-SGLD is the smallest, which supports Theorem \ref{thm: greedy}.
\begin{figure}[ht]
    \centering
    \includegraphics[scale=0.3]{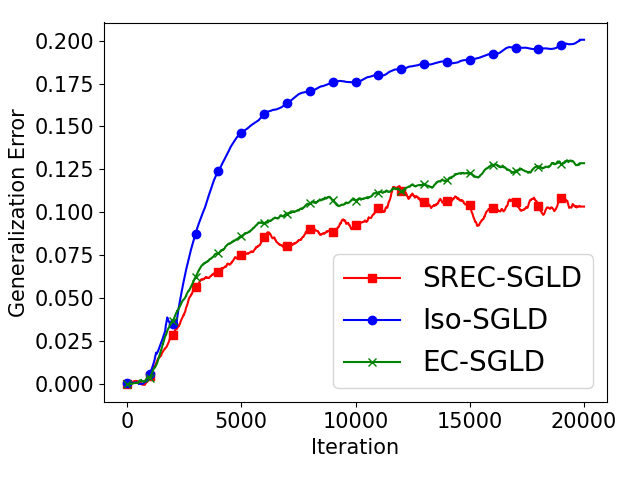}
    \caption{Comparison of generalization error for SGLDs with different noise structures. Traces of the covariances are all set to be $5 \tr((\bSig_{\bS,\bW}^{sd})^{1/2})$.}
    \label{fig:nosqrt}
\end{figure}

\end{document}